\documentclass[10pt]{article} % For LaTeX2e
\usepackage[accepted]{tmlr}
\usepackage{amsmath,amsfonts,bm}

\def\eqref#1{equation~\ref{#1}}
\def\1{\bm{1}}

\DeclareMathAlphabet{\mathsfit}{\encodingdefault}{\sfdefault}{m}{sl}
\SetMathAlphabet{\mathsfit}{bold}{\encodingdefault}{\sfdefault}{bx}{n}

\usepackage{comment}
\usepackage[misc]{ifsym}
\usepackage{amsmath} % For mathematical symbols and equations
\usepackage{graphicx} % For inserting graphics and images
\usepackage{algorithm} % For typesetting algorithms
\usepackage{algorithmic} % For typesetting algorithms in a specific style
\usepackage{multirow} % For multirow cells in tables
\usepackage{listings} % For typesetting source code
\usepackage{booktabs}
\usepackage[dvipsnames]{xcolor}
\usepackage[normalem]{ulem}
\usepackage{adjustbox}
\usepackage{diagbox}
\usepackage{bbm}
\usepackage{csquotes}
\usepackage{colortbl}
\usepackage{hyphenat}
\usepackage{tabularray}
\usepackage{float}
\usepackage{array}
\usepackage{amssymb}
\usepackage{epsfig}
\usepackage{makecell}
\usepackage{tabularx,booktabs,ragged2e}
\usepackage{setspace}
\usepackage{subcaption}

\usepackage{amsthm}
\usepackage{hyperref}
\usepackage[capitalize,nameinlink]{cleveref}

\usepackage{placeins}
\usepackage[skip=4pt]{caption}

\newtheorem{proposition}{Proposition}

\title{Individual Fairness via Local Predictive Consistency}

\author{%
\makebox[\linewidth][l]{%
  {\name Zhouting Zhao\textsuperscript{*}}%
  \hfill{\small\it zhaozh@tcd.ie}%
}\\
{\addr School of Computer Science and Statistics}\\
{\addr Trinity College Dublin}
\AND
\makebox[\linewidth][l]{%
  {\name Tin Lok James Ng}%
  \hfill{\small\it ngja@tcd.ie}%
}\\
{\addr School of Computer Science and Statistics}\\
{\addr Trinity College Dublin}
}

\def\month{10}
\def\year{2026}
\def\openreview{\url{https://openreview.net/forum?id=Yd1cLycbmR}}

\begin{document}

\maketitle

\begin{abstract}
Individual fairness in machine learning requires that similar individuals receive similar predictions. Most existing definitions are pairwise in nature, imposing a Lipschitz constraint that must hold either globally across all pairs of data instances or locally within specified neighborhoods. In practice, however, such pairwise constraints may be overly restrictive and may not directly identify which individuals receive locally inconsistent predictions. We study local predictive consistency as a node-level relaxation of pairwise individual fairness, where each prediction is compared with a similarity-weighted average of predictions in its local neighborhood. This yields an interpretable node-level fairness score and a corresponding regularized learning approach based on local residuals. Since graph-based individual fairness depends critically on the underlying similarity structure, we make the construction of the neighborhood graph explicit and discuss threshold selection through graph-coverage diagnostics. Our approach applies to binary classification, regression, and time-to-event prediction. We further compare this formulation with pairwise constraint-based approaches and analyze the resulting fairness--performance trade-offs.
\end{abstract}

\section{Introduction}
Fairness in machine learning (ML) aims to prevent predictive models from producing systematically unjust or discriminatory outcomes for certain individuals or groups~\citep{rabonato2025systematic}. As ML systems are increasingly deployed in high-stakes domains such as healthcare~\citep{anderson2025algorithmic}, finance~\citep{song2022individual}, and criminal justice~\citep{berk2021fairness}, fairness becomes especially salient because model outputs may directly shape access to resources, opportunities, or liberty. This perspective is widely covered in surveys and monographs on algorithmic fairness~\citep{pessach2022review,barocas2023fairness,varona2022discrimination}.

Fairness is commonly conceptualized through two complementary perspectives: \emph{group fairness} and \emph{individual fairness}~\citep{pessach2022review}. Group fairness aims to equalize aggregate outcomes across sensitive groups, whereas individual fairness requires that \emph{similar individuals receive similar predictions}~\citep{dwork2012fairness}. In this work, we adopt the latter view and focus on fairness at the level of individual predictions. Existing definitions of individual fairness are typically pairwise in nature and may be enforced either globally or within local neighborhoods through a Lipschitz condition. Specifically, the distance between the predictions for two similar individuals must be bounded by the distance between their feature representations in the feature space, up to a Lipschitz constant. This definition of fairness can be formulated as a constraint or penalty and incorporated into learning algorithms, either by jointly training a predictive model with the fairness penalty or by post-processing a trained model to enforce individual fairness~\citep{li2023accurate,petersen2021post}.

There are two main limitations associated with existing pairwise-based definitions of individual fairness. First, requiring the Lipschitz condition to hold globally across all pairs, or locally across all neighboring pairs, can be overly restrictive in practice. Since the constraint is imposed on every relevant edge in the similarity graph, a small number of difficult local comparisons may force substantial changes to the learned predictor and may lead to unnecessary degradation in predictive performance. Second, pairwise fairness is primarily an edge-level notion. It identifies pairs of individuals whose predictions are inconsistent, but it does not directly provide a fairness diagnosis for a single individual. As a result, it can be difficult to determine which individuals are most exposed to locally anomalous predictions.

This work addresses these limitations by formulating individual fairness through \emph{local predictive consistency}. Instead of asking whether every neighboring pair satisfies a pairwise Lipschitz-style constraint, we ask whether each individual receives a prediction that is consistent with the predictions assigned to its local comparison set. More specifically, for each instance, we compare its prediction with a similarity-weighted neighborhood average of predictions among nearby instances. The resulting residual provides an instance-level fairness score: a large residual indicates that the individual receives a prediction that is locally anomalous relative to similarly situated individuals.

Conceptually, local predictive consistency should be understood as a node-level relaxation of pairwise individual fairness rather than as a replacement for it. Pairwise individual fairness treats local unfairness as a collection of edge violations between comparable individuals, whereas local predictive consistency summarizes the local comparison structure around each individual and assigns a diagnostic score to the node itself. The unit of fairness diagnosis therefore changes from an edge violation between two individuals to a node-level residual attached to a single individual. This shift is useful in applications where fairness assessment requires identifying which individuals are potentially treated inconsistently, rather than only identifying which pairs violate a constraint.

The main contributions of this work are summarized as follows:
\begin{enumerate}
    \item
    We formulate local predictive consistency as a node-level relaxation of pairwise individual fairness. The resulting score measures whether an individual prediction is locally anomalous relative to a similarity-weighted neighborhood, providing an interpretable per-instance fairness diagnosis.

    \item
    We clarify the relationship between node-level local predictive consistency and edge-level pairwise fairness. In particular, we show that local pairwise Lipschitz fairness controls the instance-level residual, positioning the formulation as a relaxation of pairwise fairness.

    \item
    We use the residual-based fairness score to define a regularization formulation that naturally corresponds to the row-random-walk Laplacian. This formulation can be integrated into standard learning objectives across binary classification, regression, and time-to-event prediction.

   % \item
   % We explicitly discuss graph construction and threshold selection for neighborhood-based individual fairness, and evaluate the sensitivity of the method to the neighborhood threshold through graph-coverage diagnostics.
\end{enumerate}

\section{Background and Related Work}
\label{section_background}

\subsection{Individual Fairness}
\label{subsec:if-metrics}

Individual fairness, introduced by \citet{dwork2012fairness}, requires that similar individuals receive similar predictions. Formally, let the data features be $\{\mathbf{x}_i\}_{i=1}^n \subset {\cal X}$, and let $f: {\cal X} \rightarrow {\cal Y}$ be a predictive map, with $d_x(\cdot,\cdot)$ and $d_y(\cdot,\cdot)$ denoting metrics on ${\cal X}$ and ${\cal Y}$, respectively. The canonical formulation of individual fairness imposes the Lipschitz condition
\begin{equation}
\label{eq:global_lipschitz}
d_y\bigl(f(\mathbf{x}_i), f(\mathbf{x}_j)\bigr) \le c \, d_x(\mathbf{x}_i, \mathbf{x}_j),
\qquad \forall i,j,
\end{equation}
for some constant $c>0$. To improve scalability and practical applicability, several works relax the global Lipschitz constraint by enforcing it only within local neighborhoods~\citep{petersen2021post,mahamadou2024individual}. For a neighborhood radius $\tau>0$, define the local edge set
\begin{equation}
\label{eq:local_edge_set}
\mathcal{E}_\tau = \{(i,j): d_x(\mathbf{x}_i,\mathbf{x}_j)\le \tau\}.    
\end{equation}
The corresponding local pairwise condition is
\begin{equation}
\label{eq:local_lipschitz}
d_y\bigl(f(\mathbf{x}_i),f(\mathbf{x}_j)\bigr)\le c\, d_x(\mathbf{x}_i,\mathbf{x}_j),
\qquad \forall (i,j)\in \mathcal{E}_\tau .
\end{equation}

Several works have proposed methods to enforce individual fairness in machine learning models by leveraging the pairwise fairness notion. These approaches can be broadly categorized into three types. Pre-processing methods modify the input data or its representations to reflect pairwise fairness constraints before training, ensuring that similar individuals are treated consistently in downstream models~\citep{lahoti2019ifair,lahoti2019operationalizing}. In-processing methods incorporate fairness directly into model training by adding a fairness penalty or constraint as a regularization term~\citep{yurochkin2019training,yurochkin2020sensei}. Post-processing methods adjust the predictions of an already trained model to satisfy fairness criteria~\citep{petersen2021post}. Together, these strategies provide multiple avenues to operationalize pairwise-based individual fairness at different stages of the machine learning pipeline.

A recurring challenge in individual fairness is that the principle becomes operational only after a meaningful comparison structure has been specified. The original formulation relies on a task-specific similarity metric to formalize the requirement that similar individuals should be treated similarly~\citep{dwork2012fairness}. Subsequent work has therefore studied how such metrics or representations can be obtained from human judgements, data, or learned representations~\citep{ilvento2019metric,mukherjee2020two,lahoti2019operationalizing}. Graph-based approaches make this comparison structure explicit by representing individuals as nodes and similarity relations as edges, for example through similarity graphs or graph Laplacian regularization~\citep{petersen2021post,mahamadou2024individual}. This dependence on the comparison structure motivates the graph-construction discussion in Section~\ref{subsec:proposed-if-score}.

\subsection{Local Predictive Consistency}
\label{subsec:if-metrics-knn}

Recent work has started to investigate instance-level diagnostics for individual fairness. For example, \citet{waller2025beyond} study local predictive consistency specifically in the binary-classification setting, where $f(\mathbf{x}) \in \{0,1\}$. Their consistency diagnostic measures, for each individual $\mathbf{x}_i$, the discrepancy between its prediction and the average prediction of its local neighborhood.

Let $\mathrm{nbr}(\mathbf{x}_i,k)$ denote the set of the $k$ nearest neighbors of $\mathbf{x}_i$ under the distance metric $d_x$. The corresponding local predictive inconsistency can be written as

\begin{equation}
\label{eq:consistency}
\mathrm{diff}(\mathbf{x}_i) =
\bigg|
f(\mathbf{x}_i) -
\frac{1}{k}
\sum_{\mathbf{x}_j \in \mathrm{nbr}(\mathbf{x}_i,k)}
f(\mathbf{x}_j)
\bigg|.
\end{equation}

Unlike the pairwise Lipschitz constraint in \cref{eq:local_lipschitz}, which formulates individual fairness through inequalities over pairs of instances, \cref{eq:consistency} provides a per-instance diagnostic of how far an individual prediction deviates from the average prediction of its local neighborhood. In \citet{waller2025beyond}, this quantity is used for post-hoc evaluation rather than to construct an in-processing regularization objective.

Our formulation is closely related to this local-consistency perspective but extends it in both scope and use. Rather than using a uniform average over a fixed $k$-nearest-neighbor set, we construct a similarity-weighted comparison graph, allowing nearby individuals to contribute to the local predictive reference according to their graph similarity. The formulation is defined on task-appropriate predictive outputs and is instantiated not only for binary classification but also for regression and time-to-event prediction. We further establish a theoretical connection between the resulting node-level notion of local consistency and pairwise individual fairness, and use the resulting residual to construct a differentiable in-processing regularizer. The full formulation is developed in \cref{section_method}.

\section{Methodology}
\label{section_method}
\subsection{Individual Fairness Based on Local Predictive Consistency}
\label{subsec:proposed-if-score}

We formulate local predictive consistency as a node-level relaxation of pairwise individual fairness, building on the diagnostic perspective discussed in Section~\ref{subsec:if-metrics-knn}. Rather than replacing pairwise individual fairness with a completely new definition, this formulation provides an instance-level diagnostic that summarizes how each prediction compares with its local neighborhood. In the definition given in \cref{eq:consistency}, the average prediction of an instance's neighbors is computed as the mean prediction of its $k$ nearest neighbors, which implicitly assigns equal influence to all neighbors. We generalize this idea by using a weighted average, where the contribution of each neighbor is determined by its similarity to the instance, so that closer neighbors exert greater influence on the local reference prediction.

We begin by presenting our formulation for the case where the output space $\mathcal{Y}=\mathbb{R}$ consists of real valued responses and is equipped with  the Euclidean distance $d_y(a,b)=|a-b|$. Extensions to other output spaces $\mathcal{Y}$ will be discussed later. 

Suppose we are given data instances $\{\mathbf{x}_i\}_{i=1}^n \subset \mathcal{X}$. 
For a predictive model $f_\theta:\mathcal{X}\rightarrow\mathbb{R}$ parameterized by $\theta \in \Theta$, we denote the vector of predictions on the dataset by
\[
\mathbf{f}_\theta :=
\bigl(f_\theta(\mathbf{x}_1),\ldots,f_\theta(\mathbf{x}_n)\bigr)^\top \in \mathbb{R}^n .
\]

Let $P\in\mathbb{R}^{n\times n}$ be a row-stochastic matrix satisfying $P_{ij}\ge 0$ and $\sum_{j=1}^n P_{ij}=1$ for all $i$. We interpret $P$ as a neighborhood averaging operator over the dataset. In particular, the $i$th row of $P$ encodes the influence of other instances on $\mathbf{x}_i$, typically determined by similarity in feature space. Applying $P$ to the prediction vector produces a local reference prediction for each instance,
\begin{equation}
\label{eq:local_reference}
(P\mathbf{f}_\theta)_i 
\;=\; \sum_{j=1}^n P_{ij}\, f_\theta(\mathbf{x}_j),
\end{equation}
which represents a weighted average of the predictions of the neighbors of $\mathbf{x}_i$.

We define the local residual at instance $i$ as the deviation between the prediction for $\mathbf{x}_i$ and the average prediction of its neighbors,
\begin{equation}
\label{eq:residual}
r_i(\theta;P)
\;:=\; f_\theta(\mathbf{x}_i) - (P\mathbf{f}_\theta)_i .
\end{equation}
The magnitude of this residual captures the degree to which the prediction at
$\mathbf{x}_i$ is inconsistent with its aggregated local neighborhood
reference. Accordingly, we define the instance-level individual fairness (IF) score as
\begin{equation}
\label{eq:instance_ind_fairness_score}
\mathrm{IF}_i(\theta;P) := |r_i(\theta;P)|.
\end{equation}
A large value of $\mathrm{IF}_i(\theta;P)$ indicates that the prediction for $\mathbf{x}_i$ deviates substantially from those of similar instances, suggesting a potential violation of local predictive consistency.

Although this residual is mathematically related to local smoothing, its role here is fairness-oriented. The neighborhood defines the comparison set required by the individual fairness principle, and the local reference prediction represents the prediction pattern among comparable individuals. The residual therefore measures departure from a local fairness standard rather than merely enforcing statistical smoothness.

It is straightforward to see that enforcing the local pairwise fairness constraint in \cref{eq:local_lipschitz} also provides control over the instance-level individual fairness score defined in \cref{eq:instance_ind_fairness_score}.

\begin{proposition}[Local pairwise fairness upper-bounds instance-level IF score]
\label{prop:pairwise-controls-if}
Let $P\in\mathbb{R}^{n\times n}$ be a row-stochastic matrix with $P_{ij}=0$ whenever
$(i,j)\notin \mathcal{E}_\tau$, where $\mathcal{E}_\tau$ is defined in 
\cref{eq:local_edge_set}. Suppose that the local pairwise fairness condition
\[
|f_\theta(\mathbf{x}_i)-f_\theta(\mathbf{x}_j)| \le c\, d_x(\mathbf{x}_i, \mathbf{x}_j),
\qquad \forall (i,j)\in \mathcal{E}_\tau,
\]
holds. Then, for every instance $i$,
\[
\mathrm{IF}_i(\theta;P) \le c\,\tau .
\]
\end{proposition}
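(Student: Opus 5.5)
The plan is to exploit row-stochasticity of $P$ to rewrite the residual as a convex combination of pairwise prediction differences. Since $\sum_{j=1}^n P_{ij}=1$, we have $f_\theta(\mathbf{x}_i)=\sum_{j} P_{ij} f_\theta(\mathbf{x}_i)$, and therefore
\[
r_i(\theta;P) \;=\; \sum_{j=1}^n P_{ij}\bigl(f_\theta(\mathbf{x}_i)-f_\theta(\mathbf{x}_j)\bigr).
\]
This identity is the key step: it expresses the node-level residual entirely in terms of edge-level differences, weighted by the entries of row $i$ of $P$.

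Next I will apply the triangle inequality, using $P_{ij}\ge 0$, to obtain
\[
\mathrm{IF}_i(\theta;P)\le \sum_{j=1}^n P_{ij}\,\bigl|f_\theta(\mathbf{x}_i)-f_\theta(\mathbf{x}_j)\bigr| .
\]
I will then split the sum over $j$ according to whether $(i,j)\in\mathcal{E}_\tau$. Terms with $(i,j)\notin\mathcal{E}_\tau$ vanish, because the support assumption gives $P_{ij}=0$. For $(i,j)\in\mathcal{E}_\tau$, the local Lipschitz hypothesis gives $|f_\theta(\mathbf{x}_i)-f_\theta(\mathbf{x}_j)|\le c\,d_x(\mathbf{x}_i,\mathbf{x}_j)$, and the definition of $\mathcal{E}_\tau$ in \cref{eq:local_edge_set} gives $d_x(\mathbf{x}_i,\mathbf{x}_j)\le\tau$. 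Each surviving term is therefore at most $P_{ij}\,c\tau$. Summing and using $\sum_{j:(i,j)\in\mathcal{E}_\tau} P_{ij}=\sum_j P_{ij}=1$ gives $\mathrm{IF}_i(\theta;P)\le c\tau$.

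There is no substantial obstacle here. The only point needing care is that row-stochasticity is used twice: once to rewrite $f_\theta(\mathbf{x}_i)$ as a weighted average, and once to sum the weights to one. It also matters that the diagonal term $j=i$, if present, contributes zero, so whether $(i,i)\in\mathcal{E}_\tau$ is irrelevant. As an aside, the argument actually yields the sharper bound $c\sum_j P_{ij}\,d_x(\mathbf{x}_i,\mathbf{x}_j)$, of which $c\tau$ is the worst case.
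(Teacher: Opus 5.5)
Your proposal is correct and matches the paper's proof: both rewrite the residual as $\sum_j P_{ij}\bigl(f_\theta(\mathbf{x}_i)-f_\theta(\mathbf{x}_j)\bigr)$ using row-stochasticity, apply the triangle inequality with $P_{ij}\ge 0$, bound each term in the support of row $i$ by $c\,d_x(\mathbf{x}_i,\mathbf{x}_j)\le c\tau$, and then sum the weights to one. Your remark that the argument in fact gives the sharper bound $c\sum_j P_{ij}\,d_x(\mathbf{x}_i,\mathbf{x}_j)$ is also valid.
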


The proof is provided in Appendix~\ref{app:proof-pairwise-controls-if}.

Proposition~\ref{prop:pairwise-controls-if} shows that local predictive consistency is formally connected to the standard local pairwise fairness criterion. If all neighboring pairs satisfy the local Lipschitz condition, then every instance-level residual is also bounded. Hence, the residual score is not an unrelated graph-smoothing heuristic: it is controlled by the classical pairwise notion of individual fairness.

At the same time, the converse need not hold. A small residual relative to the neighborhood average does not guarantee that all pairwise discrepancies within the neighborhood are small. In particular, deviations from different neighbors may partially cancel when forming the weighted neighborhood average, so a small aggregated residual may coexist with larger individual pairwise differences. Local predictive consistency should therefore be understood as a node-level relaxation of pairwise fairness rather than an equivalent characterization. This relaxation is intentional: it provides a less restrictive, node-level fairness diagnostic when the goal is to identify individuals whose predictions are locally inconsistent, rather than to enforce every edge-wise constraint exactly.

Perfect local consistency corresponds to the fixed-point condition $\mathbf{f}_\theta = P\mathbf{f}_\theta$ on the dataset, that is, $f_\theta(\mathbf{x}_i) = (P\mathbf{f}_\theta)_i$ for all $i$. In this case, the prediction vector $\mathbf f_\theta$ satisfies a discrete mean-value property induced by $P$, often referred to as $P$-harmonicity. The local predictive-consistency score therefore measures the extent to which predictions deviate from this local harmonic condition. This perspective connects naturally to recent work on harmonic machine learning models~\citep{kersting2024harmonic}.

\paragraph{Graph-based construction of the averaging operator.}
The local predictive-consistency score requires a neighborhood operator $P$ that defines the local comparison set for each individual. This requirement is shared by graph-based and metric-based formulations of individual fairness more generally: the individual fairness principle becomes operational only after a task-relevant notion of similarity has been specified. The graph therefore plays a substantive role in the formulation, since it encodes which individuals are considered comparable for fairness assessment.

The use of similarity graphs is a standard way to operationalize local smoothness and local individual fairness. In graph-based semi-supervised learning and manifold regularization, graph Laplacian penalties are widely used to encourage nearby instances to receive similar labels or predictions~\citep{belkin2006manifold,ando2007learning}. In individual fairness, this construction has a direct fairness interpretation: the graph specifies which individuals are considered comparable, and graph-based penalties encourage similar predictions along the resulting comparison structure~\citep{dwork2012fairness,petersen2021post}. Such formulations are also computationally attractive because they restrict the set of fairness-relevant comparisons from all pairs of instances to a sparse set of local edges, while preserving the principle that similar individuals should be treated similarly.

To construct this operator, we represent each instance $\mathbf{x}_i$ by a vector $\mathbf{z}_i \in \mathbb{R}^d$. The representation $\mathbf{z}_i$ may correspond to the original covariates, a preprocessed feature representation, or a learned embedding. In our experiments, the graph is constructed in the same projected and standardized representation space used for fairness regularization, so that the predictive model and the fairness operator rely on a consistent similarity structure. This representation-based construction follows the common assumption in individual fairness that similarity should be measured in a task-relevant space~\citep{ilvento2019metric,mukherjee2020two}.

For a locality threshold $\tau>0$, define the neighborhood of instance $i$ as
\[
\mathcal{N}_\tau(i)
=
\{ j : \| \mathbf{z}_i - \mathbf{z}_j \|_2 \le \tau \}.
\]
We then define kernel weights
\[
W_{ij}
=
\exp\!\left(-\gamma \| \mathbf{z}_i - \mathbf{z}_j \|_2^2\right)
\mathbb{I}\{j \in \mathcal{N}_\tau(i)\},
\qquad \gamma>0,
\]
where $\gamma$ controls the decay of similarity with distance. Let $D$ denote the diagonal degree matrix with entries $D_{ii}=\sum_{j=1}^n W_{ij}$. The averaging operator is obtained by row normalization:
\begin{equation}
\label{eq:P-row-norm}
P = D^{-1}W.
\end{equation}
Thus, each row of $P$ defines a probability distribution over the local comparison set of instance $i$, and $(P\mathbf f_\theta)_i$ is the corresponding similarity-weighted local reference prediction.

The threshold $\tau$ determines the scale at which individuals are considered
comparable. Graph-based formulations are a standard way to make this comparison structure explicit in individual fairness and related graph-regularization methods~\citep{dwork2012fairness,lahoti2019operationalizing,petersen2021post,kang2020inform}. If $\tau$ is too small, the graph may contain many isolated or weakly connected nodes, making the local fairness score uninformative for those instances. If $\tau$ is too large, the graph may connect individuals that are not meaningfully similar, weakening the interpretation of local predictive consistency. We therefore treat $\tau$ as a graph-scale parameter and assess its consequences using diagnostics such as graph coverage, isolated-node rates and graph density. Appendix~\ref{app:threshold-sensitivity} examines how predictive performance, graph structure, and local inconsistency vary across empirical distance thresholds.

\paragraph{Isolated nodes and graph coverage.}

Self-loops are included to keep the row-stochastic averaging operator well-defined. For a node whose only connection is its self-loop, $(P\mathbf f_\theta)_i=f_\theta(\mathbf x_i)$, and hence the residual is zero by construction. These nodes remain part of the supervised training objective, but they do not provide a non-self local comparison. This zero residual is mathematically consistent with the definition, although its interpretation differs from that of a zero residual obtained for a node with one or more non-self comparators.

Accordingly, graph coverage should be considered alongside the residual-based fairness assessment. Our evaluation protocol reports graph coverage separately and additionally provides fairness summaries conditional on covered, non-isolated nodes; see Section~\ref{subsec:evaluation_protocols} and Appendix~\ref{app:threshold-sensitivity}. Additional implementation details are provided in Appendix~\ref{app:implementation}.

\paragraph{Cohort dependence and deployment.}
Local predictive consistency is defined relative to a comparison graph and is therefore cohort-dependent rather than an intrinsic property of an individual in isolation. Changing the reference population can alter an individual's neighborhood and hence its residual even when the individual's prediction is unchanged. In our empirical evaluation, the comparison graph is constructed within the held-out test cohort. For single-instance deployment, a fixed reference population can instead be used, with the new instance connected to reference individuals using the same representation, similarity function, and graph-construction parameters. The resulting score should therefore always be interpreted relative to the specified comparison graph and reference population.

\subsection{Fairness-Regularized Learning}
\label{subsec:fairness-learning}

The instance-level individual fairness score in \cref{eq:instance_ind_fairness_score} can be incorporated into model training as a regularization term. Aggregating the squared residuals $r_i(\theta;P)$ from \cref{eq:residual} yields the fairness regularizer
\begin{equation}
\label{eq:omega}
\Omega_{\mathrm{IF}}(\theta;P)
:= \frac{1}{n} \sum_{i=1}^n r_i(\theta;P)^2
= \frac{1}{n} \|(I-P)\mathbf f_\theta\|_2^2.
\end{equation}

Model parameters $\theta$ are learned by minimizing the task loss $\mathcal{L}_{\mathrm{task}}$ augmented with the fairness regularization:
\begin{equation}
\label{eq:training-objective}
\min_{\theta} \quad
\mathcal{L}_{\mathrm{task}}(\theta)
+ \lambda \, \Omega_{\mathrm{IF}}(\theta;P),
\end{equation}
where $\lambda \ge 0$ controls the trade-off between performance and fairness.

We refer to this row-normalized graph-based local predictive-consistency regularization formulation as URW. When $P$ is constructed from a similarity graph via $P=D^{-1}W$, define
\[
L_{\mathrm{URW}} := I-D^{-1}W = I-P,
\]
which is the row-random-walk Laplacian. With this notation, the fairness regularizer can be written compactly as
\[
\Omega_{\mathrm{IF}}(\theta;P)
=
\frac{1}{n}\|L_{\mathrm{URW}}\mathbf f_\theta\|_2^2
=
\frac{1}{n}
\mathbf f_\theta^\top
L_{\mathrm{URW}}^\top L_{\mathrm{URW}}
\mathbf f_\theta.
\]

\paragraph{Relation to standard Laplacian regularization.}
For the pairwise individual fairness definition in \cref{eq:local_lipschitz}, one can incorporate it in model training using the standard (combinatorial) graph Laplacian $L_{\mathrm{comb}} = D - W$:
\[
\mathbf{f}_\theta^\top L_{\mathrm{comb}}\, \mathbf{f}_\theta
= \frac{1}{2} \sum_{i,j} W_{ij} \bigl(f_\theta(\mathbf{x}_i) - f_\theta(\mathbf{x}_j)\bigr)^2,
\]
which penalizes \emph{edge-wise} discrepancies in predictions.  These two Laplacians are closely related. Since $P = D^{-1} W$,
\[
L_{\mathrm{URW}} = I - D^{-1} W = D^{-1} (D - W) = D^{-1} L_{\mathrm{comb}},
\]
and it follows that
\[
\Omega_{\mathrm{IF}}(\theta;P)
= \frac{1}{n} \| D^{-1} L_{\mathrm{comb}} \mathbf{f}_\theta \|_2^2.
\]

\subsection{$p$-Norm Regularization for Individual Fairness}
\label{subsec:pnorm}

The standard squared residuals in \cref{eq:omega} treat all deviations equally. In practice, however, large violations - where a model's prediction deviates substantially from the local average - may be especially concerning. It can therefore be beneficial to place greater emphasis on these extreme deviations, effectively penalizing the worst-case disparities more strongly. Formally, for $p \ge 1$, we define
\begin{equation}
\label{eq:omega-p}
\Omega_{\mathrm{IF}}^{(p)}(\theta;P)
:= \left( \frac{1}{n} \sum_{i=1}^n |r_i(\theta;P)|^p \right)^{2/p},
\end{equation}
where $r_i(\theta;P)$ is the instance-level residual defined in \cref{eq:residual}. When $p=2$, this reduces exactly to \cref{eq:omega}. As $p$ increases, the objective becomes increasingly sensitive to large residuals, making it particularly effective in settings where extreme local violations are more critical to avoid. In the limit $p \to \infty$, $\Omega_{\mathrm{IF}}^{(p)}(\theta;P)$ converges to $\max_{i} |r_i(\theta;P)|^2$, corresponding to direct control over the single largest local violation.  

This $p$-norm formulation therefore provides a flexible tool to balance overall fairness with robustness to extreme individual discrepancies. It allows practitioners to tune the regularizer according to the fairness requirements of their specific application, prioritizing either average behavior ($p=2$) or worst-case violations ($p \gg 1$). Further technical discussion and examples are provided in Appendix~\ref{app:pnorm}.

\section{Task Instantiations}
\label{sec:task-instantiations}
We apply the individual fairness regularized learning approach introduced in Section~\ref{subsec:fairness-learning} to three settings: regression, binary classification, and time-to-event prediction. In the regression setting, where the output space is $\mathcal{Y} = \mathbb{R}$, the formulation described in Section~\ref{subsec:fairness-learning} can be applied directly. For binary classification and time-to-event prediction, however, certain adaptations are required.

\subsection{Regression}
\label{subsec:regression}
Given training data $\{(\mathbf{x}_i, y_i)\}_{i=1}^n$ for a regression task, with outputs $y_i \in \mathbb{R}$, we denote the response vector as 
\(
\mathbf{y} := (y_1, \ldots, y_n)^\top.
\)
In this setting, the instance-level individual fairness score is 
$\mathrm{IF}_i(\theta;P) = |r_i(\theta;P)|$, where $r_i(\theta;P)$ is defined 
in \cref{eq:residual}. Using the training objective in \cref{eq:training-objective} 
with mean squared error, the fairness-regularized regression problem becomes
\begin{equation}
\label{eq:reg-objective}
\min_{\theta} \quad
\underbrace{\frac{1}{n} \|\mathbf{f}_\theta - \mathbf{y}\|_2^2}_{\mathcal{L}_{\mathrm{MSE}}(\theta)}
+
\lambda \underbrace{\frac{1}{n} \|\mathbf{f}_\theta - P\mathbf{f}_\theta\|_2^2}_{\Omega_{\mathrm{IF}}(\theta;P)}.
\end{equation}

\subsection{Binary Classification}
\label{sec:task-instantiations-binary}

We consider binary classification on labeled data
$\{(\mathbf{x}_i, y_i)\}_{i=1}^n$ with $y_i \in \{0,1\}$.
A probabilistic classifier
$f_\theta : \mathbb{R}^d \to (0,1)$ outputs
\[
f_\theta(\mathbf{x}_i) = \Pr(Y=1 \mid \mathbf{x}_i).
\]

For binary classification, we consider two instantiations of the URW formulation, denoted URW-FKL and URW-RKL, where KL refers to the Kullback--Leibler divergence. The two variants use the same comparison graph, neighborhood operator, and regularization principle, and differ only in how the neighborhood-aggregated predictive reference is constructed. URW-FKL uses the Forward-KL barycenter, corresponding to probability-space averaging, whereas URW-RKL uses the Reverse-KL barycenter, corresponding to logit-space averaging.

Accordingly, to define the fairness regularizer $\Omega_{\mathrm{IF}}(\theta;P)$ using the neighborhood operator $P$, we consider these two constructions of the neighborhood reference prediction. Forward-KL preserves linear averaging of probabilities, while Reverse-KL performs the aggregation in logit space and therefore induces a different geometry for predictions near the boundaries of the probability interval.

\paragraph{Forward-KL (probability-space) averaging.}

For each instance $i$, URW-FKL defines the neighborhood reference as the solution to
\[
\bar f_{\theta,i}
=
\arg\min_{q \in (0,1)}
\sum_{j=1}^n P_{ij}\,
\mathrm{KL}\!\Bigl(
\mathrm{Bern}\bigl((\mathbf{f}_\theta)_j\bigr)
\;\|\;
\mathrm{Bern}(q)
\Bigr),
\]
which admits the closed-form solution (derivation in Appendix~\ref{app:forward-kl}):
\[
\bar f_{\theta,i} = (P \mathbf{f}_\theta)_i = \sum_{j=1}^n P_{ij}\, (\mathbf{f}_\theta)_j.
\]
Thus, Forward-KL recovers the linear neighborhood averaging used in the Euclidean formulation of Section~\ref{subsec:proposed-if-score}. Defining $\bar{\mathbf{f}}_\theta := (\bar f_{\theta,1}, \ldots, \bar f_{\theta,n})^\top$, the corresponding URW fairness regularizer can be written as
\[
\Omega_{\mathrm{IF}}(\theta;P) = \frac{1}{n} \| \mathbf{f}_\theta - \bar{\mathbf{f}}_\theta \|_2^2.
\]

\paragraph{Reverse-KL (logit-space) averaging.}

For each instance $i$, URW-RKL defines the neighborhood reference as the solution to
\[
\tilde f_{\theta,i}
=
\arg\min_{q \in (0,1)}
\sum_{j=1}^n P_{ij}\,
\mathrm{KL}\!\Bigl(
\mathrm{Bern}(q)
\;\|\;
\mathrm{Bern}\bigl((\mathbf{f}_\theta)_j\bigr)
\Bigr),
\]
which admits the closed form
\[
\tilde f_{\theta,i}
=
\sigma\!\Bigl(
\sum_{j=1}^n P_{ij}\,
\log\frac{(\mathbf{f}_\theta)_j}{1-(\mathbf{f}_\theta)_j}
\Bigr),
\qquad
\sigma(t) = \frac{1}{1+e^{-t}}.
\]
This corresponds to averaging neighbor predictions in logit space. A derivation is provided in Appendix~\ref{app:reverse-kl}. In vector form, we have
\[
\tilde{\mathbf f}_\theta = \sigma\bigl(P\,\mathrm{logit}(\mathbf f_\theta)\bigr),
\]
where $\mathrm{logit}(p)=\log\frac{p}{1-p}$ and both $\mathrm{logit}$ and $\sigma$ act elementwise. In practice, for numerical stability, we clip probabilities to $[\varepsilon, 1-\varepsilon]$ before applying $\mathrm{logit}(\cdot)$, for a small $\varepsilon>0$.
The corresponding URW fairness regularizer is
\[
\Omega_{\mathrm{IF}}(\theta;P) = \frac{1}{n} \| \mathbf{f}_\theta - \tilde{\mathbf{f}}_\theta \|_2^2.
\]

\paragraph{Predictive loss.}
We use the binary cross-entropy loss
\begin{equation}
\mathcal{L}_{\mathrm{CE}}(\theta)
=
-\frac{1}{n}\sum_{i=1}^n
\Bigl(
y_i \log f_\theta(\mathbf{x}_i)
+
(1-y_i) \log \bigl(1 - f_\theta(\mathbf{x}_i)\bigr)
\Bigr).
\end{equation}
Combining $\mathcal{L}_{\mathrm{CE}}(\theta)$ with the fairness regularizer 
$\Omega_{\mathrm{IF}}(\theta;P)$ - computed via either Forward-KL or Reverse-KL 
averaging - yields the fairness-regularized training objective for binary classification.

\subsection{Time-to-event prediction}
\label{subsec:task-instantiations-survival}
We consider right-censored survival data $\{(\mathbf{x}_i, t_i, e_i)\}_{i=1}^n$, where $\mathbf{x}_i$ denotes covariates, $t_i > 0$ is the observed follow-up time, and $e_i \in \{0,1\}$ is the event indicator ($e_i = 1$ if the event occurs at $t_i$ and $e_i = 0$ if the observation is censored). We instantiate the local-consistency formulation for two common survival modeling paradigms: (i) models that output a \emph{scalar risk score}, and (ii) models that directly output a \emph{survival function}.

\paragraph{Log-risk prediction.}
We first consider Cox proportional hazards models~\citep{cox1972regression} with a neural network parameterization (DeepSurv)~\citep{katzman2018deepsurv}. The hazard function is modeled as
\[
h(t \mid \mathbf{x}) = h_0(t) \exp\!\bigl(f_\theta(\mathbf{x})\bigr),
\]
where $h_0(t)$ is an unspecified baseline hazard and $f_\theta(\mathbf{x}) \in \mathbb{R}$ is the predicted \emph{log-risk} score. This parameterization is commonly used because differences $f_\theta(\mathbf{x}_i) - f_\theta(\mathbf{x}_j)$ correspond to log hazard ratios, and the predictive loss function can often be specified such that it depends only on the risk scores without requiring explicit modeling of $h_0(t)$. In this setting, given an averaging operator $P$, the individual fairness regularizer can be defined as
\begin{equation}
\label{eq:risk-fair-main}
\Omega_{\mathrm{IF}}(\theta;P) = \frac{1}{n} \| \mathbf{f}_\theta - P \mathbf{f}_\theta \|_2^2.
\end{equation}

For the predictive loss in Cox models, let $R_i = \{j : t_j \ge t_i \}$ denote the risk set at time $t_i$, and let $n_e = \sum_{i=1}^n e_i$ be the total number of observed events. The negative Cox partial log-likelihood
\begin{equation}
\label{eq:cox-loss}
\mathcal{L}_{\mathrm{Cox}}(\theta)
=
-\frac{1}{n_e} \sum_{i:\, e_i = 1}
\Biggl[
f_\theta(\mathbf{x}_i)
-
\log \Biggl(
\sum_{j \in R_i} \exp\bigl(f_\theta(\mathbf{x}_j)\bigr)
\Biggr)
\Biggr]
\end{equation}
is minimized. This objective leverages the partial likelihood to focus only on the ordering of event times, without requiring explicit modeling of the baseline hazard $h_0(t)$. 

Combining the fairness regularizer $\Omega_{\mathrm{IF}}(\theta;P)$ in \cref{eq:risk-fair-main} with the Cox partial log-likelihood in \cref{eq:cox-loss} yields the fairness-regularized training objective for log-risk prediction.

\paragraph{Survival-function prediction.}
For survival-function prediction, we use DeepHazard~\citep{rava2020deephazard} as the neural survival-function backbone. We next consider models that directly predict a survival function for a given covariate $\mathbf{x}$:
\[
S_\theta(t \mid \mathbf{x}) = \Pr(T > t \mid \mathbf{x}), \qquad t \ge 0,
\]
where $T$ denotes the event time random variable. For $i = 1, \ldots, n$, we denote
\[
S_{\theta,i}(t) := S_\theta(t \mid \mathbf{x}_i).
\]
Given an averaging operator $P$, we define the neighborhood average at any time $t$ as 

\begin{equation}
\label{eq:curve-bary-main}
\bar S_i(t) = \sum_{j=1}^n P_{ij} \, S_{\theta,j}(t),
\qquad t \ge 0 .
\end{equation}

This is an arithmetic (convex) average at each $t$; consequently, $\bar S_i$ remains a valid survival curve whenever each $S_{\theta,j}$ is valid (i.e., values in $[0,1]$ and non-increasing in $t$). 

The resulting fairness regularizer is then defined as
\begin{equation}
\label{eq:curve-fair-main}
\Omega_{\mathrm{IF}}(\theta;P)
=
\frac{1}{n}\sum_{i=1}^n
\int_0^{T_{\max}}
\bigl(S_{\theta,i}(t)-\bar S_i(t)\bigr)^2 dt,
\end{equation}
where $T_{\max}$ denotes the maximum follow-up time. In practice, the integrals are evaluated on a discrete time grid, and the integral is approximated by a Riemann sum.

To define the predictive loss under right censoring, we consider a discrete time grid
\[
0 = \tau_0 < \tau_1 < \cdots < \tau_K = T_{\max},
\]
so that $S_\theta(\tau_0 \mid \mathbf{x}) = 1$. Let $b_i \in \{1,\ldots,K\}$ denote the index of the interval containing the observed time $t_i$, i.e., $t_i \in (\tau_{b_i-1}, \tau_{b_i}]$. We define the predictive loss as the negative discrete-time log-likelihood
\begin{equation}
\label{eq:surv-curve-loss}
\mathcal{L}_{\mathrm{Surv}}(\theta)
=
-\frac{1}{n} \sum_{i=1}^n
\Bigl[
e_i \log\!\bigl(S_{\theta,i}(\tau_{b_i-1}) - S_{\theta,i}(\tau_{b_i})\bigr)
+
(1-e_i)\log S_{\theta,i}(\tau_{b_i})
\Bigr],
\end{equation}
where the first term corresponds to the probability of experiencing the event in the observed interval, and the second term corresponds to the probability of surviving through the censoring interval.

Combining the predictive loss in \cref{eq:surv-curve-loss} with the fairness regularizer $\Omega_{\mathrm{IF}}(\theta;P)$ in \cref{eq:curve-fair-main} yields the fairness regularized training objective for survival function prediction.

\section{Experiments}
\label{section_experiments}
\subsection{Datasets}
We evaluate our formulation on benchmark datasets covering binary classification, regression, and survival analysis (Table~\ref{tab:datasets}). For classification, we use the \textsc{Adult} dataset to predict whether income exceeds \$50K and \textsc{COMPAS}~\citep{angwin2016compas} to predict two-year recidivism risk. For regression, we consider \textsc{Employee Salary}~\citep{employerdata_kaggle} and \textsc{Insurance}~\citep{suess2017insurance}, which predict employee salaries and individual medical insurance charges, respectively. Survival datasets include \textsc{FLCHAIN}~\citep{dispenzieri2012use}, \textsc{WHAS}~\citep{hosmer1999regression}, and \textsc{ROSSI}~\citep{rossi1980money,fox2012rcmdrplugin}.

\begin{table}[htbp]
\centering
\footnotesize
\begin{adjustbox}{max width=\linewidth}
\begin{tblr}{
  cells = {c},
  cell{2}{1} = {r=2}{},
  cell{4}{1} = {r=2}{},
  cell{6}{1} = {r=3}{},
  hline{1} = {-}{},
  hline{2} = {-}{Black},
  hline{9} = {1}{},
  hline{9} = {2-5}{Black},
}
Task                  & Dataset  & Size  & {No.\\ Covariates} & {Response\\Variable} \\
Binary Classification & \textsc{Adult}    & 46447 & 21 & Income (binary) \\
                      & \textsc{COMPAS}   & 7214  & 15 & Two-year Recidivism (binary) \\
Regression            & \textsc{Employee Salary} & 10000 & 9 & Salary \\
                      & \textsc{Insurance}       & 1338     & 7  & Medical Charges \\
Survival Analysis     & \textsc{FLCHAIN}  & 7874  & 9 & Mortality \\
                      & \textsc{WHAS}     & 481   & 9 & Mortality \\
                      & \textsc{ROSSI}    & 432   & 7 & Recidivism (arrest) \\
\end{tblr}
\end{adjustbox}
\caption{Datasets used in experiments. Censoring rates: 72.45\% (FLCHAIN), 48.23\% (WHAS), 26.39\% (ROSSI).}
\label{tab:datasets}
\end{table}

\subsection{Implementation and Baselines}
\label{subsec:implementation-baselines}
All methods use the same backbone architecture and preprocessing pipeline to ensure a controlled comparison (implementation details in Appendix~\ref{app:implementation}). We compare against representative individual fairness baselines, including the pre-processing method iFlipper~\citep{zhang2023iflipper}, combinatorial Laplacian regularization~\citep{petersen2021post}, and the adversarial representation-learning approaches SenSR and SenSeI~\citep{yurochkin2019training,yurochkin2020sensei}.

\subsection{Evaluation Protocols}
\label{subsec:evaluation_protocols}

\paragraph{Performance metrics.}
We report task-specific predictive performance metrics. For classification,
we use balanced accuracy and AUC~\citep{hanley1982meaning}, where higher values
indicate better predictive performance. For regression, we report RMSE and
MSE~\citep{willmott2005advantages}, where lower values are preferred. For
survival analysis, we use the concordance index (C-index; higher is better)
and the integrated Brier score (IBS; lower is better)
~\citep{antolini2005time,graf1999assessment,harrell1982evaluating}.

\paragraph{Fairness metrics.}
Our primary fairness measure is the instance-level local predictive-consistency score defined in Section~\ref{subsec:proposed-if-score}. For a tolerance $\epsilon>0$, we summarize violations using
\begin{equation}
\mathrm{VR}(\epsilon)
=
\frac{1}{|\mathcal{I}_{\mathrm{test}}|}
\sum_{i\in\mathcal{I}_{\mathrm{test}}}
\mathbf{1}
\left(
\mathrm{IF}_i(\theta;P)>\epsilon
\right),
\label{eq:vr}
\end{equation}
i.e., the fraction of test instances whose IF score exceeds the tolerance.

All graph-based fairness diagnostics are evaluated relative to the held-out test comparison cohort. Because isolated nodes have no meaningful non-self comparison set, we report graph coverage separately and provide coverage-conditional fairness summaries in Appendix~\ref{app:threshold-sensitivity}. Cohort dependence and single-instance deployment are discussed in Section~\ref{subsec:proposed-if-score}.

To facilitate comparison with prior work, we additionally report GR-consistency and S-consistency following \citet{yurochkin2019training} and \citet{yurochkin2020sensei}.

\paragraph{Independent pairwise fairness diagnostics.}
\label{par:independent-pairwise-diagnostics}

To validate the proposed residual using a fairness quantity independent of the neighborhood-aggregation construction, we additionally evaluate direct prediction discrepancies over non-self edges of the held-out test comparison graph. Our primary pairwise diagnostic is \textsc{EdgeMeanAbs}, which measures the average absolute prediction difference between pairs of instances that are considered comparable by the graph; smaller values therefore indicate stronger pairwise local consistency. Because this diagnostic is based directly on absolute pairwise differences, it is not reduced by signed-neighbor cancellation. The formal definition, additional summaries, task-specific outputs, and complete results are reported in Appendix~\ref{app:pairwise-diagnostics}.

\paragraph{Waller-style local consistency.}
For binary classification, we additionally evaluate the models using the instance-level consistency diagnostic of \citet{waller2025beyond}, defined in Section~\ref{subsec:if-metrics-knn}. We apply the diagnostic to hard predicted
class labels $\hat y_i\in\{0,1\}$ using $k$-nearest-neighbor comparison sets.

Following \citet{waller2025beyond}, we report mean consistency together with the
Proportional Consistency Score (PCS) and Balanced Conditioned Consistency (BCC)
at $\delta=0.5$. We evaluate $k\in\{5,10,15\}$ to assess sensitivity to neighborhood size. These metrics are computed post hoc on the test predictions and are not used in the URW training objective. Complete results are reported in Appendix~\ref{app:waller-comparison}.

\section{Results and Discussion}
\label{section_results}

Our primary comparison is between the row-random-walk regularizer (URW) associated with the local predictive-consistency formulation and the standard combinatorial Laplacian baseline. Table~\ref{tab:baselines} compares these methods with representative individual-fairness baselines on \textsc{Adult}. SenSR achieves the highest balanced accuracy, but with substantially larger violation rates. SenSeI reduces violations, but at a larger predictive cost. At operating points selected to match the validation VR(0.005) of SenSeI, URW-FKL and URW-RKL obtain comparable violation rates while preserving higher balanced accuracy. This suggests that the residual-based formulation can reach competitive individual-fairness operating points without the same loss in predictive performance.

The full \(\lambda\)-sweeps in Figs.~\ref{fig:tradeoffs_binary}--\ref{fig:survival_tradeoff_twomodels} show that increasing \(\lambda\) generally reduces \(\mathrm{VR}(\epsilon)\), although the curves are not perfectly monotone in every dataset. We therefore interpret the $\lambda$-sweeps as sets of attainable operating points rather than as strictly ordered paths. The combinatorial Laplacian directly penalizes edge-wise discrepancies and can attain lower violation rates in some settings, but these operating points are often associated with larger predictive degradation, especially in the regression experiments.

Under this interpretation, URW and the combinatorial Laplacian occupy different regions of the empirical fairness--performance frontier. The combinatorial Laplacian directly penalizes edge-wise discrepancies and can attain lower violation rates in some settings, but often with larger predictive degradation. URW penalizes deviation from a similarity-weighted local neighborhood average and, in several datasets, attains competitive or favorable predictive performance at comparable achieved fairness levels. In binary classification, the Forward-KL and Reverse-KL variants often lead to similar operating points, suggesting that the main effect arises from enforcing local agreement rather than from the specific averaging geometry. Survival results are more dataset-dependent, particularly for \textsc{WHAS} and \textsc{ROSSI}.

Overall, these results support the intended role of local predictive consistency as a node-level relaxation of pairwise individual fairness. URW does not uniformly dominate the combinatorial Laplacian across all datasets or all fairness levels. Rather, the empirical benefit of URW is best understood through attainable operating points and matched-fairness comparisons: in several settings, it provides competitive or favorable points on the fairness--performance frontier while using a node-level residual objective instead of an edge-level pairwise penalty.

Because the local predictive-consistency formulation depends on the neighborhood graph, Appendix~\ref{app:threshold-sensitivity} examines sensitivity to the graph scale. Graph density and isolated-node rates vary systematically with the threshold, while predictive performance is generally stable over moderate graph scales.

Additional appendix analyses report GR- and S-consistency (Appendix~\ref{app:consistency}) and sensitivity to neural network depth (Appendix~\ref{app:depth}).

\begin{figure}[htbp]
\centering
\captionsetup{font=small}
\setlength{\tabcolsep}{2pt}
\renewcommand{\arraystretch}{1.0}
\resizebox{0.9\textwidth}{!}{
\begin{tabular}{@{}
  >{\centering\arraybackslash}m{0.26\textwidth}
  >{\centering\arraybackslash}m{0.26\textwidth}
  >{\centering\arraybackslash}m{0.26\textwidth}
  @{\hspace{0.4em}}
  >{\centering\arraybackslash}m{0.08\textwidth}
@{}}

{\scriptsize Performance} &
{\scriptsize Forward-KL: VR($\varepsilon$)} &
{\scriptsize Reverse-KL: VR($\varepsilon$)} &
\\
\cmidrule(lr){1-3}

\includegraphics[width=\linewidth]{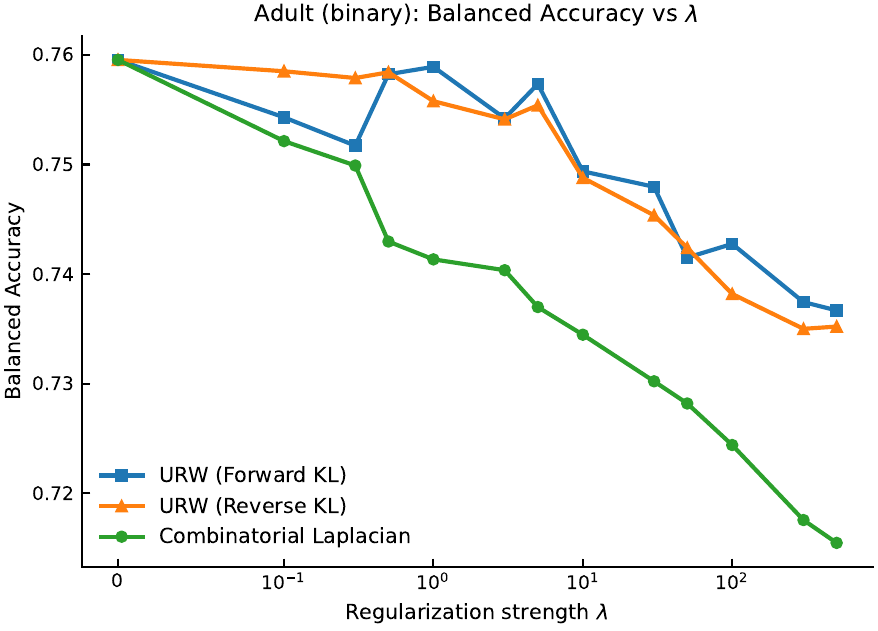} &
\includegraphics[width=\linewidth]{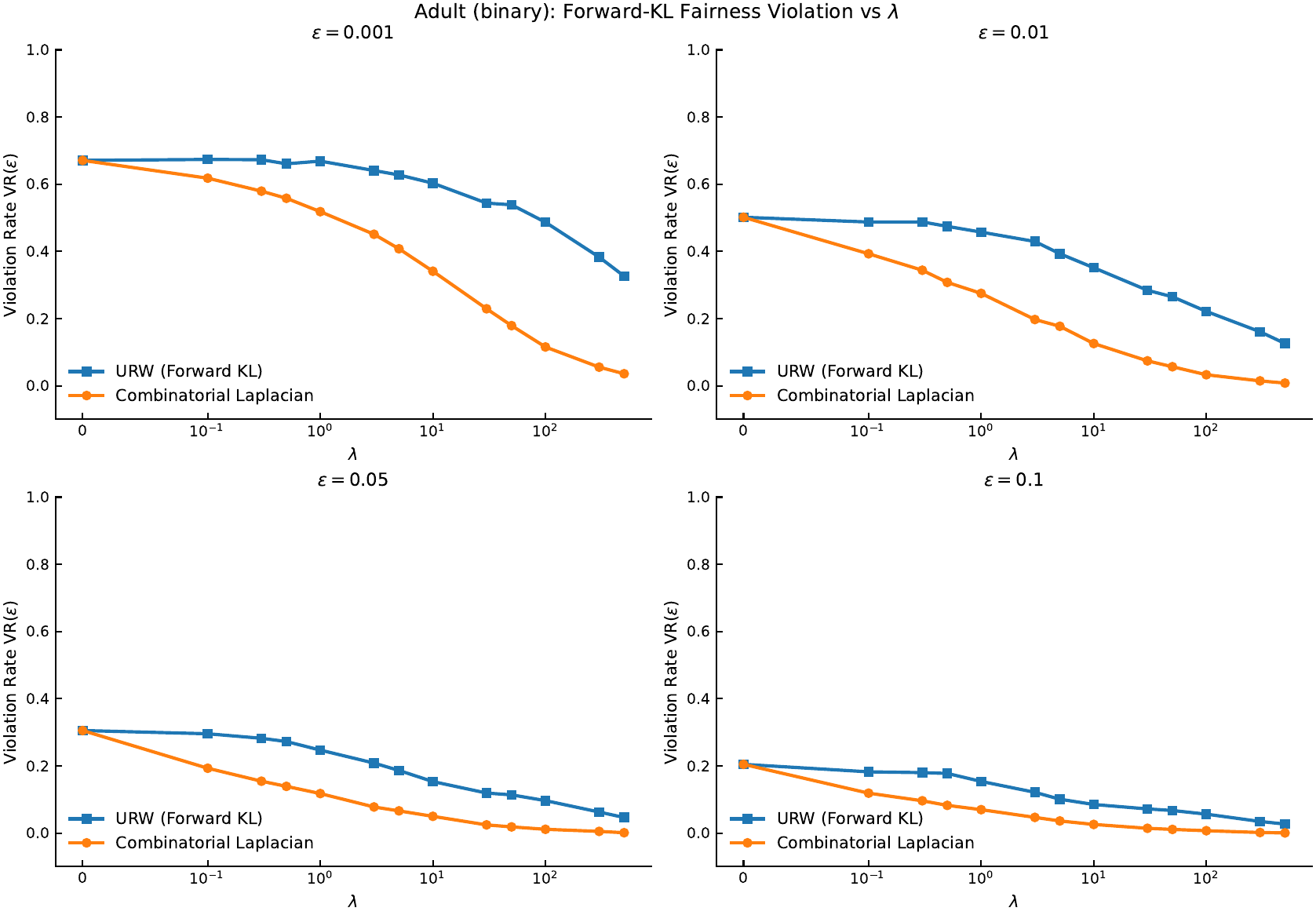} &
\includegraphics[width=\linewidth]{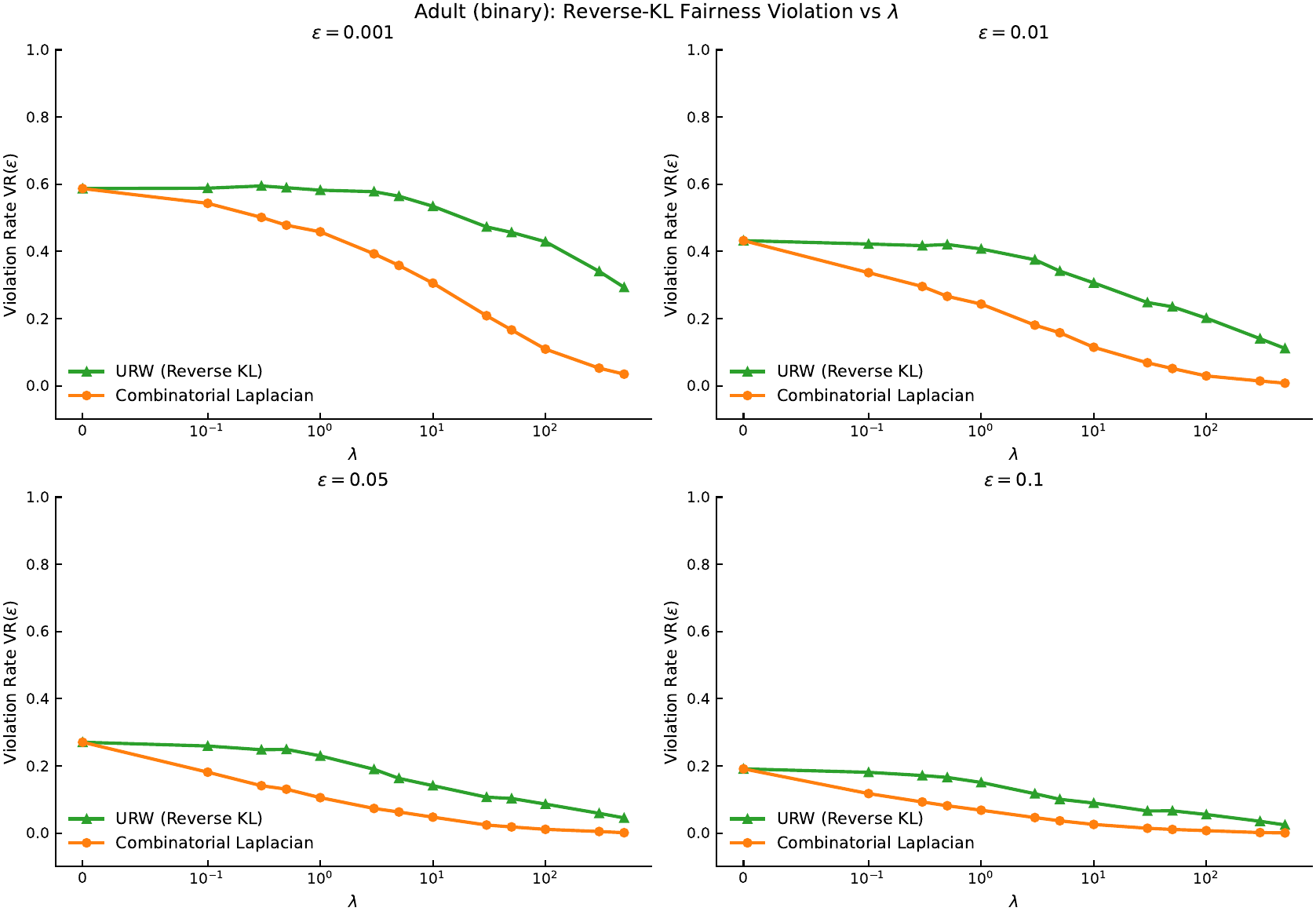} &
\shortstack{\tiny\textsc{Adult}\\ \tiny(binary)} \\
\addlinespace[0.6em]

\includegraphics[width=\linewidth]{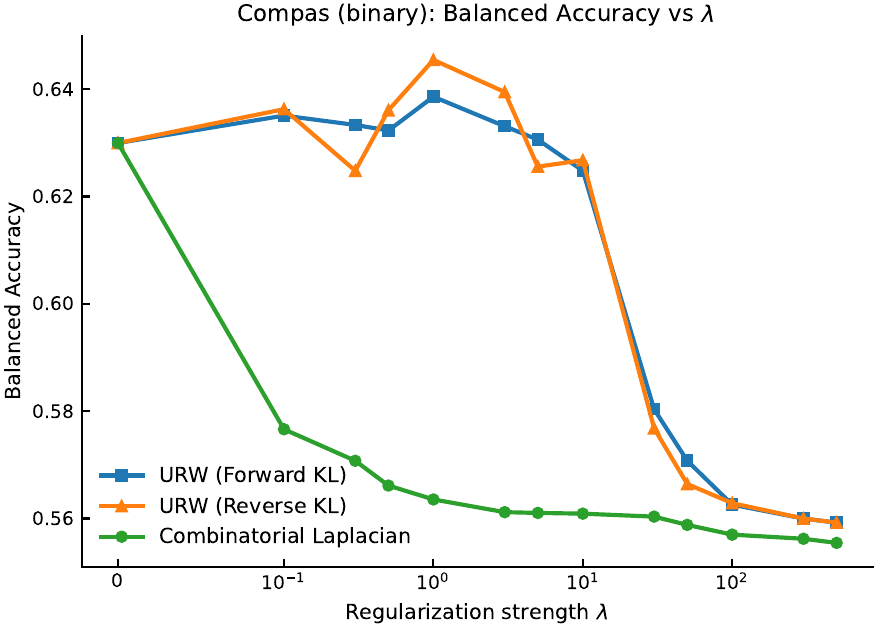} &
\includegraphics[width=\linewidth]{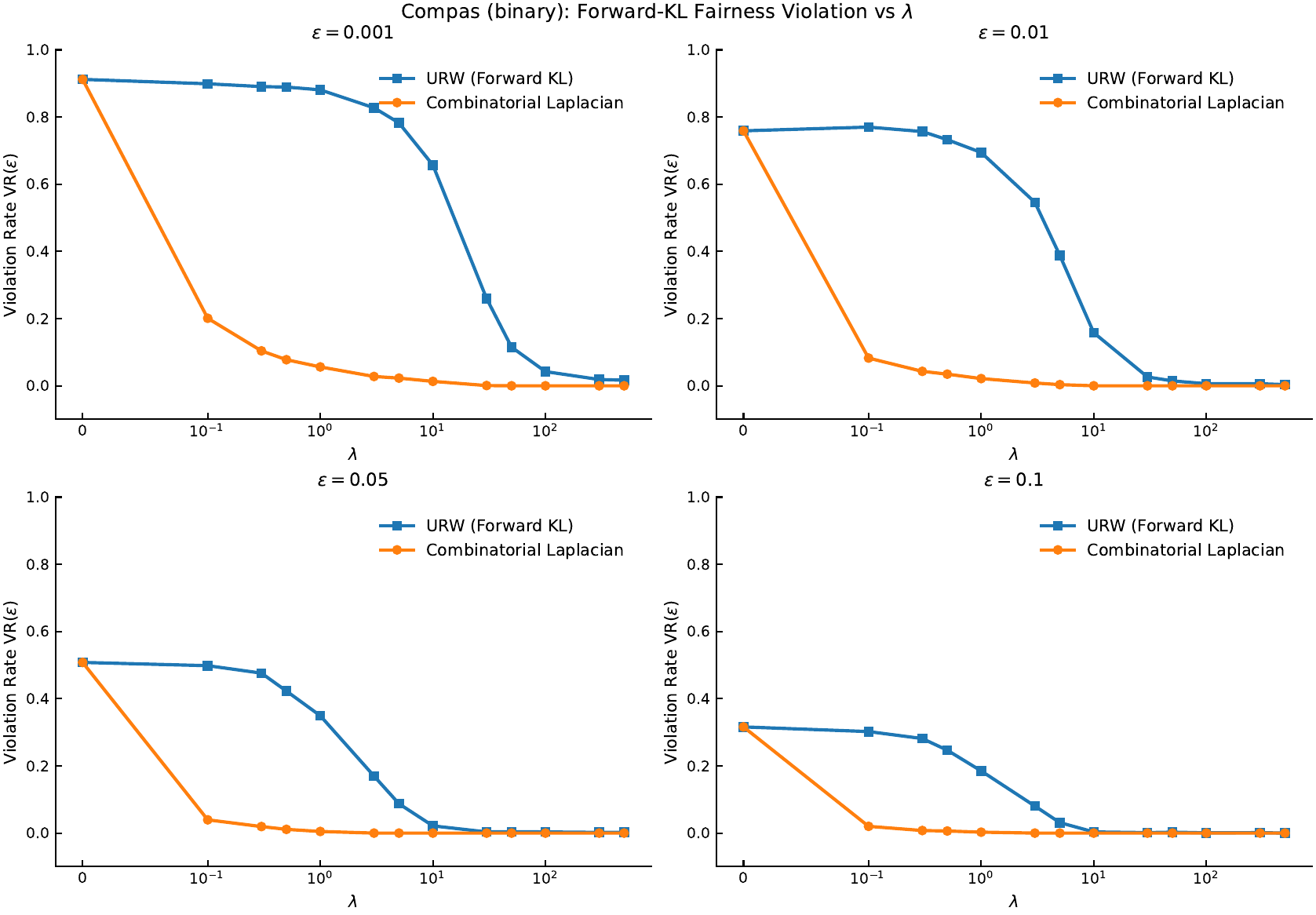} &
\includegraphics[width=\linewidth]{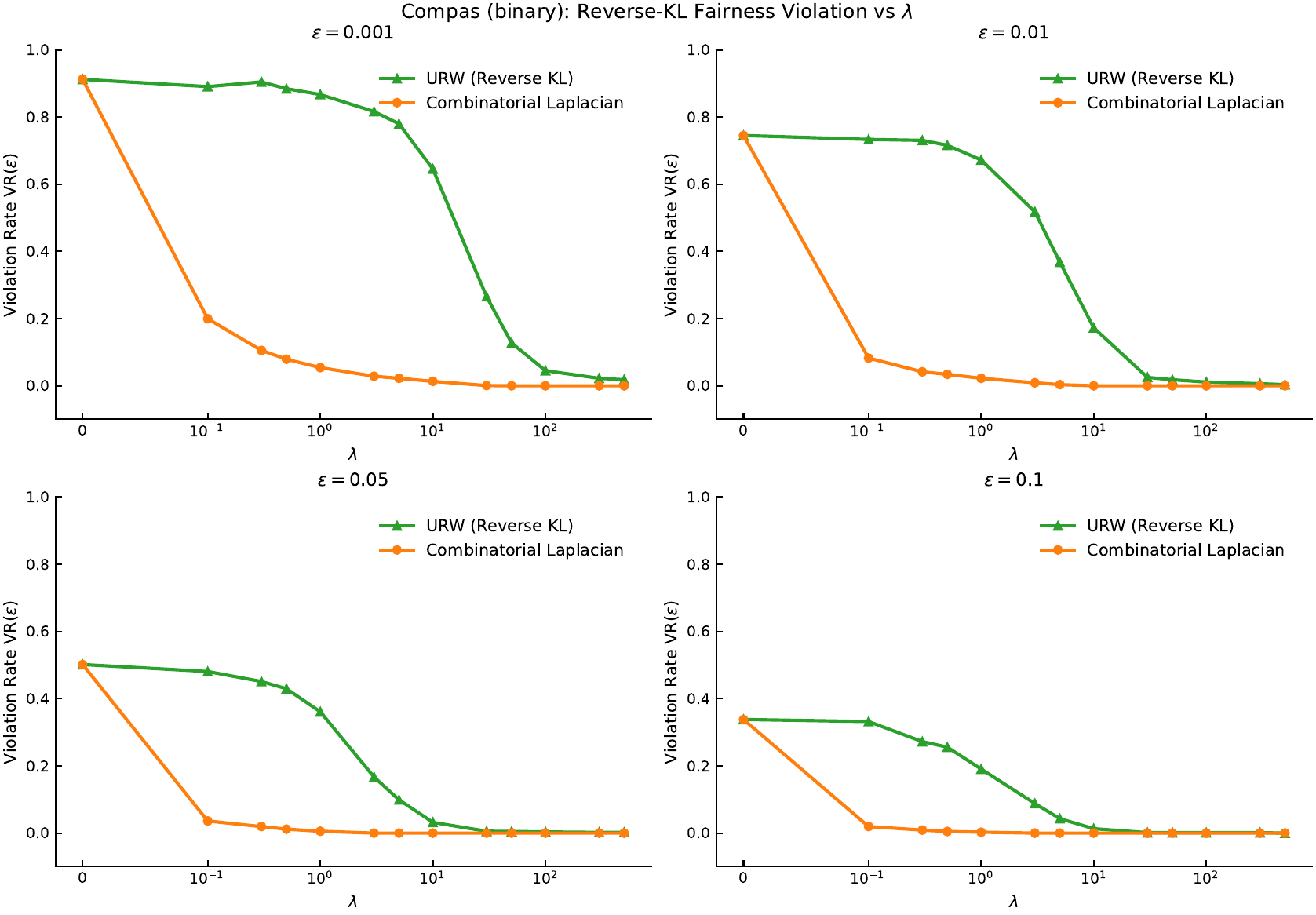} &
\shortstack{\tiny\textsc{COMPAS}\\ \tiny(binary)} \\

\end{tabular}
}
\vspace{0.3em}
\caption{
Performance--fairness operating points for binary classification as $\lambda$ varies.
Different regularization formulations attain different regions of the empirical
fairness--performance space.
}
\label{fig:tradeoffs_binary}
\end{figure}

\begin{figure}[htbp]
\centering
\captionsetup{font=small}
\setlength{\tabcolsep}{2pt}
\renewcommand{\arraystretch}{1.0}
\resizebox{0.9\textwidth}{!}{
\begin{tabular}{@{}
  >{\centering\arraybackslash}m{0.38\textwidth}
  >{\centering\arraybackslash}m{0.38\textwidth}
  @{\hspace{0.6em}}
  >{\centering\arraybackslash}m{0.10\textwidth}
@{}}

{\scriptsize Performance} & {\scriptsize VR($\varepsilon$)} & \\
\cmidrule(lr){1-2}

\includegraphics[width=\linewidth]{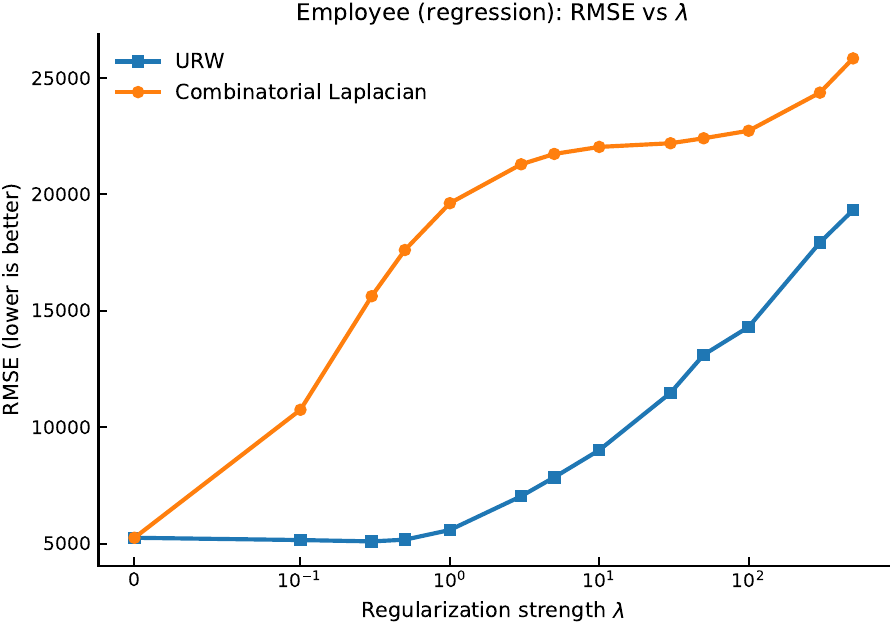} &
\includegraphics[width=\linewidth]{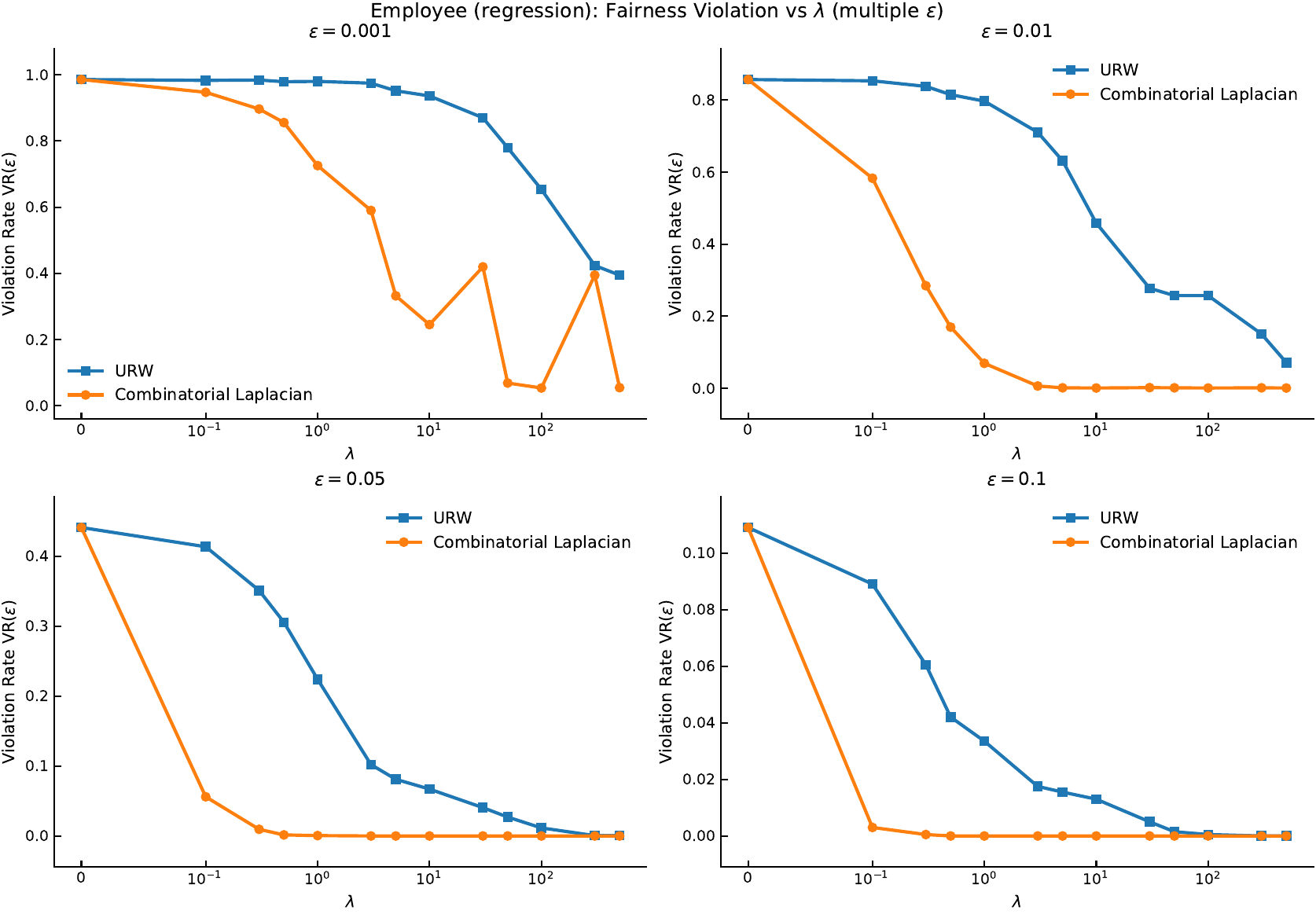} &
\shortstack{\tiny\textsc{Employee Salary}\\ \tiny(regression)} \\
\addlinespace[0.6em]

\includegraphics[width=\linewidth]{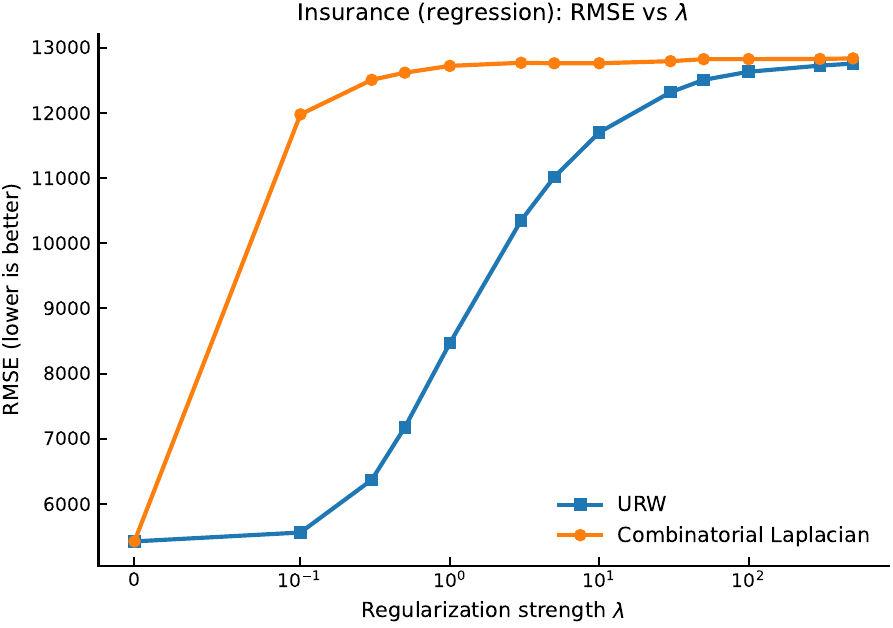} &
\includegraphics[width=\linewidth]{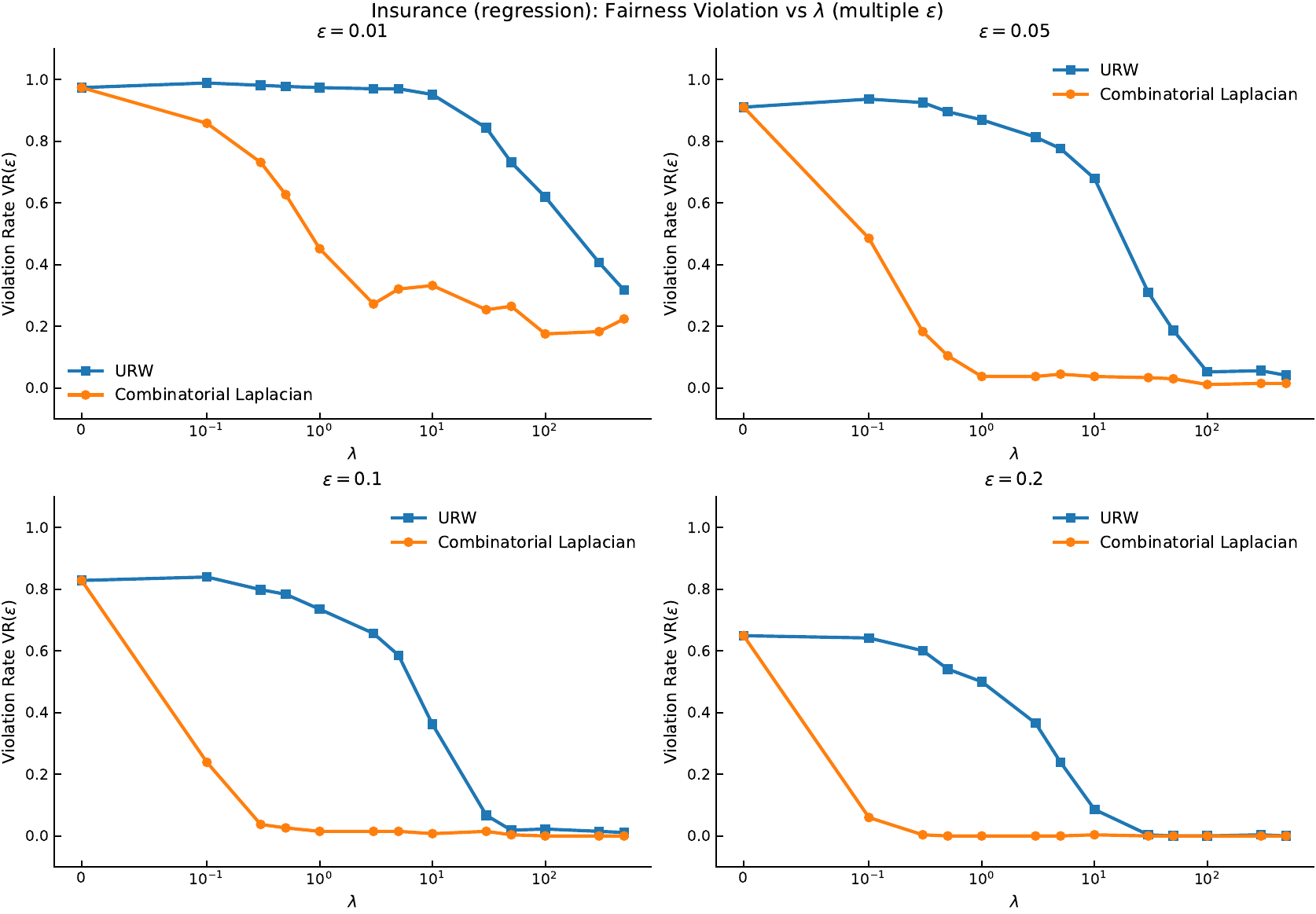} &
\shortstack{\tiny\textsc{Insurance}\\ \tiny(regression)} \\

\end{tabular}
}
\vspace{0.3em}
\caption{
Performance--fairness operating points for regression as $\lambda$ varies. Because the scale of $\lambda$ is regularizer-dependent, comparisons are interpreted through achieved fairness levels rather than raw $\lambda$ values.}
\label{fig:tradeoffs_regression}
\end{figure}

\begin{figure}[htbp]
\centering
\captionsetup{font=small}
\setlength{\tabcolsep}{1.5pt}
\renewcommand{\arraystretch}{1.0}
\resizebox{\textwidth}{!}{
\begin{tabular}{@{}
  *{3}{>{\centering\arraybackslash}m{0.138\textwidth}}
  @{\hspace{0.6em}}
  >{\centering\arraybackslash}m{0.08\textwidth}
  @{\hspace{0.6em}}
  *{3}{>{\centering\arraybackslash}m{0.138\textwidth}}
@{}}

\multicolumn{3}{c}{\small\textsc{DeepSurv} (log-risk)} & &
\multicolumn{3}{c}{\small\textsc{DeepHazard} (survival-function)} \\
\cmidrule(lr){1-3}\cmidrule(lr){5-7}

{\scriptsize IBS} & {\scriptsize C-index} & {\scriptsize VR($\varepsilon$)} &
& {\scriptsize IBS} & {\scriptsize C-index} & {\scriptsize VR($\varepsilon$)} \\
\addlinespace[0.25em]

% ===================== FLCHAIN =====================
\includegraphics[width=\linewidth]{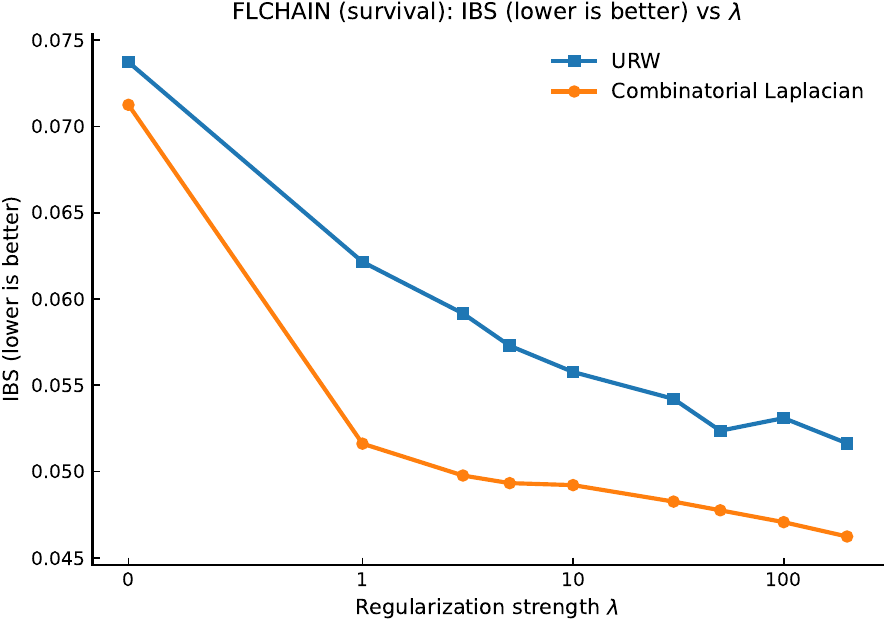} &
\includegraphics[width=\linewidth]{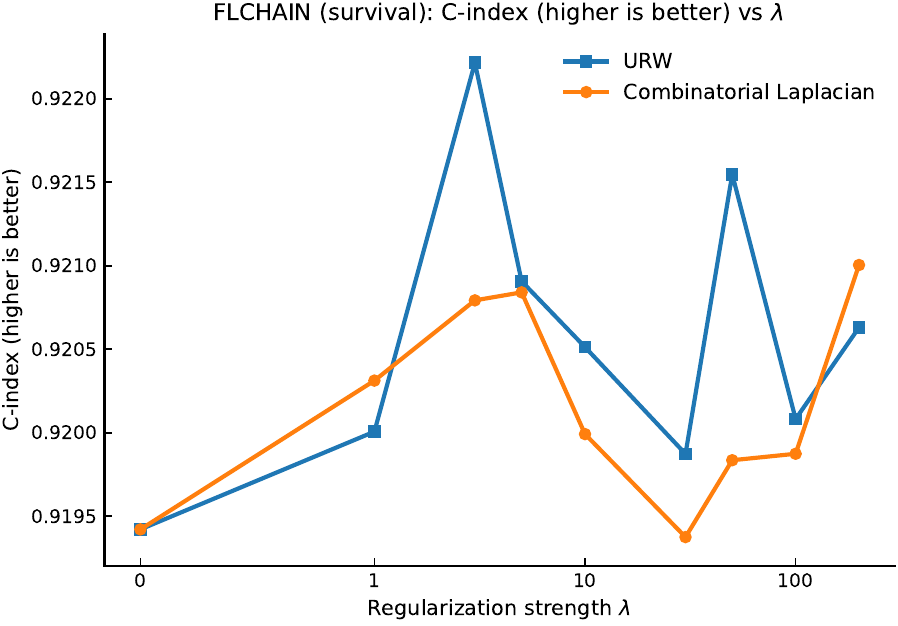} &
\includegraphics[width=\linewidth]{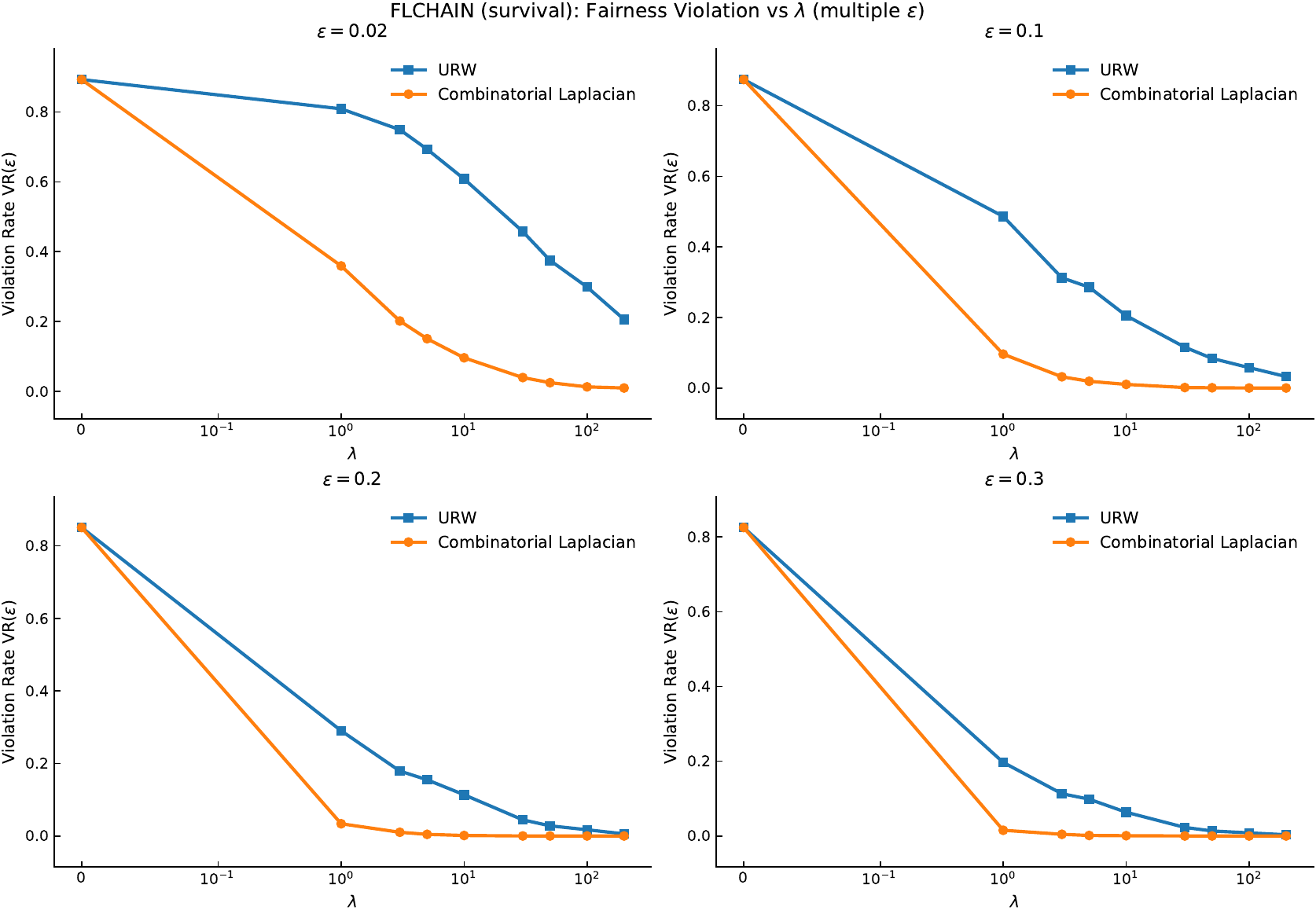} &
\makebox[\linewidth][c]{\tiny\textsc{FLCHAIN}} &
\includegraphics[width=\linewidth]{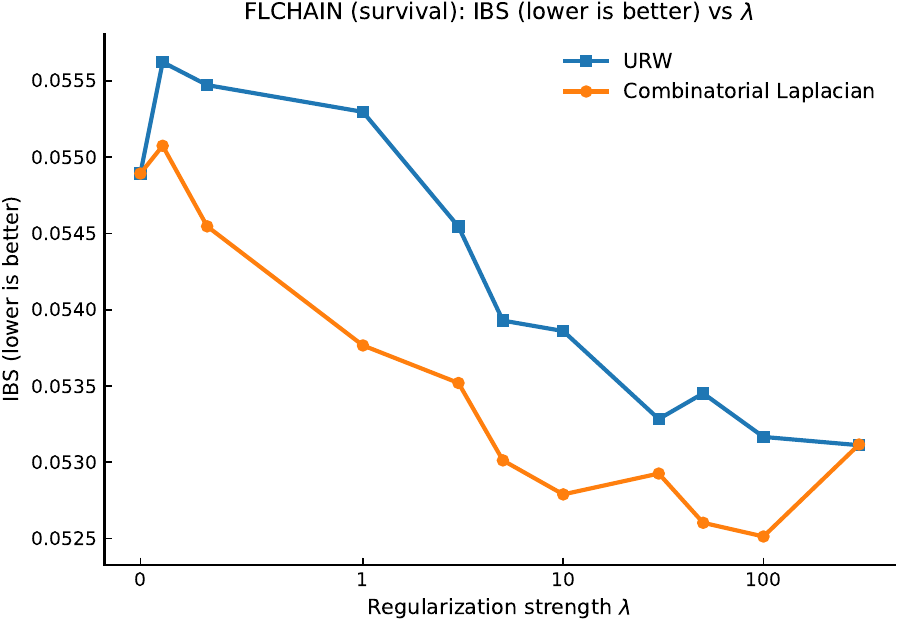} &
\includegraphics[width=\linewidth]{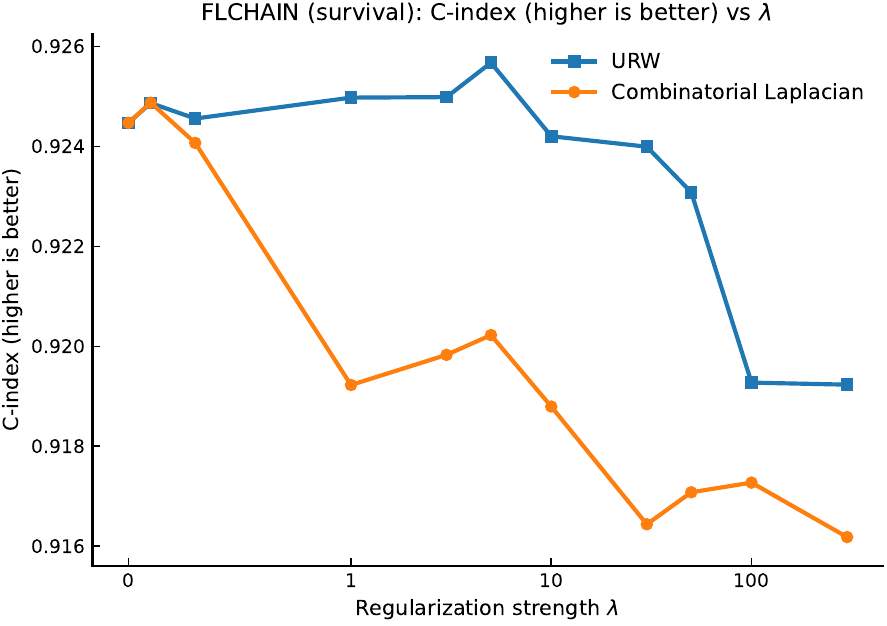} &
\includegraphics[width=\linewidth]{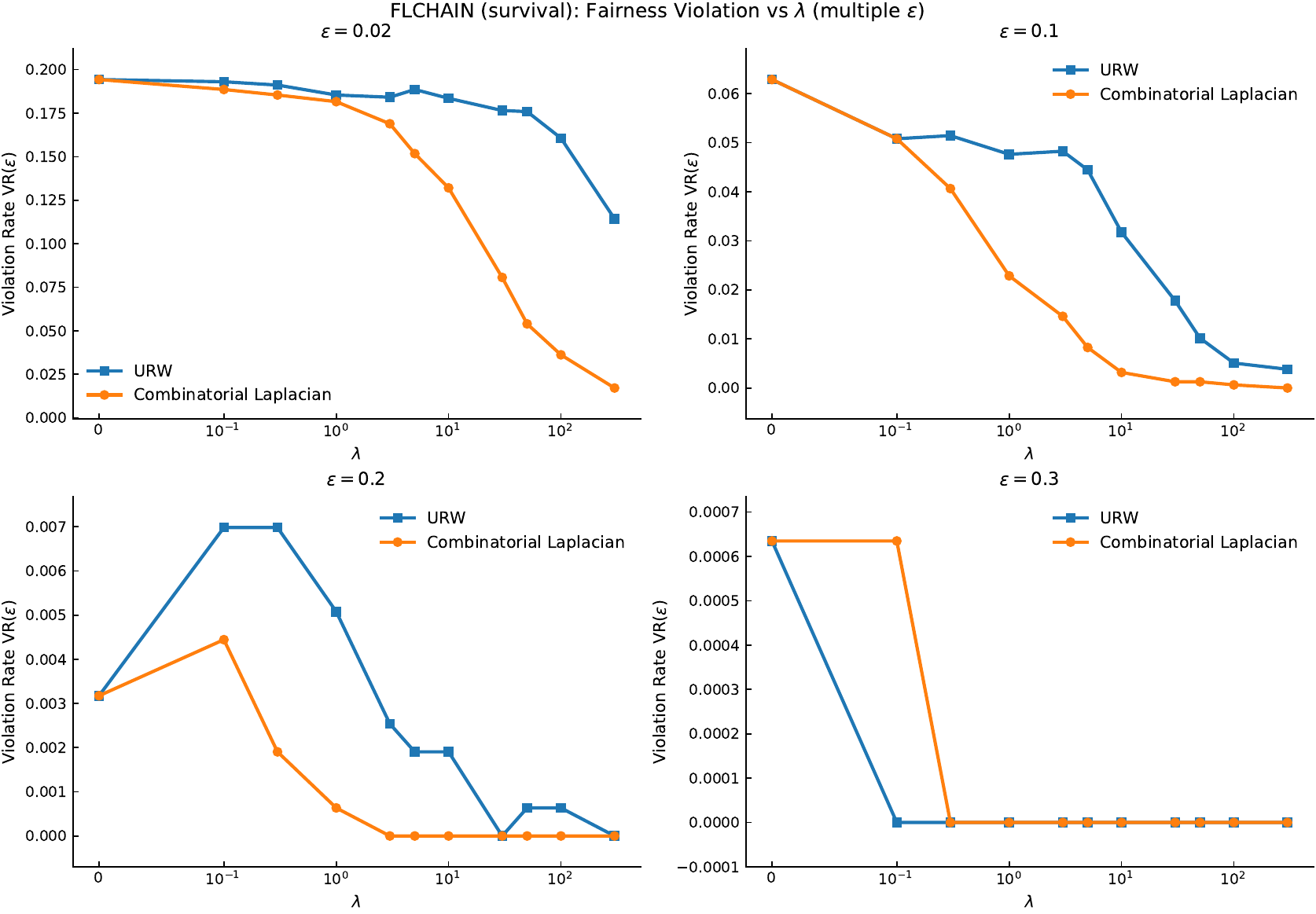} \\
\addlinespace[0.45em]

% ===================== WHAS =====================
\includegraphics[width=\linewidth]{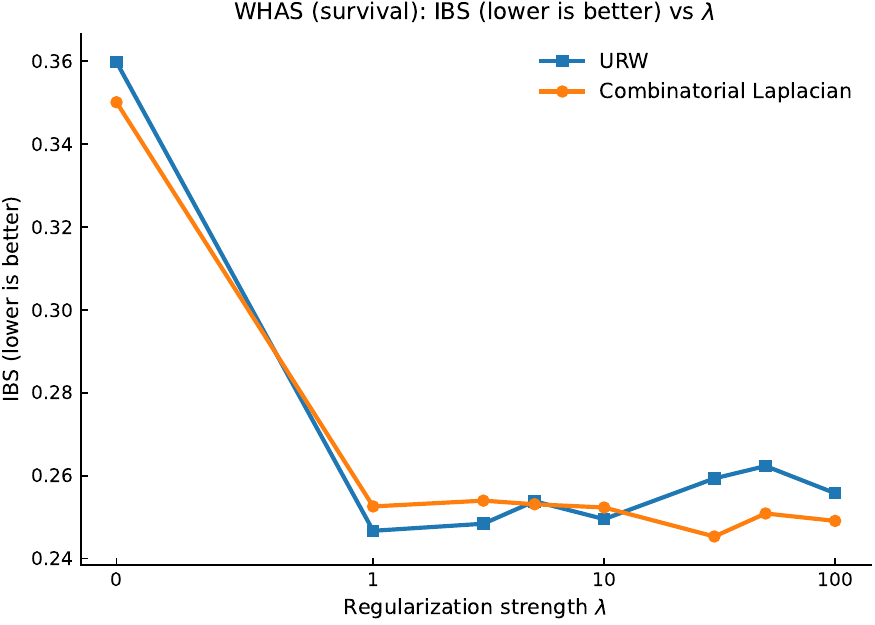} &
\includegraphics[width=\linewidth]{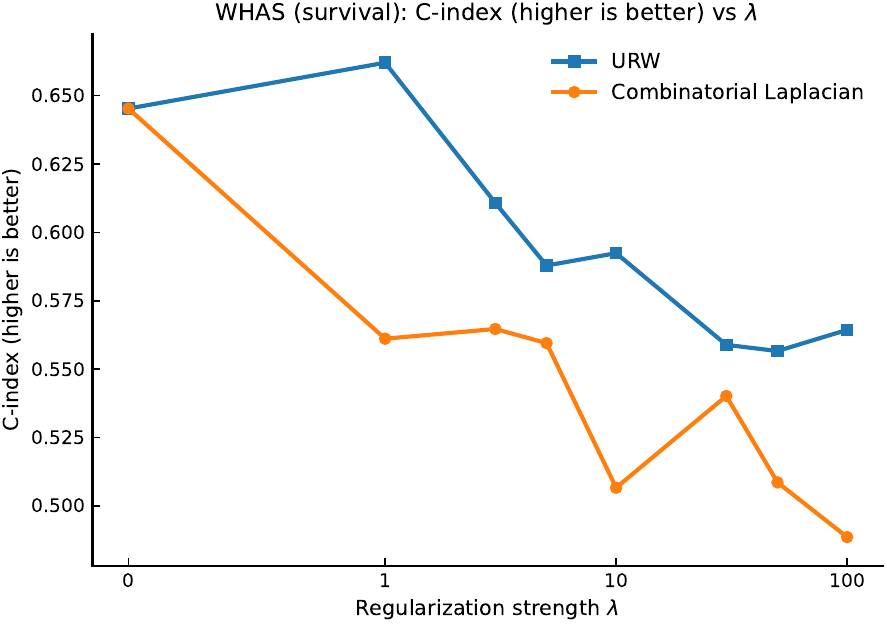} &
\includegraphics[width=\linewidth]{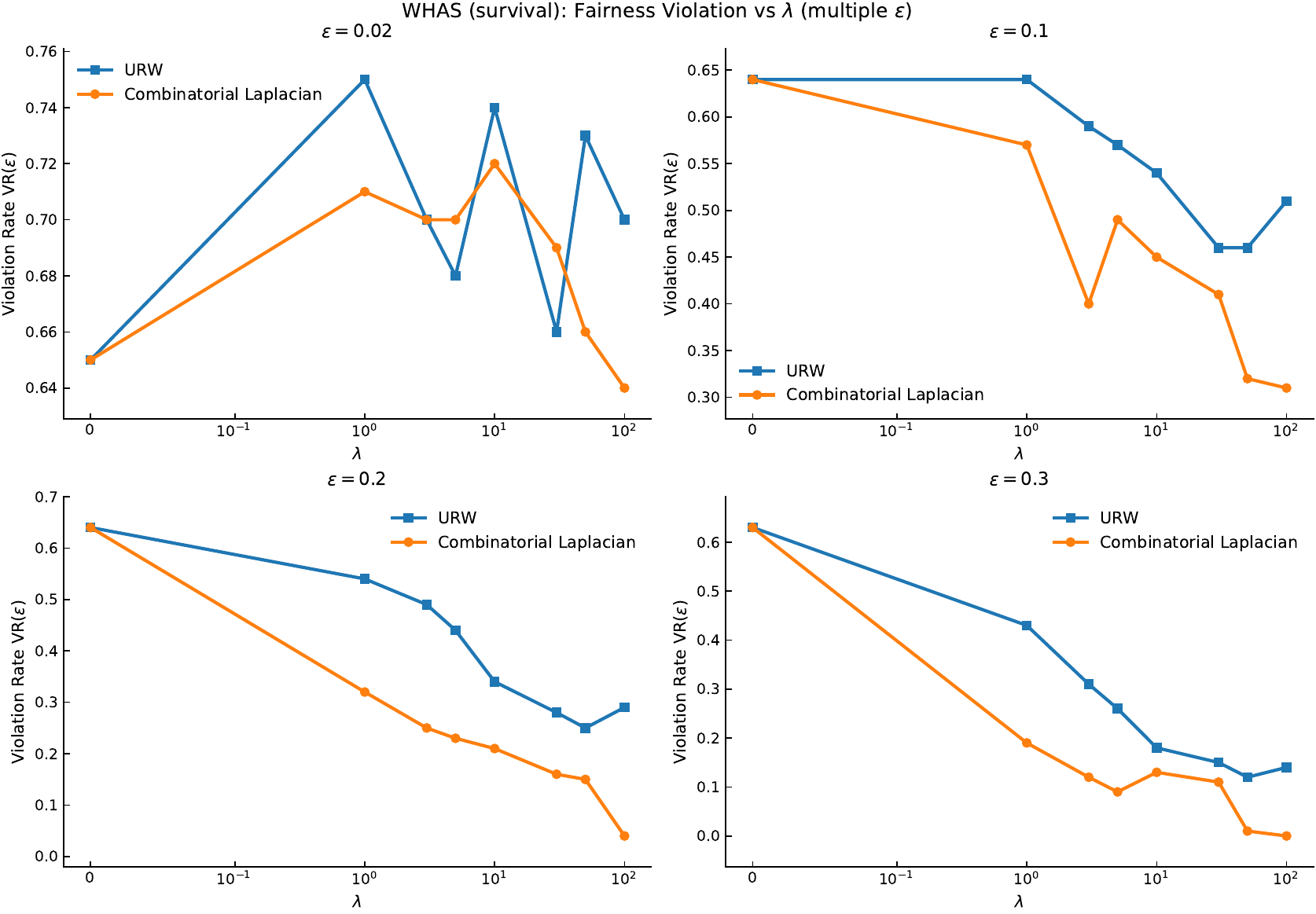} &
\makebox[\linewidth][c]{\tiny\textsc{WHAS}} &
\includegraphics[width=\linewidth]{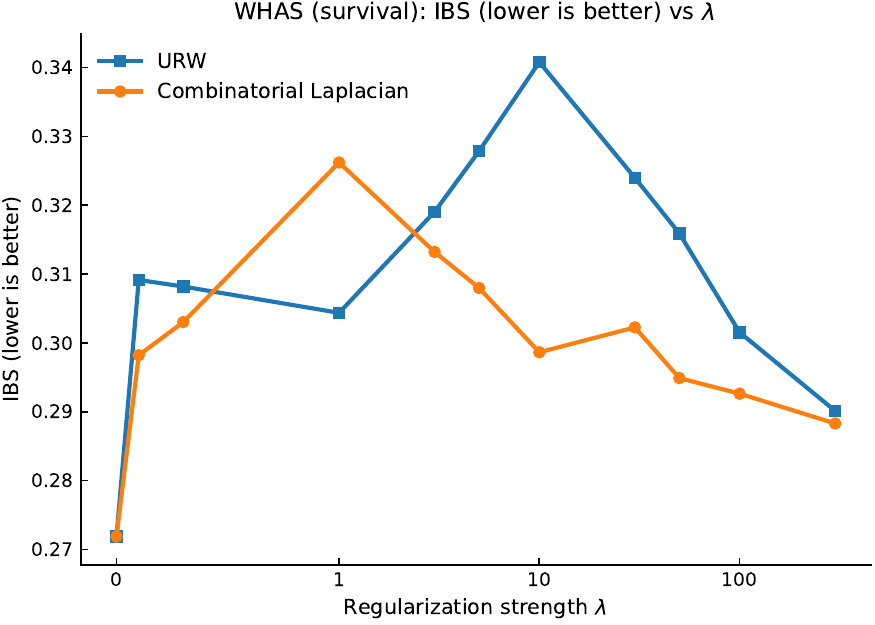} &
\includegraphics[width=\linewidth]{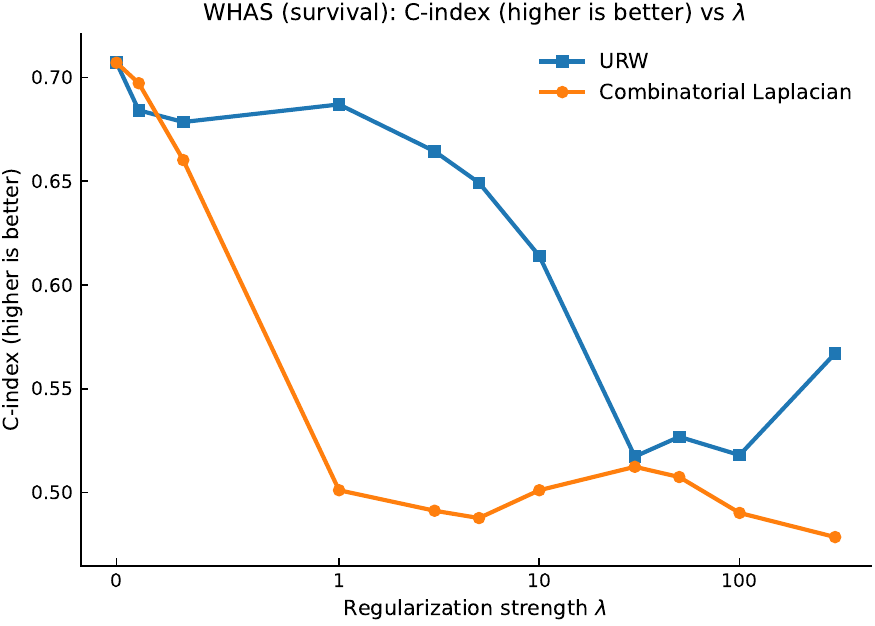} &
\includegraphics[width=\linewidth]{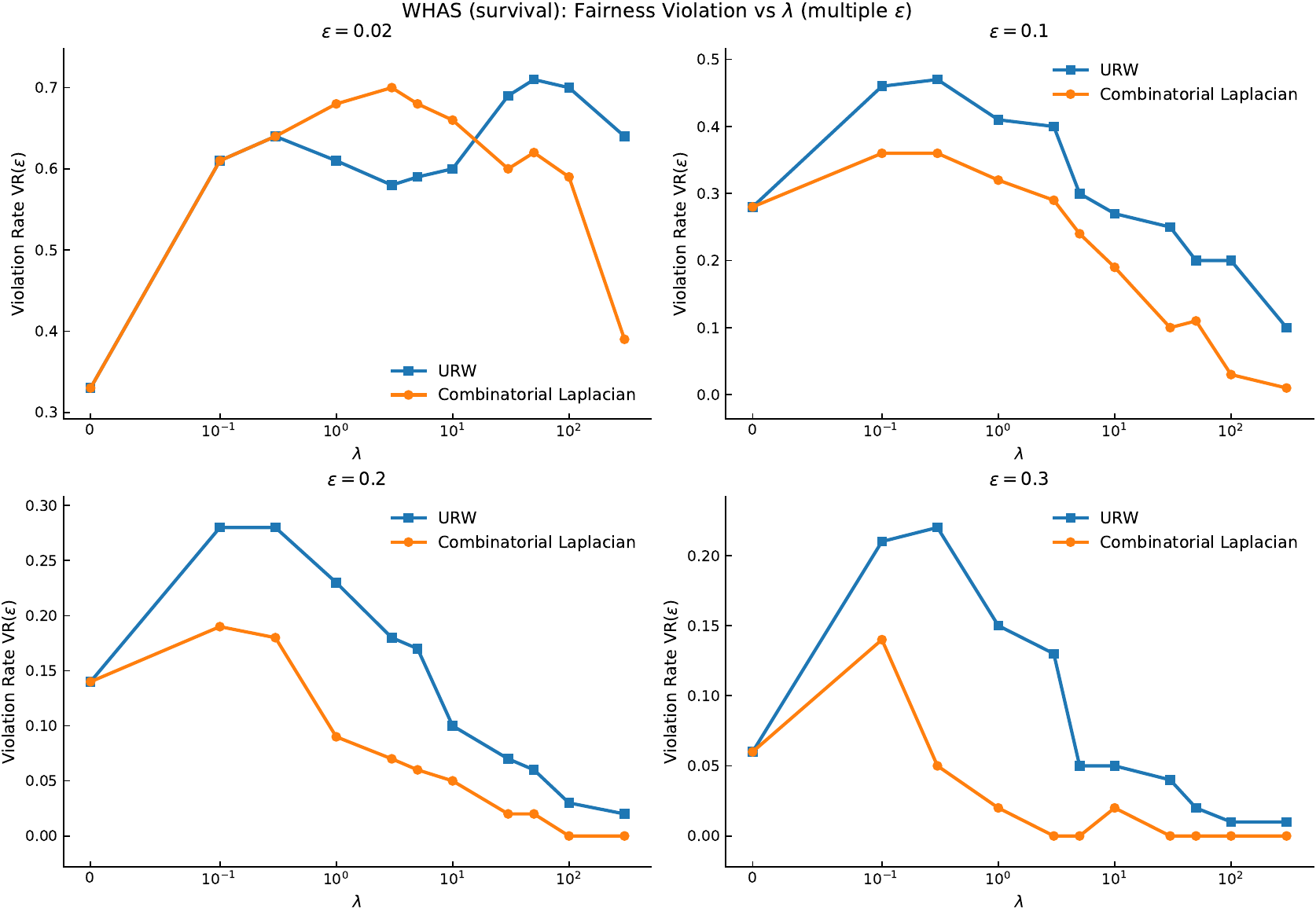} \\
\addlinespace[0.45em]

% ===================== ROSSI =====================
\includegraphics[width=\linewidth]{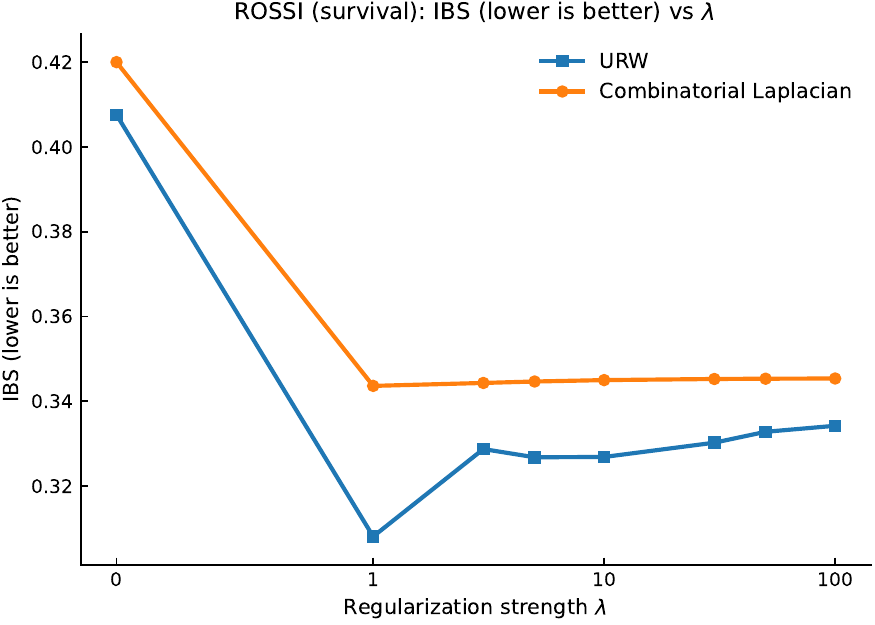} &
\includegraphics[width=\linewidth]{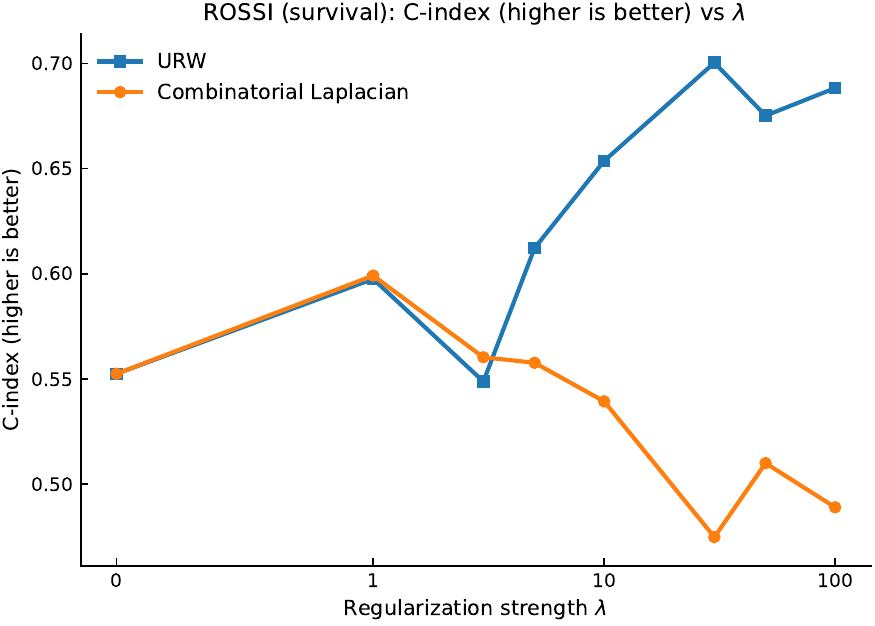} &
\includegraphics[width=\linewidth]{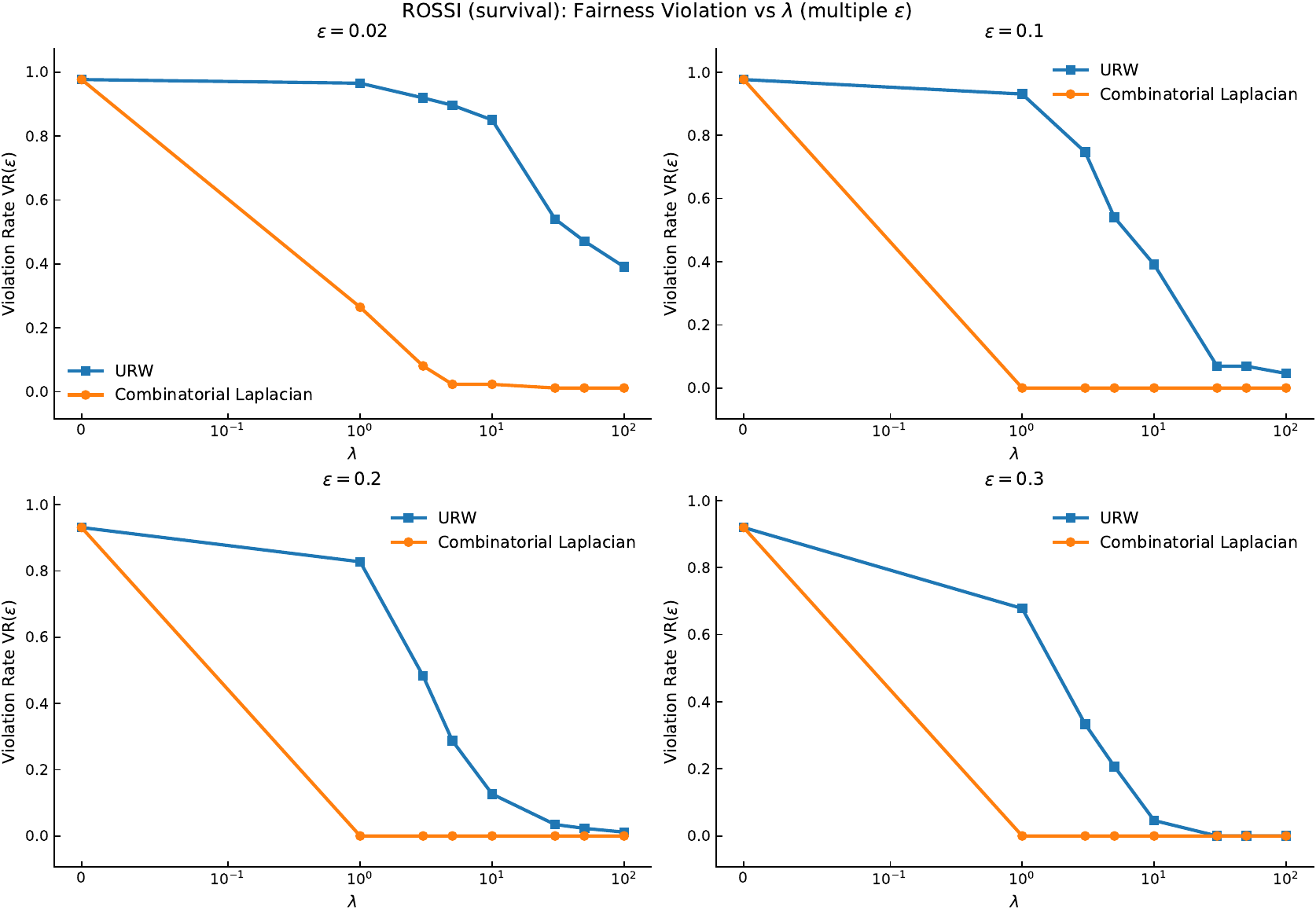} &
\makebox[\linewidth][c]{\tiny\textsc{ROSSI}} &
\includegraphics[width=\linewidth]{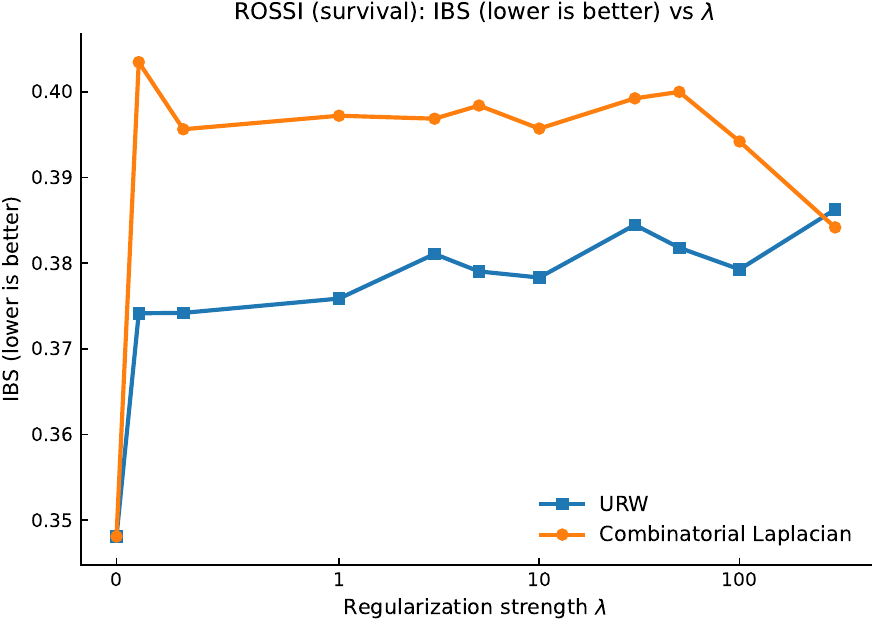} &
\includegraphics[width=\linewidth]{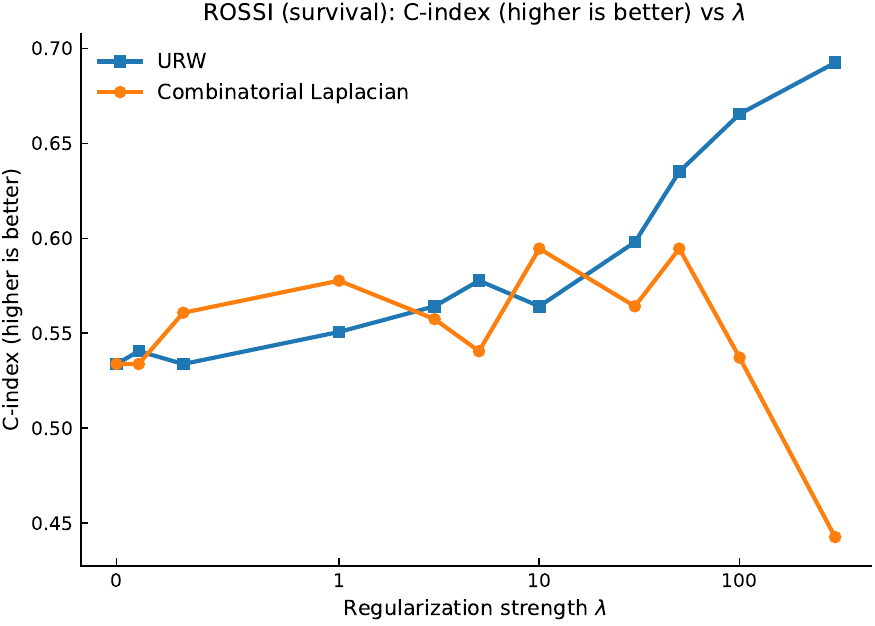} &
\includegraphics[width=\linewidth]{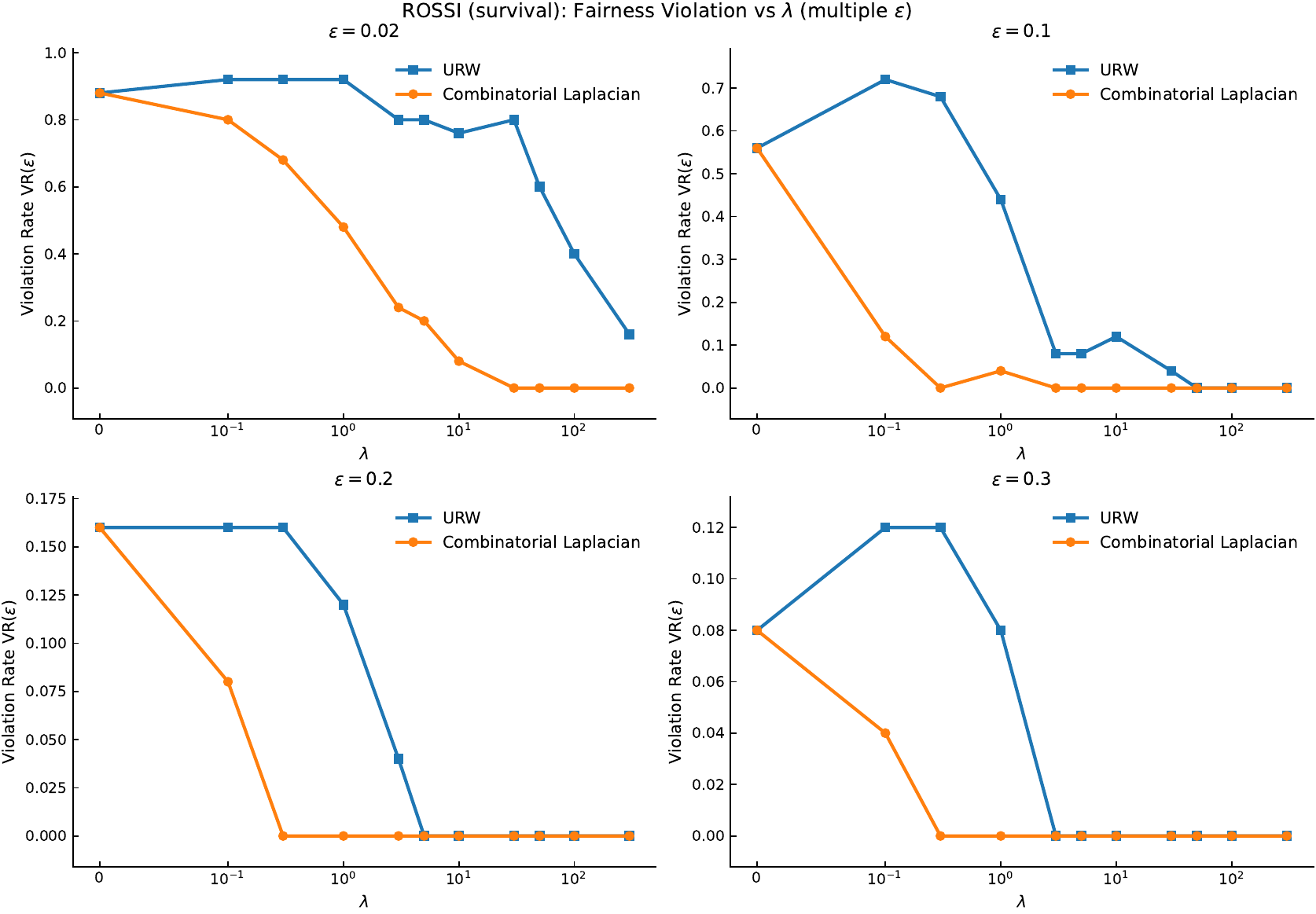} \\

\end{tabular}
}
\vspace{0.25em}
\caption{
Performance--fairness trade-offs for survival prediction.
Rows correspond to datasets; columns report IBS, C-index, and violation rate $\mathrm{VR}(\varepsilon)$
for DeepSurv (log-risk) and DeepHazard with survival-function outputs as $\lambda$ varies.
}
\label{fig:survival_tradeoff_twomodels}
\end{figure}

Consistent with the residual-based results, the independent pairwise diagnostics reported in Appendix~\ref{app:pairwise-diagnostics} also decrease under stronger fairness regularization across classification, regression, and survival tasks.
\begin{table}[h]
\centering
\begin{adjustbox}{max width=\linewidth}
\begin{tblr}{
  cells = {c},
  hline{1-2,5} = {-}{},
}
\diagbox{Metrics}{Methods}      & SenSR  & SenSeI & iFlip ($\rho=0.9$) & iFlip ($\rho=0.5$) & iFlip ($\rho=0.1$) & Comb. Laplacian & URW-FKL & URW-RKL \\
Bal-Acc $\uparrow$    
& \textbf{0.7665} & 0.7100 & 0.7563 & 0.7577 & 0.7478 & 0.7345 & 0.7367 & 0.7352 \\

VR(0.005) $\downarrow$  
& 0.5002 & 0.1941 & 0.5643 & 0.5563 & 0.5474 & 0.1804 & 0.1814 & \textbf{0.1767} \\

VR(0.001) $\downarrow$  
& 0.6740 & 0.4814 & 0.6655 & 0.6637 & 0.6517 & 0.3410 & 0.3264 & \textbf{0.3202} \\
\end{tblr}
\end{adjustbox}
\caption{
Balanced accuracy and violation rates on the \textsc{Adult} dataset. For methods with a parameter $\lambda$ (combinatorial Laplacian and URW), we report a balanced operating point selected from the $\lambda$ grid such that the resulting violation rate at $\varepsilon=0.005$ is closest to that of SenSeI.
}
\label{tab:baselines}
\end{table}

Importantly, the same overall fairness trend is observed under diagnostics that are not defined by the proposed residual. The direct pairwise statistics in Appendix~\ref{app:pairwise-diagnostics} decrease under stronger fairness regularization across classification, regression, and survival tasks, indicating that reductions in the neighborhood residual are accompanied by reductions in direct prediction discrepancies between comparable instances.

The Waller-style diagnostic provides a complementary external comparison for binary classification. For URW-FKL, mean Waller consistency increases from $0.8929$ to $0.9473$ on Adult and from $0.8797$ to $0.9904$ on COMPAS as regularization strengthens, with corresponding improvements in PCS. At the instance level, however, the two diagnostics need not rank individuals similarly: correlations are modest on Adult and close to zero for many COMPAS configurations. This reflects the distinction between hard-label neighborhood agreement in the Waller diagnostic and probability-level local consistency in the proposed formulation. Complete results and representative disagreement cases are reported in Appendix~\ref{app:waller-comparison}.

\section{Conclusion and Future Work}
\label{section_conclusion}

We studied local predictive consistency as a node-level relaxation of pairwise individual fairness. Instead of enforcing fairness only through edge-level constraints between pairs of similar individuals, the proposed formulation assigns each individual a residual that measures whether its prediction is locally anomalous relative to its neighborhood. This yields an interpretable instance-level fairness score and an associated regularized learning formulation that can be incorporated into standard learning objectives.

The local predictive-consistency formulation is formally connected to pairwise individual fairness and graph Laplacian regularization. Local pairwise Lipschitz fairness upper-bounds the instance-level residual, while the row-random-walk Laplacian regularizer serves as a node-level counterpart to edge-based Laplacian smoothing. Empirically, the resulting framework attains competitive or favorable operating points on the fairness--performance frontier across binary classification, regression, and time-to-event prediction.

\paragraph{Limitations.}
The proposed diagnostic depends on the comparison structure used to define local similarity. Consequently, its interpretation is sensitive to the representation, graph construction, graph coverage, and reference population. As discussed in Section~\ref{subsec:proposed-if-score}, isolated instances indicate insufficient
comparison coverage, the aggregated residual may attenuate opposing pairwise
discrepancies through cancellation, and the resulting score is cohort-relative
rather than an intrinsic property of an individual.

A second limitation is that we do not provide a convergence analysis for the non-convex neural-network objectives induced by the proposed regularization. Our theoretical analysis instead focuses on the relationship between local predictive consistency and pairwise individual fairness, while the empirical analysis examines the resulting behavior across tasks, graph constructions, and model configurations. Future work could investigate robust comparison-set construction, fair metric learning, extensions to more complex output spaces, and further theoretical properties of the regularized objectives.

\paragraph{Broader impact.}
The proposed formulation should be interpreted as a diagnostic of local predictive consistency rather than as a general certificate that an individual has been treated fairly. Its conclusions are conditional on the specified comparison structure, and low local inconsistency alone does not establish pairwise, group-level, or normative fairness.

Local predictive consistency therefore complements rather than replaces broader fairness analysis. A model may exhibit strong local consistency while still producing systematic disparities across demographic groups, while favorable group-level criteria such as demographic parity or equalized odds do not guarantee consistent treatment of comparable individuals. In high-stakes applications such as healthcare, insurance, income prediction, and recidivism, the proposed diagnostic should therefore be considered alongside group-level fairness criteria, domain-specific requirements, and other relevant forms of algorithmic evaluation.

\bibliography{main}
\bibliographystyle{tmlr}

\appendix
\newpage

\onecolumn

\clearpage
% \ifsupplement
\setcounter{page}{1}
\begin{center}
    {\LARGE\bfseries
    Individual Fairness via Local Predictive Consistency
    \par}

    \vspace{0.5em}

    {\large\bfseries Supplementary Material\par}
\end{center}

\section{Implementation Details}
\label{app:implementation}
\subsection{Preprocessing and Data Splits}

Across classification and regression tasks, we use a unified preprocessing pipeline. 
Missing values are removed or median-imputed, categorical variables are one-hot encoded, and numerical features are standardized using training-set statistics. 
Sensitive attributes are excluded from predictive inputs but retained for evaluation and fairness construction. 
Datasets are split into 60\% training, 20\% validation, and 20\% test sets using a fixed random seed. 
For survival datasets, due to smaller sample sizes, we use an 80\%/20\% train–test split without a separate validation set.

\subsection{Model Architecture and Optimization}

All methods share the same MLP backbone to ensure comparability. The network uses ReLU activations with hidden width 256 and one hidden layer, unless otherwise specified. For classification, we optimize binary cross-entropy on logits. For regression, we use mean squared error on log-transformed targets. Specifically, for strictly positive outcomes, targets are clipped below at a small positive constant and transformed via the natural logarithm during training; predictive performance is then evaluated on the original scale after exponentiating model outputs. For survival analysis, we optimize either the Cox partial likelihood on predicted risk scores or survival-function objectives for models that directly predict a survival function.

\subsection{Fair Representation and Graph Construction}
To ensure consistent comparisons across methods and tasks, predictive models are trained on fair-projected standardized non-sensitive features. Similarity graphs for fairness regularization and evaluation are constructed from the corresponding metric-transformed fair-projected representations.

We first standardize the predictive input features using the mean and standard deviation computed from the training split. Sensitive attributes are not used as predictive inputs, but they are used to construct the fairness projection and retained for evaluation. To construct the projection, we compute group prototype vectors in the standardized feature space, where each prototype is the mean standardized feature vector of training instances belonging to a sensitive group. These prototypes summarize the main directions along which feature distributions differ across sensitive groups. We then apply singular value decomposition (SVD) to the matrix of group prototypes to estimate a low-rank sensitive subspace, and obtain the fair projection by projecting standardized features onto the orthogonal complement of this subspace.

After fitting the scaler, sensitive-subspace basis, and projection matrix on the training split, we apply the resulting fair projection to all splits for graph construction. We then fit the metric transformation on the fair-projected training representations and apply it unchanged to validation and test representations. Thus, all reported experiments construct similarity graphs in the same metric-transformed fair-projected representation space for both the URW regularizer used in the local predictive-consistency formulation and the combinatorial Laplacian baseline.

During training, the averaging operator $P$ is constructed using the training split. For held-out evaluation, the fitted preprocessing pipeline, fair projection, and metric transformation are applied unchanged to the corresponding split, and the evaluation graph is constructed within that split using the same graph-construction parameters as in training. This ensures that validation and test representations are not used to fit the preprocessing, projection, or metric transformation.

Given metric-transformed fair-projected features $\{\mathbf{z}_i\}$, we define edge weights using a radial basis function (RBF) kernel
\[
W_{ij}=\exp\!\left(-\gamma \lVert \mathbf{z}_i-\mathbf{z}_j\rVert_2^2\right),
\]
and sparsify the graph by retaining only local edges:
\[
W_{ij}=0 \quad \text{whenever} \quad \lVert \mathbf{z}_i-\mathbf{z}_j\rVert_2>\tau .
\]
Self-loops are included so that the row-stochastic averaging operator is well-defined for every instance. The averaging matrix is then obtained by row normalization, $P=D^{-1}W$, where $D_{ii}=\sum_j W_{ij}$.

For the main experiments, we use a fixed distance threshold $\tau=5$ in the metric-transformed fair-projected representation space, with $\gamma=10^{-3}$ for the RBF kernel. We do not enforce a minimum-neighbor requirement. Instead, we report graph-coverage diagnostics, including the isolated-node rate and edge density, to characterize the resulting comparison graph. These diagnostics are used as an empirical check on whether the graph provides non-trivial local comparison sets while preserving locality, rather than as an automatic threshold-selection rule.

A node whose only neighbor is itself has a residual of zero by construction, but this should not be interpreted as evidence of fairness. Rather, it indicates that the graph does not provide a meaningful local comparison set for that instance. We therefore treat isolated nodes as graph-coverage limitations rather than as automatically fair cases. The same graph construction protocol is used for both URW and the combinatorial Laplacian baseline, so this issue is shared by both graph-based methods rather than being specific to the residual-based fairness score.

Additional sensitivity results with respect to $\tau$ are reported in Appendix~\ref{app:threshold-sensitivity}.

\subsection{Baseline Configurations}

All baselines are implemented following their original papers while using the same preprocessing and backbone. 

For URW, we tune the regularization parameter $\lambda$ over the grid
\[
\lambda \in \{0, 0.1, 0.3, 0.5, 1, 3, 5, 10, 30, 50, 100, 300, 500\}.
\]

For iFlipper we evaluate label-adjustment strengths $\rho\in\{0.1,0.5,0.9\}$.  
SenSR and SenSeI are trained using the default hyperparameters suggested in their respective implementations.

For combinatorial Laplacian regularization, we tune the regularization weight on the same $\lambda$ grid as URW to ensure a fair comparison.

\section{Independent Pairwise Fairness Diagnostics}
\label{app:pairwise-diagnostics}

This appendix provides the complete definitions of the direct pairwise diagnostics summarized in \cref{par:independent-pairwise-diagnostics}. These diagnostics are computed directly over the edges of the held-out test comparison graph and do not use the proposed neighborhood-aggregation residual. They therefore provide a complementary edge-level view of local predictive consistency on the same comparison graph.

Let
\[
\mathcal{E}_{\mathrm{test}}
=
\left\{
(i,j): i<j,\; W_{ij}>0
\right\}
\]
denote the set of unique non-self undirected edges in the test graph, and let
$s_i$ denote the scalar prediction associated with instance $i$. For each
comparison edge $(i,j)\in\mathcal{E}_{\mathrm{test}}$, we define
\[
d_{ij}=|s_i-s_j|.
\]
We summarize the resulting pairwise discrepancies using
\begin{align}
\mathrm{EdgeMeanAbs}
&=
\frac{1}{|\mathcal{E}_{\mathrm{test}}|}
\sum_{(i,j)\in\mathcal{E}_{\mathrm{test}}}
d_{ij},
\\
\mathrm{EdgeMedian}
&=
\operatorname{median}_{(i,j)\in\mathcal{E}_{\mathrm{test}}}
d_{ij},
\\
\mathrm{EdgeRMS}
&=
\left(
\frac{1}{|\mathcal{E}_{\mathrm{test}}|}
\sum_{(i,j)\in\mathcal{E}_{\mathrm{test}}}
d_{ij}^{2}
\right)^{1/2},
\\
\mathrm{EdgeMax}
&=
\max_{(i,j)\in\mathcal{E}_{\mathrm{test}}}
d_{ij}.
\end{align}

We also compute node-level pairwise summaries. For each covered node
$i\in\mathcal{C}_{\mathrm{test}}$, with
\[
\mathcal{N}(i)
=
\{j\neq i:W_{ij}>0\},
\]
we define
\begin{align}
D_i^{\mathrm{mean}}
&=
\frac{1}{|\mathcal{N}(i)|}
\sum_{j\in\mathcal{N}(i)}
|s_i-s_j|,
\\
D_i^{\mathrm{RMS}}
&=
\left(
\frac{1}{|\mathcal{N}(i)|}
\sum_{j\in\mathcal{N}(i)}
(s_i-s_j)^2
\right)^{1/2},
\\
D_i^{\max}
&=
\max_{j\in\mathcal{N}(i)}
|s_i-s_j|.
\end{align}
Among these quantities, \textsc{EdgeMeanAbs} is our primary pairwise diagnostic.
It measures the average absolute prediction difference between pairs of instances
that are considered comparable by the test graph. Thus, a smaller \textsc{EdgeMeanAbs} indicates that comparable individuals receive more similar
predictions on average.

Unlike the signed neighborhood residual, these quantities are based on absolute or squared pairwise discrepancies and therefore cannot be reduced by cancellation between neighbors whose predictions lie on opposite sides of the focal prediction. They consequently provide a complementary diagnostic of whether reductions in the proposed local-consistency residual are accompanied by reductions in direct pairwise prediction differences.

Isolated nodes are excluded from the node-level pairwise summaries because they
have no non-self comparison set.

\paragraph{Task-specific prediction scores.}
For classification, $s_i$ is the predicted probability of the positive class.
For regression, we use the predicted log-scale outcome, matching the scale on
which the regression models are trained and regularized. For the DeepSurv
formulation, $s_i$ is the predicted log-risk score $\eta_i$. For the
discrete-time survival-curve formulation, we use the scalar risk
summary $-\log S_i(t^\ast)$ at the evaluation horizon $t^\ast$. The latter
should therefore be interpreted as a scalar pairwise risk diagnostic rather
than as a distance between entire predicted survival curves.

All pairwise diagnostics are computed post hoc using the same test comparison
graph and the same evaluated test instances as the corresponding main
experiment. Self-loops are excluded, and no model retraining is required.

\paragraph{Results.}
Table~\ref{tab:pairwise_fairness_summary} summarizes the change in EdgeMeanAbs and
EdgeRMS between the unregularized model ($\lambda=0$) and the largest
regularization strength evaluated for each formulation. The purpose of this
comparison is to characterize the response of the independent pairwise
diagnostics to increasingly strong fairness regularization; it is not intended
as a model-selection criterion.

\begin{table}[htbp]
\centering
\caption{Independent pairwise fairness diagnostics on the test comparison
graph. Values show EdgeMeanAbs and EdgeRMS for the unregularized model and the
largest evaluated fairness-regularization strength. The percentage reduction
is relative to $\lambda=0$. Lower values indicate greater local pairwise
predictive consistency.}
\label{tab:pairwise_fairness_summary}
\small
\begin{adjustbox}{max width=\linewidth}
\begin{tabular}{lllrrrrr}
\toprule
Task & Dataset & Formulation &
$\lambda$ &
EdgeMeanAbs$_0$ $\rightarrow$ EdgeMeanAbs$_\lambda$ &
Reduction &
EdgeRMS$_0$ $\rightarrow$ EdgeRMS$_\lambda$ &
Reduction \\
\midrule

Classification
& Adult
& URW-FKL
& 500
& $0.2622 \rightarrow 0.1966$
& 25.0\%
& $0.3615 \rightarrow 0.2700$
& 25.3\% \\

Classification
& Adult
& URW-RKL
& 500
& $0.2622 \rightarrow 0.1969$
& 24.9\%
& $0.3615 \rightarrow 0.2702$
& 25.2\% \\

Classification
& COMPAS
& URW-FKL
& 500
& $0.2799 \rightarrow 0.0886$
& 68.3\%
& $0.3492 \rightarrow 0.1164$
& 66.7\% \\

Classification
& COMPAS
& URW-RKL
& 500
& $0.2799 \rightarrow 0.0884$
& 68.4\%
& $0.3492 \rightarrow 0.1164$
& 66.7\% \\

\midrule

Regression
& Employee Salary
& URW
& 500
& $0.0827 \rightarrow 0.0072$
& 91.3\%
& $0.1271 \rightarrow 0.0193$
& 84.8\% \\

Regression
& Insurance
& URW
& 500
& $0.7965 \rightarrow 0.0182$
& 97.7\%
& $1.0733 \rightarrow 0.0307$
& 97.1\% \\

\midrule

Survival
& FLCHAIN
& Log-risk
& 200
& $6.2599 \rightarrow 0.0127$
& 99.8\%
& $8.1530 \rightarrow 0.0272$
& 99.7\% \\

Survival
& WHAS
& Log-risk
& 100
& $17.9916 \rightarrow 0.2743$
& 98.5\%
& $22.4025 \rightarrow 0.3585$
& 98.4\% \\

Survival
& ROSSI
& Log-risk
& 100
& $5.7462 \rightarrow 0.0235$
& 99.6\%
& $8.0277 \rightarrow 0.0578$
& 99.3\% \\

Survival
& FLCHAIN
& Survival curve
& 300
& $19.3441 \rightarrow 4.1164$
& 78.7\%
& $122.0174 \rightarrow 28.7744$
& 76.4\% \\

Survival
& WHAS
& Survival curve
& 300
& $444.3582 \rightarrow 171.9090$
& 61.3\%
& $701.1402 \rightarrow 236.0681$
& 66.3\% \\

Survival
& ROSSI
& Survival curve
& 300
& $5.7984 \rightarrow 0.3985$
& 93.1\%
& $8.1298 \rightarrow 0.5305$
& 93.5\% \\

\bottomrule
\end{tabular}
\end{adjustbox}
\end{table}

The independent pairwise diagnostics exhibit the same overall pattern as the
residual-based fairness measure. In classification, both URW-FKL and URW-RKL
reduce EdgeMeanAbs and EdgeRMS on Adult and COMPAS, with particularly large reductions on COMPAS. The effect is also pronounced in regression: relative to the unregularized model, EdgeMeanAbs decreases by approximately 91\% on Employee Salary and 98\% on Insurance at the strongest evaluated regularization setting.

The survival results show similarly substantial reductions. For the DeepSurv log risk formulation, EdgeMeanAbs decreases by more than 98\% on FLCHAIN, WHAS, and ROSSI. The scalar survival-curve risk diagnostic also decreases substantially across all three datasets, although its numerical scale is formulation dependent and should not be compared directly with the classification, regression, or log-risk values.

These results provide a complementary empirical validation of the proposed regularization objective: improvements in the neighborhood-residual measure are accompanied by reductions in direct prediction differences between instances connected by the comparison graph. In particular, the reductions in EdgeMeanAbs and EdgeRMS cannot be explained solely by signed-neighbor cancellation in the proposed residual.

\section{Sensitivity to Neighborhood Threshold}
\label{app:threshold-sensitivity}

\subsection{Coverage-Conditional Fairness Evaluation}
Because a test instance with no non-self neighbor does not have a meaningful
local comparison set, we distinguish graph coverage from the fairness
statistics reported over the full test cohort. Let
\[
\mathcal{C}_{\mathrm{test}}
=
\left\{
i\in\mathcal{I}_{\mathrm{test}}
:
\sum_{j\neq i}\mathbb{I}(W_{ij}>0)>0
\right\}
\]
denote the set of covered test instances, i.e., instances with at least one
non-self comparison neighbor. We define graph coverage as
\[
\mathrm{Coverage}
=
\frac{|\mathcal{C}_{\mathrm{test}}|}
     {|\mathcal{I}_{\mathrm{test}}|}.
\]

The main experiments retain the original violation ratio computed over the full test set. As a complementary coverage-aware diagnostic, we additionally define
\begin{equation}
\mathrm{VR}_{\mathrm{cov}}(\epsilon)
=
\frac{1}{|\mathcal{C}_{\mathrm{test}}|}
\sum_{i\in\mathcal{C}_{\mathrm{test}}}
\mathbf{1}
\left(
\mathrm{IF}_i(\theta;P)>\epsilon
\right),
\label{eq:vr_covered}
\end{equation}
using the same tolerance $\epsilon$ as the corresponding full-test-set VR.
Thus, isolated nodes remain part of the original full-test evaluation but are
excluded from this conditional summary; their prevalence is instead reflected
by the graph-coverage and isolated-node statistics reported below.

\begin{table}[htbp]
\centering
\caption{Graph coverage at the main operating threshold $\tau=5$.
Coverage denotes the proportion of evaluation instances with at least one non-self comparison neighbor.}
\label{tab:main_graph_coverage}
\small
\begin{tabular}{llrr}
\toprule
Task & Dataset / formulation & Coverage (\%) & Isolated (\%) \\
\midrule
Classification
& Adult                         & 98.51 & 1.49 \\
& COMPAS                        & 99.72 & 0.28 \\
\midrule
Regression
& Employee Salary               & 99.80 & 0.20 \\
& Insurance                     & 100.00 & 0.00 \\
\midrule
Survival
& FLCHAIN -- log-risk           & 90.79 & 9.21 \\
& FLCHAIN -- survival curve     & 90.79 & 9.21 \\
& WHAS -- log-risk              & 75.00 & 25.00 \\
& WHAS -- survival curve        & 73.00 & 27.00 \\
& ROSSI -- log-risk             & 98.85 & 1.15 \\
& ROSSI -- survival curve       & 96.00 & 4.00 \\
\bottomrule
\end{tabular}
\end{table}

Table~\ref{tab:main_graph_coverage} shows that coverage is high for most datasets. The main exception is WHAS, where coverage is $75\%$ for the
log-risk formulation and $73\%$ for the survival-curve formulation, making coverage-conditional reporting particularly relevant for this dataset.

\subsection{Threshold Sensitivity}
We further examine the sensitivity of the results to the threshold $\tau$ used to define local edges. While the main experiments use the fixed threshold $\tau=5$, we construct a grid of alternative thresholds from empirical distance quantiles in the metric-transformed fair-projected representation space to assess sensitivity to the graph scale. Specifically, we consider thresholds corresponding to the 5th, 10th, 15th, 20th, and 25th percentiles of pairwise distances on the training set. This procedure does not enforce a minimum-neighbor constraint. Instead, it evaluates how predictive performance and local inconsistency vary as the comparison graph becomes progressively denser. We report the non-self edge density and isolated-node rate as graph-coverage diagnostics, where edge density characterizes the overall graph scale and the isolated-node rate measures the extent to which instances lack meaningful non-self comparison sets.

Tables~\ref{tab:tau_sensitivity_adult}, \ref{tab:tau_sensitivity_employee}, and~\ref{tab:tau_sensitivity_whas} report sensitivity results for representative binary classification, regression, and survival prediction tasks. As expected from the quantile-based construction, increasing $\tau$ produces denser graphs across all tasks. On \textsc{Adult}, the non-self edge density increases from 0.05 to 0.25, while the isolated-node rate decreases from 0.0294 to 0.0044. On \textsc{Employee Salary}, the edge density similarly increases from 0.05 to 0.25, and the isolated-node rate remains zero across all tested thresholds. On \textsc{WHAS}, the edge density increases from 0.05 to 0.25, while the isolated-node rate decreases from 0.1000 to 0.0300. These results show that larger thresholds improve graph coverage, but at the cost of using broader local comparison sets.

The predictive metrics remain reasonably stable across moderate threshold choices, although the effect of graph scale differs by task. On \textsc{Adult}, balanced accuracy and AUC vary only modestly across thresholds for both Forward-KL and Reverse-KL aggregation. On \textsc{Employee Salary}, predictive performance remains strong across the threshold range, although larger neighborhoods are associated with some degradation in RMSE and $R^2$. On \textsc{WHAS}, the sparsest tested graph yields the highest C-index and lowest IBS, while denser graphs are associated with weaker discrimination and higher prediction error. This pattern is consistent with the small-sample survival setting, where overly broad neighborhoods may smooth risk scores across less comparable individuals.

Across the three tasks, the local inconsistency score varies with $\tau$ because changing the threshold changes the local comparison structure itself. In general, wider neighborhoods induce denser graphs and compare each instance against a broader, potentially more heterogeneous set of neighbors. Therefore, changes in the IF score should not be interpreted as numerical instability; rather, they reflect the sensitivity of local predictive consistency to the chosen scale of comparison. These results support the use of graph-coverage diagnostics when selecting $\tau$: very small thresholds may provide insufficient local coverage, whereas overly large thresholds may weaken the locality of the fairness comparison and, in some settings, degrade predictive performance.

\begin{table}[t]
\centering
\caption{
Sensitivity to the neighborhood threshold on the \textsc{Adult} binary classification task. Thresholds are selected from empirical distance quantiles in the metric-transformed fair-projected training representation. Graph density and
isolated rate are computed on the corresponding training comparison graph, whereas balanced accuracy, AUC, and IF mean are evaluated on the held-out
test set.
}
\label{tab:tau_sensitivity_adult}
\begin{adjustbox}{max width=\linewidth}
\begin{tabular}{lccccccc}
\toprule
Aggregation & Threshold & $\tau$ & Edge density & Isolated rate & Bal. Acc. & AUC & IF mean \\
\midrule
Forward-KL & $q_5$    & 4.7482 & 0.05 & 0.0294 & 0.7567 & 0.8793 & 0.1521 \\
Forward-KL & $q_{10}$ & 5.5980 & 0.10 & 0.0168 & 0.7565 & 0.8782 & 0.1907 \\
Forward-KL & $q_{15}$ & 6.0279 & 0.15 & 0.0114 & 0.7507 & 0.8782 & 0.2057 \\
Forward-KL & $q_{20}$ & 6.3947 & 0.20 & 0.0078 & 0.7496 & 0.8771 & 0.2136 \\
Forward-KL & $q_{25}$ & 6.7516 & 0.25 & 0.0044 & 0.7485 & 0.8770 & 0.2230 \\
\midrule
Reverse-KL & $q_5$    & 4.7482 & 0.05 & 0.0294 & 0.7538 & 0.8780 & 0.2545 \\
Reverse-KL & $q_{10}$ & 5.5980 & 0.10 & 0.0168 & 0.7522 & 0.8743 & 0.3404 \\
Reverse-KL & $q_{15}$ & 6.0279 & 0.15 & 0.0114 & 0.7457 & 0.8751 & 0.3981 \\
Reverse-KL & $q_{20}$ & 6.3947 & 0.20 & 0.0078 & 0.7519 & 0.8744 & 0.4395 \\
Reverse-KL & $q_{25}$ & 6.7516 & 0.25 & 0.0044 & 0.7522 & 0.8740 & 0.4619 \\
\bottomrule
\end{tabular}
\end{adjustbox}
\end{table}

\begin{table}[t]
\centering
\caption{
Sensitivity to the neighborhood threshold on the \textsc{Employee Salary}
regression task. Thresholds are selected from empirical distance quantiles in
the metric-transformed fair-projected training representation. Graph density
and isolated rate are computed on the corresponding training comparison graph
using non-self edges, whereas RMSE, $R^2$, and IF mean are evaluated on the
held-out test set.
}
\label{tab:tau_sensitivity_employee}
\begin{adjustbox}{max width=\linewidth}
\begin{tabular}{ccccccc}
\toprule
Threshold & $\tau$ & Edge density & Isolated rate & RMSE & $R^2$ & IF mean \\
\midrule
$q_5$    & 3.4490 & 0.05 & 0.0000 & 5370.50 & 0.9863 & 0.0758 \\
$q_{10}$ & 3.9358 & 0.10 & 0.0000 & 5485.42 & 0.9857 & 0.0966 \\
$q_{15}$ & 4.3195 & 0.15 & 0.0000 & 5842.11 & 0.9837 & 0.1264 \\
$q_{20}$ & 4.5652 & 0.20 & 0.0000 & 6125.28 & 0.9821 & 0.1465 \\
$q_{25}$ & 4.7947 & 0.25 & 0.0000 & 6673.61 & 0.9788 & 0.1756 \\
\bottomrule
\end{tabular}
\end{adjustbox}
\end{table}

\begin{table}[t]
\centering
\caption{
Sensitivity to the neighborhood threshold on the \textsc{WHAS} survival
prediction task. Thresholds are selected from empirical distance quantiles in
the metric-transformed fair-projected training representation. Graph density
and isolated rate are computed on the corresponding training comparison graph
using non-self edges, whereas C-index, IBS, and IF mean are evaluated on the
held-out test set.
}
\label{tab:tau_sensitivity_whas}
\begin{adjustbox}{max width=\linewidth}
\begin{tabular}{ccccccc}
\toprule
Threshold & $\tau$ & Edge density & Isolated rate & C-index & IBS & IF mean \\
\midrule
$q_5$    & 2.4460 & 0.05 & 0.1000 & 0.7236 & 0.1741 & 0.6136 \\
$q_{10}$ & 2.7953 & 0.10 & 0.0750 & 0.6926 & 0.1752 & 0.6413 \\
$q_{15}$ & 3.0374 & 0.15 & 0.0675 & 0.6890 & 0.1953 & 0.7414 \\
$q_{20}$ & 3.2330 & 0.20 & 0.0375 & 0.6704 & 0.2049 & 0.6745 \\
$q_{25}$ & 3.4078 & 0.25 & 0.0300 & 0.6817 & 0.2007 & 0.7243 \\
\bottomrule
\end{tabular}
\end{adjustbox}
\end{table}

\section{A Controlled Experiment to Investigate Cancellation Behavior}
\label{app:cancellation-diagnostic}
We consider a small controlled diagnostic in which the focal prediction and its neighboring
predictions are specified directly. For simplicity, all neighbors receive
equal weight. We compare the proposed aggregated residual
\[
\left|
f_i-\sum_j P_{ij}f_j
\right|
\]
with three natural node-level aggregations of pairwise discrepancies:
\[
\sum_j P_{ij}|f_i-f_j|,
\qquad
\sqrt{\sum_j P_{ij}(f_i-f_j)^2},
\qquad
\max_j |f_i-f_j|.
\]

\begin{table}[htbp]
\centering
\caption{Controlled examples illustrating the distinction between the proposed
aggregated residual and node-level aggregations of pairwise discrepancies.
Uniform neighbor weights are used in all cases.}
\label{tab:cancellation-diagnostic}
\small
\begin{adjustbox}{max width=\textwidth}
\begin{tabular}{lcccccc}
\toprule
Case & Focal & Neighbor predictions & Residual & MeanAbs & RMS & Max \\
\midrule
Perfectly consistent
& 0.5 & $(0.5,0.5)$
& 0.000 & 0.000 & 0.000 & 0.000 \\

One-sided shift
& 0.5 & $(0.7,0.7)$
& 0.200 & 0.200 & 0.200 & 0.200 \\

Symmetric cancellation
& 0.5 & $(0.0,1.0)$
& 0.000 & 0.500 & 0.500 & 0.500 \\

Partial cancellation
& 0.5 & $(0.2,0.9)$
& 0.050 & 0.350 & 0.354 & 0.400 \\

Single extreme neighbor
& 0.5 & $(0.5,0.5,1.0)$
& 0.167 & 0.167 & 0.289 & 0.500 \\
\bottomrule
\end{tabular}
\end{adjustbox}
\end{table}

The examples illustrate that the proposed residual and the pairwise
aggregations measure related but distinct notions of local inconsistency.
When neighboring predictions shift in the same direction relative to the
focal prediction, as in the one-sided-shift example, the residual and the
pairwise measures agree. By contrast, when neighbor deviations occur in
opposite directions, they can partially or completely cancel in the weighted
local reference. The symmetric-cancellation example therefore yields a zero
residual despite substantial pairwise discrepancies, while the
partial-cancellation example produces a residual much smaller than the
corresponding MeanAbs, Root mean square (RMS), and maximum discrepancies.

This behavior reflects the intended role of local predictive consistency as a
node-level relaxation of strict pairwise fairness. The proposed score asks
whether the focal prediction is atypical relative to the aggregate predictive
pattern among comparable individuals. It is therefore less sensitive to
isolated edge-wise deviations or opposing deviations that balance at the
neighborhood level. In contrast, MeanAbs, RMS, and maximum pairwise
discrepancies are more appropriate when the objective is to detect or penalize
individual edge-wise differences directly.

The relaxation is useful when fairness assessment is intended to identify
instances whose predictions depart systematically from their local comparison
group without requiring every comparison edge to satisfy a strict constraint.
It should not, however, be interpreted as guaranteeing uniformly small
pairwise discrepancies. For this reason, we supplement the proposed residual
with the independent pairwise diagnostics reported in
Appendix~\ref{app:pairwise-diagnostics}.

\section{Pareto Analysis Using Achieved Pairwise Fairness}
\label{app:pareto-achieved}

To characterize the fairness--utility trade-off without relying on the raw
regularization coefficient, we additionally evaluate all tested operating
points in terms of their achieved independent pairwise fairness and predictive
utility.

Pairwise fairness is measured using EdgeMeanAbs, where lower values indicate
smaller average prediction discrepancies across neighboring test instances.
Predictive utility is measured using AUC for classification, RMSE for
regression, C-index for log-risk survival models, and integrated Brier score
(IBS) for survival-curve models. Pareto efficiency is assessed directly over
the tested operating points, without imposing an additional utility-loss
threshold. An operating point is considered Pareto-efficient if no other
tested configuration achieves both lower EdgeMeanAbs and better predictive
utility, with at least one strict improvement.
\begin{table}[htbp]
\centering
\caption{Pareto-efficient operating points based on achieved independent
pairwise fairness (EdgeMeanAbs) and predictive utility. For survival tasks,
both C-index and IBS are considered to assess sensitivity to the utility
definition.}
\label{tab:pareto-achieved}
\small
\begin{adjustbox}{max width=\textwidth}
\begin{tabular}{lllccc}
\toprule
Task & Dataset & Formulation & Utility & Pareto $\lambda$ & Pareto/Tested \\
\midrule

Classification & Adult & URW-FKL & AUC
& $0.5,1,3,5,10,500$ & $6/13$ \\

Classification & Adult & URW-RKL & AUC
& $0.5,5,500$ & $3/13$ \\

Classification & COMPAS & URW-FKL & AUC
& $1,3,5,10,30,50,100,300,500$ & $9/13$ \\

Classification & COMPAS & URW-RKL & AUC
& $3,5,10,30,50,100,300,500$ & $8/13$ \\

Regression & Employee Salary & URW & RMSE
& $0.3,0.5,1,3,5,10,30,50,100,300,500$ & $11/13$ \\

Regression & Insurance & URW & RMSE
& $0,0.1,0.3,0.5,1,3,5,10,30,50,100,300,500$ & $13/13$ \\

\midrule

Survival & FLCHAIN & Log-risk & C-index
& $3,50,200$ & $3/9$ \\

Survival & FLCHAIN & Log-risk & IBS
& $200$ & $1/9$ \\

Survival & FLCHAIN & Survival curve & C-index
& $5,10,30,50,100,300$ & $6/11$ \\

Survival & FLCHAIN & Survival curve & IBS
& $300$ & $1/11$ \\

Survival & ROSSI & Log-risk & C-index
& $30,100$ & $2/8$ \\

Survival & ROSSI & Log-risk & IBS
& $1,5,10,30,50,100$ & $6/8$ \\

Survival & ROSSI & Survival curve & C-index
& $300$ & $1/11$ \\

Survival & ROSSI & Survival curve & IBS
& $0,0.3,1,10,100,300$ & $6/11$ \\

Survival & WHAS & Log-risk & C-index
& $1,3,10,50,100$ & $5/8$ \\

Survival & WHAS & Log-risk & IBS
& $1,3,10,50,100$ & $5/8$ \\

Survival & WHAS & Survival curve & C-index
& $0,1,3,5,10,300$ & $6/11$ \\

Survival & WHAS & Survival curve & IBS
& $0,300$ & $2/11$ \\

\bottomrule
\end{tabular}
\end{adjustbox}
\end{table}

The resulting frontiers show that the fairness--utility geometry varies
substantially across datasets and prediction settings. For example, all tested
Insurance configurations are Pareto-efficient in the EdgeMeanAbs--RMSE plane,
indicating an approximately monotonic trade-off in which reductions in pairwise
prediction discrepancy are accompanied by increasing predictive error. By
contrast, the FLCHAIN survival-curve formulation yields a much sparser frontier:
six configurations are Pareto-efficient when utility is measured by C-index,
whereas only $\lambda=300$ remains Pareto-efficient when utility is measured by
IBS. This indicates that the set of Pareto-efficient operating points can depend on the predictive-utility criterion being considered.

A similar metric dependence is observed in the other survival experiments. For
FLCHAIN under the log-risk formulation, three configurations are Pareto-efficient
with respect to C-index, while only $\lambda=200$ remains Pareto-efficient with
respect to IBS. For ROSSI, the C-index- and IBS-based frontiers also differ for
both survival formulations, whereas for WHAS under the log-risk formulation the
two frontiers coincide. We therefore report both C-index and IBS for survival
prediction rather than relying on a single utility measure. Overall, these
results show that the effect of fairness regularization is more appropriately
characterized through the achieved fairness--utility frontier, and, for survival
prediction, across complementary utility criteria, rather than through the
numerical value of $\lambda$ alone.

\section{Additional Evaluations on Binary Classification Tasks}
\label{app:consistency}
To assess how fairness regularization affects counterfactual prediction stability, we report GR- and S-consistency as functions of $\lambda$ (higher is better) (Fig.~\ref{fig:consistency_adult} and~\ref{fig:consistency_compas}). 

On the \textsc{Adult} dataset, increasing $\lambda$ generally improves both metrics, with minor non-monotonicity at medium $\lambda$ reflecting the trade-off between task fitting and neighborhood agreement. The combinatorial Laplacian baseline attains larger consistency gains at some operating points, whereas URW reaches comparable consistency levels at operating points with better predictive performance. Between URW variants, Forward-KL typically attains slightly higher GR/S-consistency.

On the \textsc{COMPAS} dataset, the advantage of URW becomes more pronounced. Both URW variants achieve consistently higher GR-consistency across a wide range of $\lambda$, while the Combinatorial Laplacian baseline exhibits weaker improvements and larger fluctuations. This suggests that the node-level residual regularization better captures the structure of the similarity graph in this setting, leading to more stable counterfactual predictions.

Overall, these results complement the VR curves in the main text (Fig.~\ref{fig:tradeoffs_binary}), indicating that URW improves local stability under regularization while maintaining strong predictive performance.

\begin{figure}[!t]
\centering

\begin{subfigure}[t]{0.32\linewidth}
    \centering
    \includegraphics[width=\linewidth]{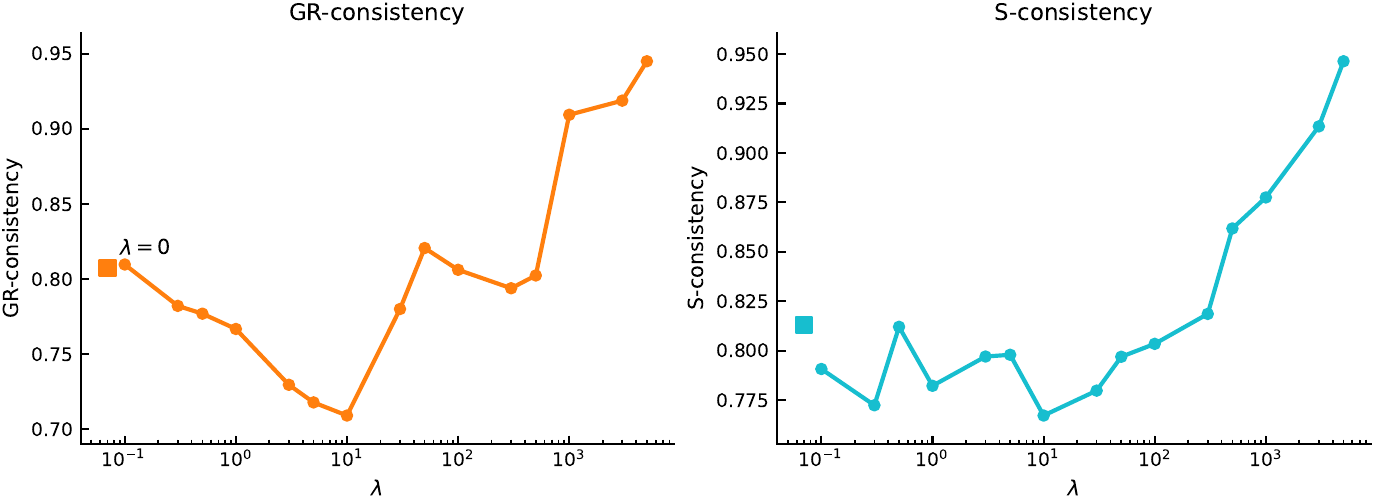}
    \caption{URW (Forward KL)}
\end{subfigure}\hfill
\begin{subfigure}[t]{0.32\linewidth}
    \centering
    \includegraphics[width=\linewidth]{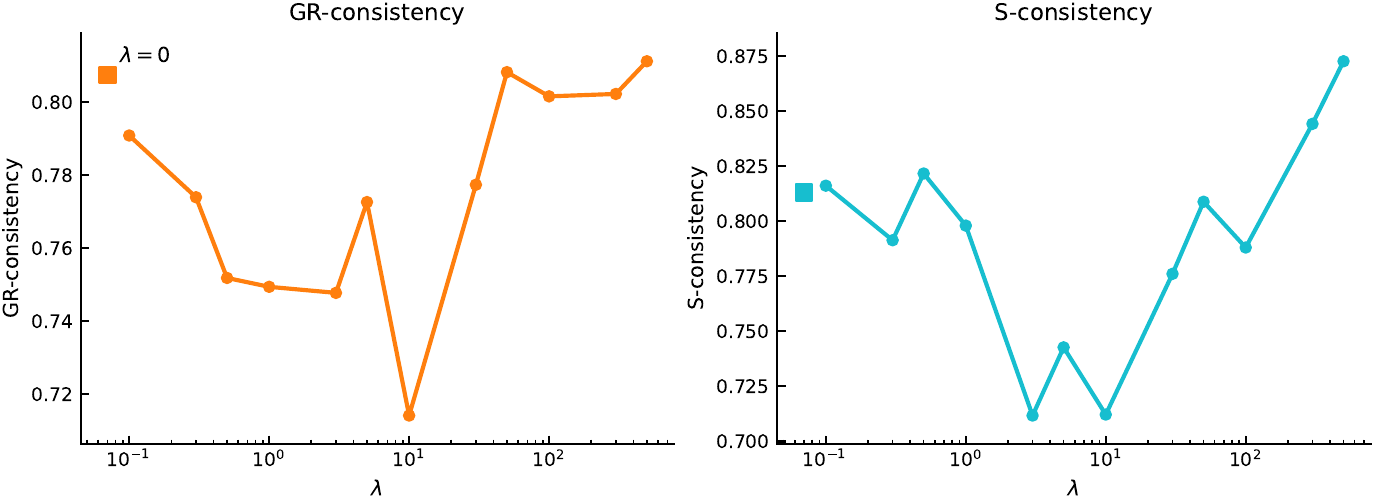}
    \caption{URW (Reverse KL)}
\end{subfigure}\hfill
\begin{subfigure}[t]{0.32\linewidth}
    \centering
    \includegraphics[width=\linewidth]{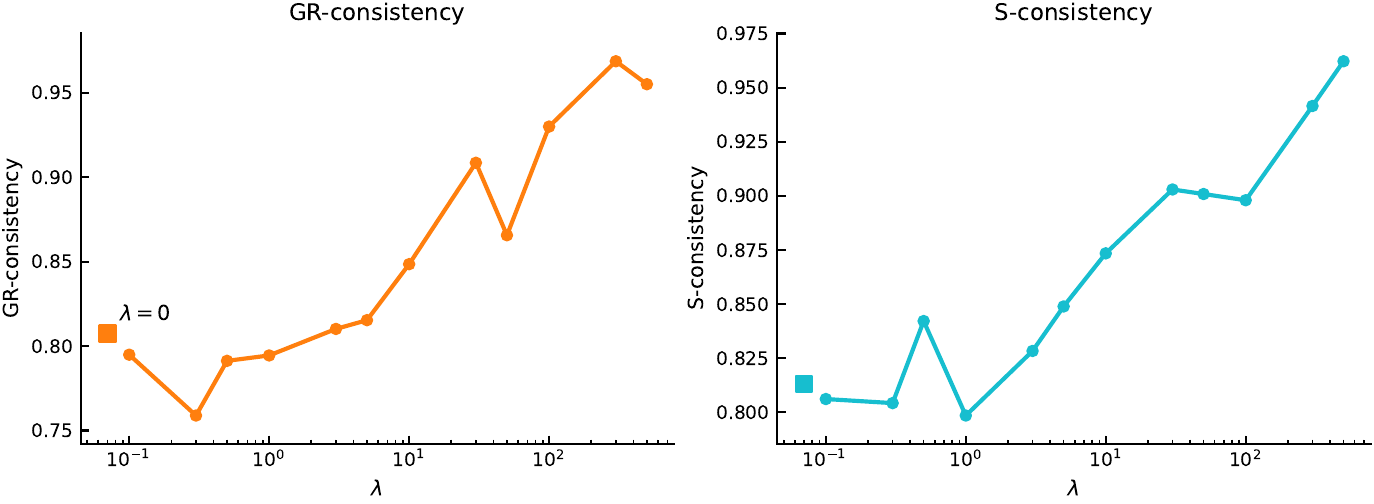}
    \caption{Comb. Laplacian}
\end{subfigure}

\caption{
 Race and Gender (GR) and Spouse (S) consistency as functions of the regularization strength $\lambda$ on the \textsc{Adult} dataset (higher is better). Markers at $\lambda=0$ denote the unregularized model.
}
\label{fig:consistency_adult}
\end{figure}

\begin{figure}[h]
\centering

\begin{subfigure}[t]{0.32\linewidth}
    \centering
    \includegraphics[width=\linewidth]{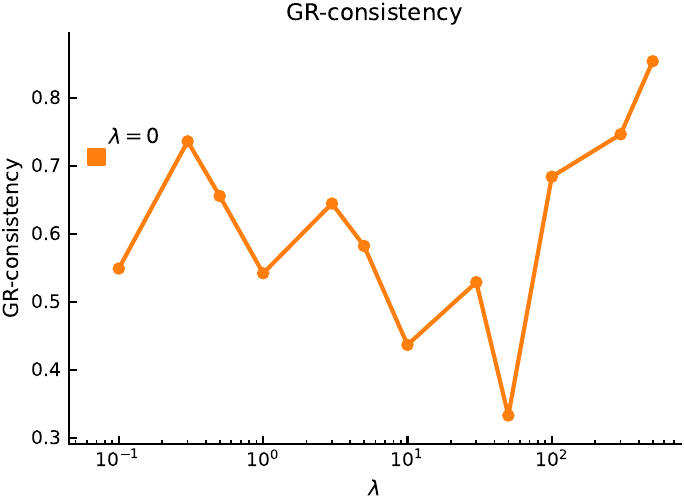}
    \caption{URW (Forward KL)}
\end{subfigure}\hfill
\begin{subfigure}[t]{0.32\linewidth}
    \centering
    \includegraphics[width=\linewidth]{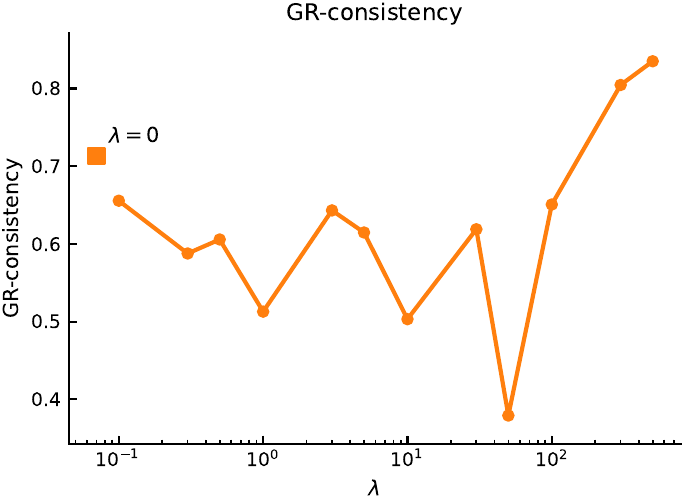}
    \caption{URW (Reverse KL)}
\end{subfigure}\hfill
\begin{subfigure}[t]{0.32\linewidth}
    \centering
    \includegraphics[width=\linewidth]{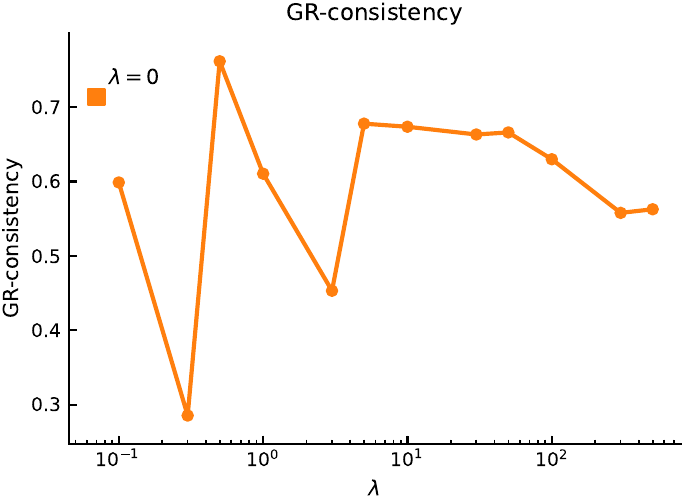}
    \caption{Comb. Laplacian}
\end{subfigure}

\caption{
GR-consistency as a function of the regularization strength $\lambda$ on the \textsc{COMPAS} dataset (higher is better). Markers at $\lambda=0$ denote the
unregularized model.
}
\label{fig:consistency_compas}
\end{figure}

\section{Tail-Sensitive $p$-Norm Regularization}
\label{app:pnorm}

This section further examines how the aggregation of instance-level local predictive-consistency residuals affects sensitivity to large local inconsistencies. In particular, we clarify the relationship between the squared $p$-norm family introduced in the main text and the empirical $p$-norm used in the supplementary tail-sensitivity experiment.

Recall that the main text defines
\[
\Omega_{\mathrm{IF}}^{(p)}(\theta;P)
=
\left(
\frac{1}{n}
\sum_{i=1}^{n}
|r_i(\theta;P)|^p
\right)^{2/p},
\]
where
\[
r_i(\theta;P)
=
f_\theta(x_i)-(Pf_\theta)_i.
\]
This family can equivalently be written as
\[
\Omega_{\mathrm{IF}}^{(p)}(\theta;P)
=
R_p(\theta;P)^2,
\]
where
\begin{equation}
\label{eq:empirical-pnorm}
R_p(\theta;P)
=
\left(
\frac{1}{n}
\sum_{i=1}^{n}
|r_i(\theta;P)|^p
\right)^{1/p}
\end{equation}
is the empirical $p$-norm of the local residual magnitudes. The squared convention in the main text is useful because $p=2$ recovers exactly the original squared-residual regularizer.

For the supplementary experiment reported here, we optimize the underlying
empirical $p$-norm $R_p$ in \eqref{eq:empirical-pnorm}, rather than its squared counterpart. The purpose of this experiment is to isolate the effect of changing the residual aggregation from average-oriented to increasingly tail-sensitive forms. The use of $R_p$ instead of $R_p^2$ changes the numerical scale of the regularization term and therefore should not be interpreted as an identical optimization problem under the same value of $\lambda$. Instead, this experiment provides a controlled comparison of how different aggregation geometries emphasize the residual distribution.

For a fixed residual vector, $R_p$ becomes increasingly sensitive to larger residuals as $p$ increases. In particular,
\[
R_1(\theta;P)
=
\frac{1}{n}
\sum_{i=1}^{n}
|r_i(\theta;P)|
\]
measures average local inconsistency, whereas
\[
\lim_{p\rightarrow\infty}
R_p(\theta;P)
=
\max_i |r_i(\theta;P)|
\]
corresponds to worst-case local inconsistency. Thus, increasing $p$ shifts the emphasis of the regularizer from the center of the residual distribution toward its upper tail.

Directly optimizing the maximum residual is inconvenient because the maximum
operator is non-smooth. For the experiment reported as $p=\infty$, we therefore use the normalized log-sum-exp approximation
\begin{equation}
\label{eq:linf-lse}
\widetilde{\Omega}_{\mathrm{IF}}^{(\infty,\alpha)}
(\theta;P)
=
\frac{1}{\alpha}
\left[
\log
\left(
\sum_{i=1}^{n}
\exp\{
\alpha |r_i(\theta;P)|
\}
\right)
-
\log n
\right].
\end{equation}
Here,
$\widetilde{\Omega}_{\mathrm{IF}}^{(\infty,\alpha)}$
denotes the unsquared smooth surrogate used specifically in this supplementary experiment. As $\alpha\rightarrow\infty$,
\[
\widetilde{\Omega}_{\mathrm{IF}}^{(\infty,\alpha)}
(\theta;P)
\longrightarrow
\max_i |r_i(\theta;P)|.
\]
It therefore approximates the $p\rightarrow\infty$ limit of the empirical $p$-norm $R_p$ used in the supplementary experiment. The corresponding limit of the squared family defined in the main text is $(\max_i |r_i|)^2$.

We use $\alpha=200$ in all reported $p=\infty$ experiments. The subtraction of $\log n$ places the log-sum-exp expression on the same empirical-average convention as the finite-$p$ objectives and does not affect its optimization gradient.

Taken together, the finite-$p$ objectives and the approximate $p=\infty$ objective form a tail-sensitivity spectrum. The $p=1$ objective emphasize average local inconsistency, larger finite values of $p$ place progressively greater emphasis on large residuals, and the approximate $p=\infty$ objective focuses most strongly on the largest local inconsistencies.

To isolate the effect of the aggregation itself, all compared values of $p$ use the same regularization coefficient $\lambda$, without any additional $p$-dependent multiplicative gain. For numerical stability, residual magnitudes are clipped at a common upper bound of $10$ before aggregation for all values of $p$. Gradient clipping is also applied during optimization. These numerical stabilization steps are held fixed across the compared objectives.

We evaluate this tail-sensitive aggregation on the \textsc{Employee Salary} regression task using
\[
p\in\{1,4,\infty\}
\]
while varying the fairness regularization strength $\lambda$. The residuals are computed on the log-prediction scale used by the regression model.

Figure~\ref{fig:pnorm_results} reports predictive error together with the violation rate (VR) at tolerance $\epsilon=0.05$. At small regularization strengths, the three aggregation choices exhibit similar behavior. As $\lambda$ increases, however, the more tail-sensitive $p=4$ and approximate $p=\infty$ objectives generally reduce the violation rate more aggressively than $p=1$, while incurring a larger predictive-error cost. The ordering is not strictly monotone in $p$ at every value of $\lambda$, which is not unexpected given the non-convex neural-network optimization problem. The relevant interpretation is therefore that increasing $p$ changes the objective toward greater sensitivity to the largest residuals, rather than guaranteeing a pointwise ordering of the learned models for every value of $\lambda$.

For example, at $\lambda=1$, the violation rate at $\epsilon=0.05$ decreases from $0.0755$ for $p=1$ to $0.0220$ for $p=4$ and $0.0125$ for the approximate $p=\infty$ objective. The corresponding MAE values increase from $8236.8$ to $9318.9$ and $10944.8$, respectively. These results illustrate the expected fairness--utility trade-off: increasingly tail-sensitive aggregation can suppress large local inconsistencies more strongly, but may do so at a greater predictive cost.

\begin{figure}[H]
\centering
\begin{subfigure}[t]{0.45\linewidth}
    \centering
    \includegraphics[width=\linewidth]{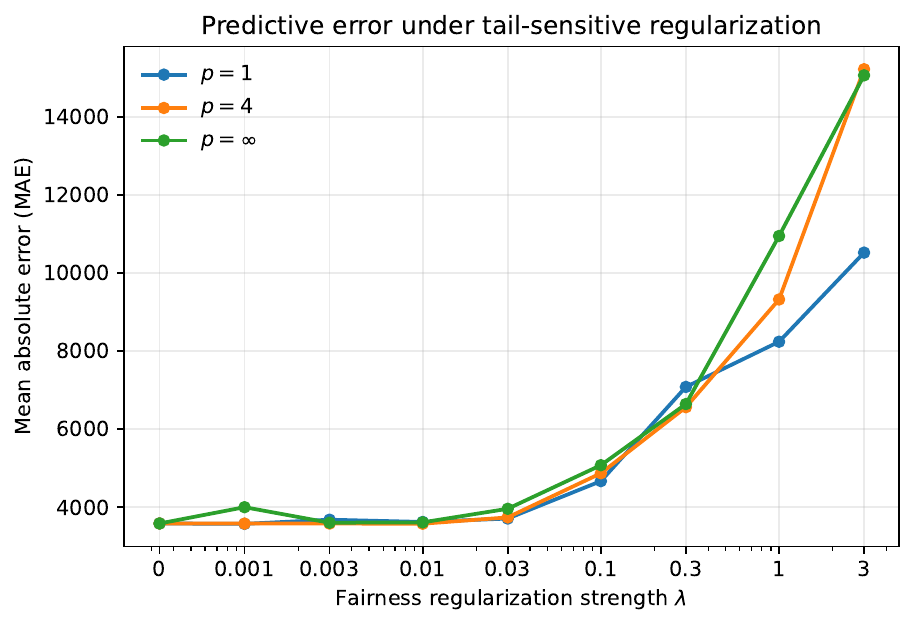}
    \caption{MAE vs.\ $\lambda$}
\end{subfigure}
\hfill
\begin{subfigure}[t]{0.45\linewidth}
    \centering
    \includegraphics[width=\linewidth]{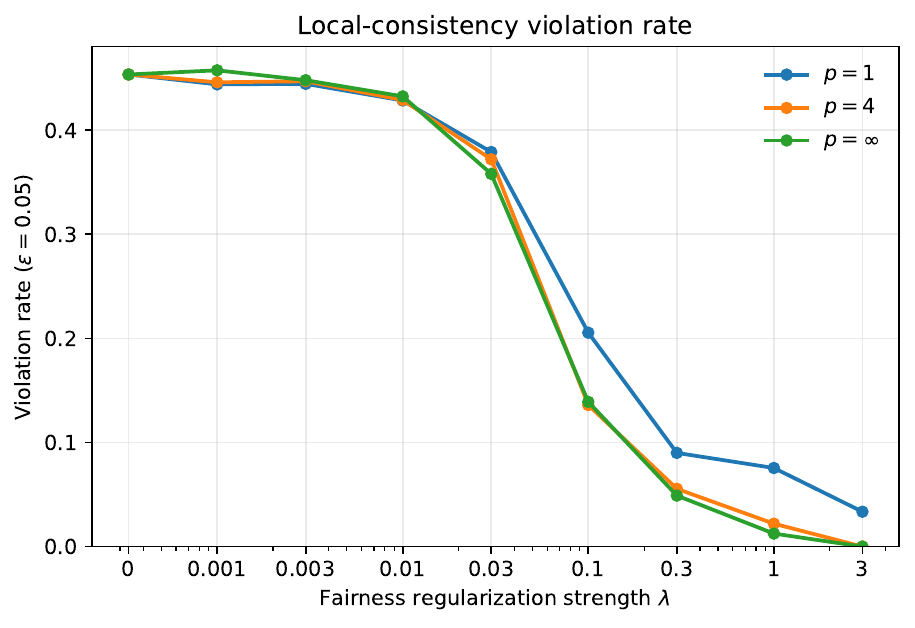}
    \caption{VR vs.\ $\lambda$ ($\epsilon=0.05$)}
\end{subfigure}
\caption{
Performance--fairness behavior under increasingly tail-sensitive local
predictive-consistency aggregation on the \textsc{Employee Salary} regression
dataset. Finite $p$ uses the empirical $p$-norm, while $p=\infty$ uses the
log-sum-exp approximation in \cref{eq:linf-lse}. Greater tail sensitivity
reduces high local-consistency violations more strongly under moderate-to-large
regularization, at the cost of increased predictive error.
}
\label{fig:pnorm_results}
\end{figure}

\section{Multi-Seed Robustness on Smaller Survival Datasets}
\label{app:multiseed-survival}

To assess the robustness of the survival results to random variation, we
repeat representative experiments on the two smaller survival datasets,
\textsc{WHAS} and \textsc{ROSSI}, using five random seeds. We consider both the log-risk and survival-curve formulations and evaluate the unregularized baseline together with representative moderate and strong fairness-regularization settings.

The random seed controls model initialization and the train--test split. Consequently, this analysis captures variability arising from both training
randomness and sampling variation. This is particularly relevant for the
smaller survival datasets, for which predictive performance can be sensitive
to the particular train--test partition.

Table~\ref{tab:multiseed-survival} reports the mean and sample standard
deviation over the five seeds. C-index is reported in its standard
orientation, where higher values indicate better concordance, whereas IBS
and the local predictive-consistency score are lower-is-better.

\begin{table}[H]
\centering
\caption{Multi-seed robustness results on the smaller survival datasets.
Values are mean $\pm$ standard deviation over five random seeds.}
\label{tab:multiseed-survival}
\small
\begin{tabular}{lllrrr}
\toprule
Formulation & Dataset & $\lambda$
& C-index $\uparrow$
& IBS $\downarrow$
& IF mean $\downarrow$ \\
\midrule

Log-risk
& ROSSI & $0$
& $0.5572 \pm 0.0498$
& $0.4062 \pm 0.0796$
& $4.6107 \pm 0.3593$ \\

Log-risk
& ROSSI & $10$
& $0.6226 \pm 0.0375$
& $0.3229 \pm 0.0540$
& $0.0976 \pm 0.0080$ \\

Log-risk
& ROSSI & $100$
& $0.6425 \pm 0.0376$
& $0.3292 \pm 0.0553$
& $0.0168 \pm 0.0041$ \\

Log-risk
& WHAS & $0$
& $0.6688 \pm 0.0701$
& $0.3004 \pm 0.0788$
& $6.2926 \pm 0.6123$ \\

Log-risk
& WHAS & $10$
& $0.5679 \pm 0.0634$
& $0.2388 \pm 0.0255$
& $0.2111 \pm 0.0332$ \\

Log-risk
& WHAS & $100$
& $0.5433 \pm 0.0736$
& $0.2476 \pm 0.0298$
& $0.1456 \pm 0.0158$ \\

\midrule

Survival curve
& ROSSI & $0$
& $0.4625 \pm 0.0582$
& $0.3780 \pm 0.0763$
& $0.1090 \pm 0.0315$ \\

Survival curve
& ROSSI & $10$
& $0.4963 \pm 0.0478$
& $0.3867 \pm 0.0758$
& $0.0768 \pm 0.0274$ \\

Survival curve
& ROSSI & $300$
& $0.5862 \pm 0.0664$
& $0.3816 \pm 0.0656$
& $0.0213 \pm 0.0077$ \\

Survival curve
& WHAS & $0$
& $0.6799 \pm 0.0251$
& $0.2591 \pm 0.0319$
& $0.0910 \pm 0.0088$ \\

Survival curve
& WHAS & $10$
& $0.5655 \pm 0.0632$
& $0.2891 \pm 0.0302$
& $0.0653 \pm 0.0084$ \\

Survival curve
& WHAS & $300$
& $0.4997 \pm 0.0972$
& $0.2869 \pm 0.0392$
& $0.0431 \pm 0.0052$ \\

\bottomrule
\end{tabular}
\end{table}

Across both survival formulations and both datasets, increasing the fairness
regularization produces a consistent reduction in the mean local
predictive-consistency score. Under the log-risk formulation, the mean IF
score on ROSSI decreases from $4.6107 \pm 0.3593$ at $\lambda=0$ to
$0.0168 \pm 0.0041$ at $\lambda=100$, while on WHAS it decreases from
$6.2926 \pm 0.6123$ to $0.1456 \pm 0.0158$. Under the survival-curve
formulation, the corresponding reductions are from
$0.1090 \pm 0.0315$ to $0.0213 \pm 0.0077$ on ROSSI and from
$0.0910 \pm 0.0088$ to $0.0431 \pm 0.0052$ on WHAS.

Predictive utility exhibits greater variability, particularly for the
survival-curve formulation on the smaller datasets. On ROSSI, the
survival-curve C-index varies substantially across train--test splits at weak
regularization, while its mean increases from $0.4625$ at $\lambda=0$ to
$0.5862$ at $\lambda=300$. On WHAS, stronger survival-curve regularization
reduces the mean C-index and increases its variability, illustrating a clearer
fairness--utility trade-off.

Overall, the multi-seed analysis indicates that stronger fairness regularization
consistently reduces the mean local predictive-consistency score across the
evaluated random seeds, whereas predictive utility can be more sensitive to
sampling variation on the smaller survival datasets. We therefore interpret the survival results primarily in terms of the consistency of the fairness trend rather than any single predictive operating point.

\section{Comparison with Laplacian Regularization}
\label{app:laplacian-comparison}

\subsection{Standard Pairwise Laplacian Regularization}
We additionally compare URW against a standard pairwise graph-Laplacian regularizer across classification, regression, and survival prediction. Because the URW and Laplacian objectives have different scales, their raw regularization parameters are not directly comparable. We therefore compare the methods through their achieved fairness--utility trade-offs over their respective regularization paths rather than at identical values of $\lambda$.

For classification, predictive utility is measured by AUC; for regression by RMSE; and for survival analysis by C-index. Fairness is evaluated using the mean instance-level IF score on the held-out test graph, with lower values indicating stronger local predictive consistency.

\begin{table}[htbp]
\centering
\caption{Representative operating points for URW and standard pairwise
Laplacian regularization. Raw $\lambda$ values should not be compared across
methods because the regularization objectives have different scales.}
\label{tab:laplacian-comparison}
\small
\begin{tabular}{lllrr}
\toprule
Task / Dataset & Method & $\lambda$
& Utility & Fairness $\downarrow$ \\
\midrule

Classification / Adult
& Laplacian & 30
& AUC $=0.8751$ & $0.001002$ \\
& URW-FKL & 5
& AUC $=0.8771$ & $0.012202$ \\
& URW-RKL & 5
& AUC $=0.8784$ & $0.012263$ \\

\midrule

Classification / COMPAS
& Laplacian & 3
& AUC $=0.6197$ & $1.24\times10^{-5}$ \\
& URW-FKL & 10
& AUC $=0.6610$ & $2.28\times10^{-3}$ \\
& URW-RKL & 10
& AUC $=0.6665$ & $2.40\times10^{-3}$ \\

\midrule

Regression / Employee Salary
& Laplacian & 0.1
& RMSE $=10748.6$ & $0.0173$ \\
& URW & 1
& RMSE $=5586.0$ & $0.0338$ \\

\midrule

Regression / Insurance
& Laplacian & 0.1
& RMSE $=11977.9$ & $0.0686$ \\
& URW & 1
& RMSE $=8467.7$ & $0.2918$ \\

\midrule

DeepSurv / FLCHAIN
& Laplacian & 3
& C-index $=0.9208$ & $0.0175$ \\
& URW & 50
& C-index $=0.9215$ & $0.0341$ \\

\midrule

DeepSurv / WHAS
& Laplacian & 1
& C-index $=0.5611$ & $0.1608$ \\
& URW & 1
& C-index $=0.6620$ & $0.3084$ \\

\midrule

DeepSurv / ROSSI
& Laplacian & 1
& C-index $=0.5991$ & $0.0161$ \\
& URW & 30
& C-index $=0.7003$ & $0.0384$ \\

\midrule

Survival curve / FLCHAIN
& Laplacian & 5
& C-index $=0.9202$ & $0.00807$ \\
& URW & 30
& C-index $=0.9240$ & $0.01094$ \\

\midrule

Survival curve / WHAS
& Laplacian & 30
& C-index $=0.5124$ & $0.04231$ \\
& URW & 300
& C-index $=0.5670$ & $0.04877$ \\

\midrule

Survival curve / ROSSI
& Laplacian & 50
& C-index $=0.5946$ & $0.00451$ \\
& URW & 300
& C-index $=0.6926$ & $0.01321$ \\

\bottomrule
\end{tabular}
\end{table}

The comparison reveals a consistent distinction between the two forms of graph regularization. Pairwise Laplacian regularization directly penalizes edge-wise prediction differences and therefore typically attains stronger absolute graph smoothing. URW instead penalizes deviation from a neighborhood aggregate and consequently behaves as a less aggressive, node-level relaxation of direct pairwise smoothing.

Across several datasets, this difference is associated with additional utility-preserving operating points for URW at comparable achieved fairness levels. This is particularly visible on COMPAS, the two regression datasets, WHAS, and ROSSI, where stronger Laplacian smoothing incurs a larger predictive cost. On ROSSI, for example, the DeepSurv URW model attains a C-index of $0.7003$ while substantially reducing local inconsistency, whereas stronger Laplacian smoothing reaches smaller fairness values but lower concordance.

The pattern is not universal. On FLCHAIN, Laplacian regularization achieves
very strong graph smoothing with little loss in predictive concordance.
Accordingly, these results do not suggest that URW uniformly dominates
pairwise Laplacian regularization. Rather, the two objectives emphasize
different regions of the fairness--utility trade-off: Laplacian regularization favors stronger edge-wise smoothing, while URW often provides additional higher-utility operating points through its neighborhood-level relaxation.

\subsection{Symmetric Normalized Laplacian Regularization}
\label{app:normalized-laplacian}

To assess whether the favorable operating points attained by URW can be explained primarily by degree normalization, we additionally compare against a symmetric normalized-Laplacian baseline. Using the same comparison graph, the normalized adjacency is defined as
\[
S = D^{-1/2} W D^{-1/2},
\]
and the corresponding regularizer is
\[
\mathcal{R}_{\mathrm{NLap}}(f)
=
\frac{1}{n} f^\top (I-S) f.
\]
For binary classification, $f$ denotes the predicted positive-class probability. All other components of the experimental pipeline, including the train/test split, graph construction, fair-projected representation, metric projection, and optimization setup, are kept unchanged.

For evaluation, we avoid comparing method-specific regularization objectives. Instead, all methods are assessed using the same \textsc{EdgeMeanAbs} diagnostic defined in Appendix~\ref{app:pairwise-diagnostics}, computed on the held-out comparison graph. Lower values indicate smaller direct prediction discrepancies between comparable instances.

\begin{table}[htbp]
\centering
\caption{
Comparison of URW, the standard Laplacian baseline, and the symmetric normalized-Laplacian baseline on representative classification, regression, and survival prediction tasks. Fairness is evaluated using the independent \textsc{EdgeMeanAbs} metric on a common held-out comparison graph. Higher AUC and C-index are better, while lower RMSE and \textsc{EdgeMeanAbs} are better.
}
\label{tab:normalized_laplacian_comparison}
\small
\begin{tabular}{llrrr}
\toprule
Task & Method & $\lambda$ & Utility & EdgeMeanAbs $\downarrow$ \\
\midrule

\textsc{Adult}
& Baseline & 0
& AUC $=0.8772$ & 0.2622 \\
& URW-FKL & 5
& AUC $=0.8771$ & 0.2333 \\
& URW-RKL & 5
& AUC $=0.8784$ & 0.2332 \\
& Laplacian & 0.1
& AUC $=0.8726$ & 0.2302 \\
& Normalized Laplacian & 1
& AUC $=0.8765$ & 0.2354 \\

\midrule

\textsc{Employee Salary}
& Baseline & 0
& RMSE $=5251.79$ & 0.08269 \\
& URW & 0.3
& RMSE $=5096.61$ & 0.07394 \\
& URW & 1
& RMSE $=5585.98$ & 0.06269 \\
& Laplacian & 0.1
& RMSE $=10748.62$ & 0.03341 \\
& Normalized Laplacian & 0.1
& RMSE $=5545.94$ & 0.07826 \\
& Normalized Laplacian & 0.3
& RMSE $=9310.49$ & 0.08082 \\

\midrule

\textsc{WHAS}
& Baseline & 0
& C-index $=0.6453$ & 27.3904 \\
& URW & 1
& C-index $=0.6620$ & 1.3146 \\
& URW & 3
& C-index $=0.6108$ & 0.9948 \\
& Laplacian & 1
& C-index $=0.5611$ & 0.6583 \\
& Normalized Laplacian & 1
& C-index $=0.6553$ & 1.2111 \\
& Normalized Laplacian & 3
& C-index $=0.6346$ & 0.9455 \\

\bottomrule
\end{tabular}
\end{table}

The results show that degree normalization is an important factor, but does not fully account for the behavior of URW. On \textsc{Adult}, the methods can be compared at closely matched levels of direct pairwise discrepancy. URW-FKL and URW-RKL at $\lambda=5$ achieve
\textsc{EdgeMeanAbs} values of $0.2333$ and $0.2332$, respectively, with AUCs of $0.8771$ and $0.8784$. The symmetric normalized-Laplacian baseline at $\lambda=1$ reaches a similar \textsc{EdgeMeanAbs} of $0.2354$, with AUC $0.8765$. The standard Laplacian reaches slightly stronger smoothing (\textsc{EdgeMeanAbs} $=0.2302$) but with a lower AUC of $0.8726$. Thus, at this moderate fairness level, URW retains slightly higher predictive utility than either Laplacian variant.

On Adult, the normalized Laplacian can attain stronger pairwise smoothing at some operating points, but the corresponding points are associated with lower predictive utility than the URW operating points with comparable achieved fairness. For example, at $\lambda=3$ it reaches \textsc{EdgeMeanAbs} $=0.1963$ with AUC $=0.8710$, a fairness level comparable to the strongly regularized URW operating points but with somewhat lower predictive utility. This indicates that symmetric degree normalization alone does not fully reproduce the favorable URW operating points observed on Adult.

On \textsc{Employee Salary}, URW similarly reaches a favorable region of the fairness--utility trade-off that is not reproduced by the normalized-Laplacian baseline. For example, URW with $\lambda=0.3$ improves both RMSE and \textsc{EdgeMeanAbs} relative to the unregularized model, whereas the normalized-Laplacian baseline yields a weaker reduction in pairwise discrepancy and degrades utility as regularization is increased.

On \textsc{WHAS}, the normalized-Laplacian baseline is more competitive,
confirming that degree normalization can substantially improve utility preservation relative to the standard Laplacian. However, its operating points remain distinct from those of URW: URW performs better in some lightly regularized regions, whereas the normalized-Laplacian baseline is stronger at some more heavily regularized points.

Overall, these results suggest that degree normalization contributes to the behavior of graph-based regularization but does not by itself reproduce the URW fairness--utility trade-off. The relative advantage remains dataset-dependent, and we therefore do not claim uniform dominance of URW over the normalized-Laplacian baseline.

\section{Comparison with Waller-Style Local Consistency}
\label{app:waller-comparison}

To further compare the proposed local predictive-consistency diagnostic with prior instance-level individual-fairness measures, we evaluate the binary classification experiments using the local consistency formulation of \citet{waller2025beyond}. This comparison is performed on Adult and COMPAS, for which the predictive output is binary.

For each test instance $\mathbf{x}_i$, let $\mathrm{nbr}(\mathbf{x}_i,k)$ denote its $k$ nearest neighbors in the comparison feature space, excluding the instance itself. Following \citet{waller2025beyond}, we compute
\begin{equation}
\label{eq:app-waller-consistency}
c_i^{(k)}
=
1-
\left|
\hat y_i
-
\frac{1}{k}
\sum_{\mathbf{x}_j\in\mathrm{nbr}(\mathbf{x}_i,k)}
\hat y_j
\right|,
\end{equation}
where $\hat y_i\in\{0,1\}$ is the predicted class label. For binary predictions, $c_i^{(k)}$ is equivalently the proportion of the $k$ neighbors receiving the same predicted class as the focal instance.

We report the mean consistency
\[
\overline{C}^{(k)}
=
\frac{1}{n}\sum_{i=1}^{n}c_i^{(k)},
\]
together with the threshold-based summaries proposed by
\citet{waller2025beyond}. For a consistency threshold $\delta$, the Proportional Consistency Score (PCS) is
\begin{equation}
\label{eq:app-waller-pcs}
\mathrm{PCS}_{\delta}^{(k)}
=
\frac{1}{n}
\sum_{i=1}^{n}
\mathbbm{1}\!\left[c_i^{(k)}\geq\delta\right],
\end{equation}
and the Balanced Conditioned Consistency (BCC) is
\begin{equation}
\label{eq:app-waller-bcc}
\mathrm{BCC}_{\delta}^{(k)}
=
\frac{1}{n}
\sum_{i=1}^{n}
c_i^{(k)}
\mathbbm{1}\!\left[c_i^{(k)}\geq\delta\right].
\end{equation}

We use $\delta=0.5$ and evaluate $k\in\{5,10,15\}$. Tables~\ref{tab:waller-adult-full} and \ref{tab:waller-compas-full} report the complete regularization paths for URW-FKL and URW-RKL under all three neighborhood sizes. In addition to mean Waller consistency, PCS, and BCC, we report the Spearman rank correlation $\rho$ between the proposed instance-level URW inconsistency and Waller inconsistency $1-c_i^{(k)}$.

\begin{table*}[t]
\centering
\caption{
Complete Waller-style local-consistency comparison on \textsc{Adult} for $k\in\{5,10,15\}$ and $\delta=0.5$. $\bar C$ denotes mean Waller consistency and $\rho$ is the Spearman correlation between the proposed instance-level URW inconsistency and Waller inconsistency $1-c_i^{(k)}$.
}
\label{tab:waller-adult-full}
\scriptsize
\setlength{\tabcolsep}{2.5pt}
\begin{adjustbox}{max width=\textwidth}
\begin{tabular}{llrr|rrrr|rrrr|rrrr}
\toprule
Method & $\lambda$ & AUC $\uparrow$ & URW IF $\downarrow$
& \multicolumn{4}{c|}{$k=5$}
& \multicolumn{4}{c|}{$k=10$}
& \multicolumn{4}{c}{$k=15$} \\
\cmidrule(lr){5-8}\cmidrule(lr){9-12}\cmidrule(lr){13-16}
& & & &
$\bar C$ $\uparrow$ & PCS $\uparrow$ & BCC $\uparrow$ & $\rho$ &
$\bar C$ $\uparrow$ & PCS $\uparrow$ & BCC $\uparrow$ & $\rho$ &
$\bar C$ $\uparrow$ & PCS $\uparrow$ & BCC $\uparrow$ & $\rho$ \\
\midrule
URW-FKL & $0$   & 0.8772 & 0.07477 & 0.8928 & 0.9139 & 0.8710 & 0.259 & 0.8755 & 0.9205 & 0.8564 & 0.296 & 0.8664 & 0.9025 & 0.8400 & 0.314 \\
URW-FKL & $0.1$ & 0.8787 & 0.07017 & 0.8953 & 0.9155 & 0.8739 & 0.259 & 0.8783 & 0.9185 & 0.8585 & 0.295 & 0.8693 & 0.9029 & 0.8433 & 0.320 \\
URW-FKL & $0.3$ & 0.8790 & 0.06678 & 0.8951 & 0.9123 & 0.8724 & 0.250 & 0.8791 & 0.9201 & 0.8603 & 0.281 & 0.8712 & 0.9026 & 0.8449 & 0.299 \\
URW-FKL & $0.5$ & 0.8796 & 0.06436 & 0.9003 & 0.9215 & 0.8811 & 0.242 & 0.8842 & 0.9242 & 0.8663 & 0.268 & 0.8745 & 0.9103 & 0.8512 & 0.284 \\
URW-FKL & $1$   & 0.8789 & 0.05653 & 0.9047 & 0.9222 & 0.8859 & 0.242 & 0.8911 & 0.9268 & 0.8745 & 0.257 & 0.8830 & 0.9128 & 0.8605 & 0.274 \\
URW-FKL & $3$   & 0.8779 & 0.04839 & 0.9088 & 0.9248 & 0.8912 & 0.218 & 0.8962 & 0.9266 & 0.8798 & 0.236 & 0.8892 & 0.9194 & 0.8702 & 0.254 \\
URW-FKL & $5$   & 0.8771 & 0.04309 & 0.9119 & 0.9282 & 0.8952 & 0.220 & 0.9010 & 0.9318 & 0.8865 & 0.233 & 0.8947 & 0.9181 & 0.8747 & 0.245 \\
URW-FKL & $10$  & 0.8740 & 0.03828 & 0.9106 & 0.9285 & 0.8945 & 0.242 & 0.9007 & 0.9312 & 0.8865 & 0.243 & 0.8950 & 0.9222 & 0.8772 & 0.254 \\
URW-FKL & $30$  & 0.8716 & 0.03161 & 0.9182 & 0.9349 & 0.9040 & 0.219 & 0.9083 & 0.9348 & 0.8946 & 0.240 & 0.9034 & 0.9275 & 0.8868 & 0.244 \\
URW-FKL & $50$  & 0.8715 & 0.03046 & 0.9216 & 0.9368 & 0.9080 & 0.208 & 0.9126 & 0.9385 & 0.8999 & 0.227 & 0.9077 & 0.9282 & 0.8907 & 0.234 \\
URW-FKL & $100$ & 0.8704 & 0.02568 & 0.9249 & 0.9386 & 0.9116 & 0.199 & 0.9162 & 0.9400 & 0.9037 & 0.223 & 0.9122 & 0.9331 & 0.8970 & 0.232 \\
URW-FKL & $300$ & 0.8728 & 0.01765 & 0.9367 & 0.9485 & 0.9264 & 0.191 & 0.9295 & 0.9500 & 0.9201 & 0.207 & 0.9261 & 0.9437 & 0.9141 & 0.216 \\
URW-FKL & $500$ & 0.8737 & 0.01385
& \textbf{0.9473} & \textbf{0.9558} & \textbf{0.9382} & 0.156
& \textbf{0.9405} & \textbf{0.9583} & \textbf{0.9328} & 0.186
& \textbf{0.9366} & \textbf{0.9518} & \textbf{0.9264} & 0.192 \\
\midrule
URW-RKL & $0$   & 0.8772 & 0.10952 & 0.8928 & 0.9139 & 0.8710 & 0.292 & 0.8755 & 0.9205 & 0.8564 & 0.338 & 0.8664 & 0.9025 & 0.8400 & 0.355 \\
URW-RKL & $0.1$ & 0.8772 & 0.09976 & 0.8943 & 0.9131 & 0.8724 & 0.272 & 0.8785 & 0.9185 & 0.8589 & 0.315 & 0.8693 & 0.9014 & 0.8425 & 0.339 \\
URW-RKL & $0.3$ & 0.8780 & 0.09168 & 0.8972 & 0.9165 & 0.8762 & 0.266 & 0.8804 & 0.9215 & 0.8617 & 0.300 & 0.8717 & 0.9023 & 0.8449 & 0.318 \\
URW-RKL & $0.5$ & 0.8785 & 0.08854 & 0.8982 & 0.9174 & 0.8778 & 0.249 & 0.8823 & 0.9207 & 0.8640 & 0.289 & 0.8737 & 0.9060 & 0.8488 & 0.308 \\
URW-RKL & $1$   & 0.8783 & 0.07756 & 0.9027 & 0.9218 & 0.8834 & 0.265 & 0.8883 & 0.9247 & 0.8706 & 0.293 & 0.8793 & 0.9109 & 0.8562 & 0.316 \\
URW-RKL & $3$   & 0.8781 & 0.06558 & 0.9094 & 0.9256 & 0.8919 & 0.233 & 0.8975 & 0.9290 & 0.8819 & 0.250 & 0.8906 & 0.9195 & 0.8713 & 0.270 \\
URW-RKL & $5$   & 0.8784 & 0.05550 & 0.9150 & 0.9301 & 0.8988 & 0.217 & 0.9027 & 0.9318 & 0.8878 & 0.233 & 0.8965 & 0.9217 & 0.8776 & 0.252 \\
URW-RKL & $10$  & 0.8738 & 0.04973 & 0.9150 & 0.9289 & 0.8980 & 0.225 & 0.9039 & 0.9332 & 0.8895 & 0.232 & 0.8985 & 0.9238 & 0.8804 & 0.241 \\
URW-RKL & $30$  & 0.8729 & 0.04087 & 0.9161 & 0.9303 & 0.9001 & 0.230 & 0.9064 & 0.9348 & 0.8931 & 0.259 & 0.9010 & 0.9268 & 0.8843 & 0.268 \\
URW-RKL & $50$  & 0.8703 & 0.03961 & 0.9182 & 0.9355 & 0.9044 & 0.205 & 0.9098 & 0.9373 & 0.8973 & 0.233 & 0.9058 & 0.9318 & 0.8913 & 0.248 \\
URW-RKL & $100$ & 0.8689 & 0.03271 & 0.9263 & 0.9392 & 0.9132 & 0.195 & 0.9179 & 0.9409 & 0.9056 & 0.219 & 0.9130 & 0.9337 & 0.8978 & 0.232 \\
URW-RKL & $300$ & 0.8716 & 0.02156 & 0.9374 & 0.9477 & 0.9266 & 0.188 & 0.9306 & 0.9510 & 0.9216 & 0.203 & 0.9267 & 0.9448 & 0.9149 & 0.204 \\
URW-RKL & $500$ & 0.8739 & 0.01665 & \textbf{0.9484} & \textbf{0.9579} & \textbf{0.9401} & 0.153 & \textbf{0.9420} & \textbf{0.9589} & \textbf{0.9342} & 0.186 & \textbf{0.9381} & \textbf{0.9542} & \textbf{0.9284} & 0.196 \\
\bottomrule
\end{tabular}
\end{adjustbox}
\end{table*}

\begin{table*}[t]
\centering
\caption{
Complete Waller-style local-consistency comparison on \textsc{COMPAS} for $k\in\{5,10,15\}$ and $\delta=0.5$. $\bar C$ denotes mean Waller consistency and $\rho$ is the Spearman correlation between the proposed instance-level URW inconsistency and Waller inconsistency $1-c_i^{(k)}$.
}
\label{tab:waller-compas-full}
\scriptsize
\setlength{\tabcolsep}{2.5pt}
\begin{adjustbox}{max width=\textwidth}
\begin{tabular}{llrr|rrrr|rrrr|rrrr}
\toprule
Method & $\lambda$ & AUC $\uparrow$ & URW IF $\downarrow$
& \multicolumn{4}{c|}{$k=5$}
& \multicolumn{4}{c|}{$k=10$}
& \multicolumn{4}{c}{$k=15$} \\
\cmidrule(lr){5-8}\cmidrule(lr){9-12}\cmidrule(lr){13-16}
& & & &
$\bar C$ $\uparrow$ & PCS $\uparrow$ & BCC $\uparrow$ & $\rho$ &
$\bar C$ $\uparrow$ & PCS $\uparrow$ & BCC $\uparrow$ & $\rho$ &
$\bar C$ $\uparrow$ & PCS $\uparrow$ & BCC $\uparrow$ & $\rho$ \\
\midrule
URW-FKL & $0$   & 0.6742 & 0.10022 & 0.8797 & 0.9037 & 0.8520 & -0.038 & 0.8474 & 0.9044 & 0.8222 & -0.059 & 0.8304 & 0.8870 & 0.7994 & -0.073 \\
URW-FKL & $0.1$ & 0.6740 & 0.09841 & 0.8735 & 0.8981 & 0.8444 & -0.040 & 0.8432 & 0.9009 & 0.8167 & -0.059 & 0.8250 & 0.8857 & 0.7933 & -0.085 \\
URW-FKL & $0.3$ & 0.6736 & 0.09073 & 0.8839 & 0.9127 & 0.8599 & -0.036 & 0.8539 & 0.9085 & 0.8292 & -0.035 & 0.8355 & 0.8891 & 0.8037 & -0.060 \\
URW-FKL & $0.5$ & 0.6742 & 0.08191 & 0.8822 & 0.9037 & 0.8546 & -0.020 & 0.8450 & 0.9016 & 0.8186 & -0.014 & 0.8284 & 0.8801 & 0.7932 & -0.023 \\
URW-FKL & $1$   & 0.6781 & 0.06568 & 0.8840 & 0.9071 & 0.8585 & -0.003 & 0.8558 & 0.9113 & 0.8326 & 0.038 & 0.8393 & 0.8919 & 0.8082 & -0.014 \\
URW-FKL & $3$   & 0.6734 & 0.03195 & 0.8977 & 0.9210 & 0.8758 & 0.097 & 0.8803 & 0.9265 & 0.8610 & 0.109 & 0.8681 & 0.9085 & 0.8404 & 0.070 \\
URW-FKL & $5$   & 0.6734 & 0.01794 & 0.9109 & 0.9335 & 0.8918 & 0.079 & 0.8915 & 0.9328 & 0.8739 & 0.095 & 0.8800 & 0.9175 & 0.8551 & 0.053 \\
URW-FKL & $10$  & 0.6610 & 0.00649 & 0.9390 & 0.9501 & 0.9243 & 0.113 & 0.9220 & 0.9487 & 0.9082 & 0.119 & 0.9053 & 0.9376 & 0.8880 & 0.085 \\
URW-FKL & $30$  & 0.6363 & 0.00162 & 0.9593 & 0.9647 & 0.9494 & 0.146 & 0.9529 & 0.9695 & 0.9464 & 0.172 & 0.9483 & 0.9619 & 0.9384 & 0.189 \\
URW-FKL & $50$  & 0.6273 & 0.00106 & 0.9714 & 0.9723 & 0.9652 & 0.156 & 0.9690 & 0.9751 & 0.9652 & 0.195 & 0.9679 & 0.9737 & 0.9639 & 0.215 \\
URW-FKL & $100$ & 0.6227 & 0.00057 & 0.9867 & 0.9868 & 0.9836 & 0.030 & 0.9855 & 0.9889 & 0.9835 & 0.072 & 0.9839 & 0.9854 & 0.9802 & 0.092 \\
URW-FKL & $300$ & 0.6198 & 0.00032 & 0.9884 & 0.9903 & 0.9866 & 0.064 & 0.9868 & 0.9889 & 0.9843 & 0.097 & 0.9857 & 0.9868 & 0.9821 & 0.117 \\
URW-FKL & $500$ & 0.6178 & 0.00023 & \textbf{0.9904} & \textbf{0.9931} & \textbf{0.9895} & -0.005 & \textbf{0.9888} & \textbf{0.9910} & \textbf{0.9867} & 0.037 & \textbf{0.9875} & \textbf{0.9896} & \textbf{0.9847} & 0.047 \\
\midrule
URW-RKL & $0$   & 0.6742 & 0.13791 & 0.8797 & 0.9037 & 0.8520 & -0.044 & 0.8474 & 0.9044 & 0.8222 & -0.063 & 0.8304 & 0.8870 & 0.7994 & -0.077 \\
URW-RKL & $0.1$ & 0.6775 & 0.13510 & 0.8816 & 0.9092 & 0.8565 & -0.030 & 0.8491 & 0.9085 & 0.8252 & -0.016 & 0.8306 & 0.8891 & 0.7990 & -0.036 \\
URW-RKL & $0.3$ & 0.6745 & 0.13294 & 0.8805 & 0.9037 & 0.8521 & 0.008 & 0.8484 & 0.9009 & 0.8213 & -0.013 & 0.8302 & 0.8760 & 0.7931 & -0.034 \\
URW-RKL & $0.5$ & 0.6782 & 0.11733 & 0.8875 & 0.9148 & 0.8643 & 0.002 & 0.8649 & 0.9182 & 0.8435 & 0.043 & 0.8478 & 0.9023 & 0.8192 & 0.010 \\
URW-RKL & $1$   & 0.6781 & 0.09520 & 0.8855 & 0.9078 & 0.8585 & 0.009 & 0.8616 & 0.9071 & 0.8367 & 0.041 & 0.8472 & 0.8940 & 0.8170 & 0.005 \\
URW-RKL & $3$   & 0.6801 & 0.04167 & 0.8973 & 0.9231 & 0.8760 & 0.055 & 0.8770 & 0.9258 & 0.8575 & 0.085 & 0.8646 & 0.8995 & 0.8335 & 0.037 \\
URW-RKL & $5$   & 0.6749 & 0.02292 & 0.9103 & 0.9342 & 0.8933 & 0.048 & 0.8915 & 0.9349 & 0.8764 & 0.081 & 0.8798 & 0.9203 & 0.8581 & 0.044 \\
URW-RKL & $10$  & 0.6665 & 0.00853 & 0.9278 & 0.9390 & 0.9114 & 0.183 & 0.9105 & 0.9390 & 0.8956 & 0.193 & 0.8965 & 0.9279 & 0.8775 & 0.161 \\
URW-RKL & $30$  & 0.6352 & 0.00176 & 0.9548 & 0.9647 & 0.9458 & 0.159 & 0.9490 & 0.9660 & 0.9423 & 0.163 & 0.9463 & 0.9598 & 0.9370 & 0.165 \\
URW-RKL & $50$  & 0.6280 & 0.00116 & 0.9724 & 0.9751 & 0.9658 & 0.176 & 0.9692 & 0.9778 & 0.9656 & 0.193 & 0.9678 & 0.9744 & 0.9631 & 0.201 \\
URW-RKL & $100$ & 0.6234 & 0.00066 & 0.9845 & 0.9841 & 0.9809 & 0.039 & 0.9832 & 0.9868 & 0.9809 & 0.091 & 0.9814 & 0.9827 & 0.9773 & 0.127 \\
URW-RKL & $300$ & 0.6179 & 0.00036 & 0.9888 & 0.9910 & 0.9872 & 0.041 & 0.9873 & 0.9896 & 0.9850 & 0.080 & 0.9865 & 0.9882 & 0.9834 & 0.102 \\
URW-RKL & $500$ & 0.6143 & 0.00022 & \textbf{0.9904} & \textbf{0.9931} & \textbf{0.9895} & 0.031 & \textbf{0.9888} & \textbf{0.9910} & \textbf{0.9867} & 0.084 & \textbf{0.9875} & \textbf{0.9896} & \textbf{0.9847} & 0.104 \\
\bottomrule
\end{tabular}
\end{adjustbox}
\end{table*}

Across all three neighborhood sizes, the Waller-style diagnostics show the same aggregate trend as the proposed residual-based measure. On Adult, stronger URW regularization generally increases mean Waller consistency, PCS, and BCC while AUC remains relatively stable over a substantial portion of the regularization path. On COMPAS, the Waller-style consistency measures improve more sharply as the URW residual decreases, although stronger regularization also produces a clearer loss in predictive utility.

The complete paths show that this aggregate agreement is not specific to a particular choice of $k$. At the same time, the instance-level rank correlations remain modest on Adult and are close to zero for many COMPAS configurations. Thus, the two diagnostics can agree on the overall direction of model-level improvement while assigning substantially different inconsistency rankings to individual instances.

\subsection{Aggregate Comparison}

Table~\ref{tab:waller-aggregate} reports the unregularized and strongest regularization endpoints for both URW-FKL and URW-RKL at $k=5$. The Waller-style measures improve in the same overall direction as the proposed residual-based diagnostic.

\begin{table}[t]
\centering
\caption{
Comparison with Waller-style local consistency on Adult and COMPAS for $k=5$ and $\delta=0.5$. Higher mean consistency, PCS, and BCC indicate greater hard-label agreement with the local $k$-nearest neighborhood. The table reports the unregularized model and the strongest evaluated regularization level.
}
\label{tab:waller-aggregate}
\small
\begin{tabular}{llrrrrr}
\toprule
Dataset & Method & $\lambda$ &
URW IF $\downarrow$ &
Mean consistency $\uparrow$ &
PCS $\uparrow$ &
BCC $\uparrow$ \\
\midrule
Adult
& URW-FKL & $0$   & 0.07477 & 0.89285 & 0.91388 & 0.87104 \\
& URW-FKL & $500$ & 0.01385 & 0.94731 & 0.95578 & 0.93822 \\
& URW-RKL & $0$   & 0.10952 & 0.89285 & 0.91388 & 0.87104 \\
& URW-RKL & $500$ & 0.01665 & 0.94837 & 0.95788 & 0.94010 \\
\midrule
COMPAS
& URW-FKL & $0$   & 0.10022 & 0.87970 & 0.90367 & 0.85198 \\
& URW-FKL & $500$ & 0.00023 & 0.99044 & 0.99307 & 0.98947 \\
& URW-RKL & $0$   & 0.13791 & 0.87970 & 0.90367 & 0.85198 \\
& URW-RKL & $500$ & 0.00022 & 0.99044 & 0.99307 & 0.98947 \\
\bottomrule
\end{tabular}
\end{table}

On Adult, mean Waller consistency increases from $0.89285$ to $0.94731$ along the URW-FKL path and to $0.94837$ along the URW-RKL path. PCS similarly increases from $0.91388$ to $0.95578$ and $0.95788$, respectively. On COMPAS, the improvement is larger: mean consistency increases from $0.87970$ to $0.99044$, while PCS increases from $0.90367$ to $0.99307$ for both regularization variants at the strongest evaluated operating point.

These results indicate that the reduction in the proposed residual is accompanied by improved local agreement under an independently defined hard-label consistency diagnostic. The same qualitative pattern is observed for $k=10$ and $k=15$, indicating that the comparison is not specific to the choice $k=5$.

\subsection{Instance-Level Disagreement}

Although the two diagnostics show similar aggregate trends, they are not
equivalent at the instance level. We compare the proposed instance-level
residual with Waller inconsistency, $1-c_i^{(k)}$, using Spearman rank
correlation. For the unregularized models at $k=5$, the correlations are
$0.259$ and $0.292$ for URW-FKL and URW-RKL on Adult, respectively. On
COMPAS, the corresponding correlations are close to zero
($-0.038$ and $-0.044$). The correlations also remain modest under stronger
regularization.

Table~\ref{tab:waller-disagreement} illustrates the two directions of
disagreement using representative cases from the unregularized URW-FKL
models. The URW percentile is computed relative to all test instances under
the corresponding model, with larger percentiles indicating larger proposed
local inconsistency.

\begin{table}[t]
\centering
\caption{
Representative instance-level disagreement between Waller-style hard-label
consistency and the proposed probability-level diagnostic for $k=5$.
The cases are selected to illustrate the two possible directions of
disagreement rather than to estimate their prevalence.
}
\label{tab:waller-disagreement}
\small
\begin{tabular}{llrrr}
\toprule
Dataset & Case &
$\hat p_i$ &
Waller consistency &
URW IF percentile (\%) \\
\midrule
Adult
& Waller-consistent / URW-high
& 0.9999 & 1.00 & 99.70 \\

Adult
& Waller-inconsistent / URW-low
& 0.0383 & 0.00 & 0.01 \\

COMPAS
& Waller-consistent / URW-high
& 0.9613 & 1.00 & 99.79 \\

COMPAS
& Waller-inconsistent / URW-low
& 0.2842 & 0.40 & 0.14 \\
\bottomrule
\end{tabular}
\end{table}

The first type of disagreement occurs when the focal instance and all of its
nearest neighbors receive the same hard class label. In this case,
$c_i^{(k)}=1$ by construction, even if their underlying predictive
probabilities differ substantially. The proposed probabilistic diagnostic can
therefore assign a relatively large local discrepancy while the Waller-style
hard-label diagnostic indicates complete consistency.

The reverse disagreement can arise near the classification boundary.
Predictions that lie on opposite sides of the decision threshold receive
different binary labels and therefore reduce Waller consistency, even when the
underlying predictive probabilities are comparatively close. In such cases,
the proposed probability-level residual may remain small.

Taken together, these results show that the two diagnostics provide
complementary information. At the model level, stronger URW regularization
also improves Waller-style local consistency, providing an external check on
the observed fairness trend. At the instance level, however, the diagnostics
need not identify the same individuals as locally inconsistent because they
operate at different levels of the predictive output: the Waller formulation
emphasizes agreement of hard decisions, whereas URW-FKL and URW-RKL retain
information about the predictive probabilities and their local aggregation.

\section{Proof of Proposition~\ref{prop:pairwise-controls-if}}
\label{app:proof-pairwise-controls-if}

\begin{proof}
Fix an instance $i$. Since $P$ is row-stochastic, we have $P_{ij}\ge 0$ and
$\sum_{j=1}^n P_{ij}=1$. Therefore,
\[
(P\mathbf f_\theta)_i=\sum_{j=1}^n P_{ij}f_\theta(\mathbf x_j)
\]
is a convex combination of predictions over the support of the $i$th row of
$P$. Hence,
\begin{align*}
\left|f_\theta(\mathbf{x}_i)-(P\mathbf{f}_\theta)_i\right|
&=
\left|
f_\theta(\mathbf{x}_i)
-
\sum_{j=1}^n P_{ij}f_\theta(\mathbf{x}_j)
\right| \\
&=
\left|
\sum_{j=1}^n P_{ij}
\bigl(f_\theta(\mathbf{x}_i)-f_\theta(\mathbf{x}_j)\bigr)
\right| \\
&\le
\sum_{j=1}^n P_{ij}
\left|
f_\theta(\mathbf{x}_i)-f_\theta(\mathbf{x}_j)
\right|,
\end{align*}
where the last inequality follows from the triangle inequality and the
non-negativity of $P_{ij}$.

By assumption, $P_{ij}=0$ whenever $(i,j)\notin\mathcal E_\tau$. Thus, all
nonzero terms in the above sum correspond to neighbors satisfying
$d_x(\mathbf{x}_i,\mathbf{x}_j)\le \tau$. Applying the local pairwise fairness
condition gives
\[
\left|
f_\theta(\mathbf{x}_i)-f_\theta(\mathbf{x}_j)
\right|
\le
c\,d_x(\mathbf{x}_i,\mathbf{x}_j)
\le
c\,\tau
\]
for every $j$ such that $P_{ij}>0$. Therefore,
\begin{align*}
\left|f_\theta(\mathbf{x}_i)-(P\mathbf{f}_\theta)_i\right|
&\le
\sum_{j=1}^n P_{ij} c\,\tau \\
&=
c\,\tau \sum_{j=1}^n P_{ij} \\
&=
c\,\tau .
\end{align*}
Since this holds for every instance $i$, we obtain
\[
\mathrm{IF}_i(\theta;P)\le c\,\tau,
\]
which completes the proof.
\end{proof}

\section{Derivation of Neighborhood-Averaged Predictions for Binary Classification}
\label{app:kl-aggregation}
This section derives the closed-form expressions for neighborhood-averaged predictions in binary classification, as described in \cref{sec:task-instantiations-binary}. Specifically, we show how Forward-KL averaging leads to a simple weighted mean in probability space, while Reverse-KL averaging corresponds to a weighted average in logit space.

\subsection{Forward-KL averaging}
\label{app:forward-kl}

Let $p_j := f_\theta(\mathbf{x}_j)$ denote the predicted probability for
instance $\mathbf{x}_j$ and consider the objective
\[
\phi(q)
=
\sum_{j=1}^n P_{ij}
\,\mathrm{KL}\!\left(
\mathrm{Bern}(p_j)
\;\|\;
\mathrm{Bern}(q)
\right).
\]

For Bernoulli distributions,
\[
\mathrm{KL}\!\left(
\mathrm{Bern}(p)\,\|\,\mathrm{Bern}(q)
\right)
=
p\log\frac{p}{q}
+
(1-p)\log\frac{1-p}{1-q}.
\]

Discarding terms independent of $q$, we minimize
\[
\tilde\phi(q)
=
-\sum_{j=1}^n P_{ij}
\left[
p_j\log q
+
(1-p_j)\log(1-q)
\right].
\]

Differentiating and setting to zero gives
\[
\frac{d\tilde\phi}{dq}
=
-\sum_{j=1}^n P_{ij}
\left(
\frac{p_j}{q}
-
\frac{1-p_j}{1-q}
\right)=0.
\]

Multiplying by $q(1-q)$ yields
\[
\sum_{j=1}^n P_{ij}
(p_j - q)=0,
\]

which implies
\[
q = \sum_{j=1}^n P_{ij} p_j
=
\sum_{j=1}^n P_{ij} f_\theta(\mathbf{x}_j).
\]
Therefore the forward-KL barycenter equals the weighted average, which coincides with the neighborhood averaging operator.

\subsection{Reverse-KL averaging}
\label{app:reverse-kl}

Let $p_j := f_\theta(\mathbf{x}_j)$ denote the predicted probability for
instance $\mathbf{x}_j$. The reverse-KL neighborhood aggregation solves
\[
\tilde f_i
=
\arg\min_{q\in(0,1)}
\sum_{j=1}^n
P_{ij}\,
\mathrm{KL}\!\left(
\mathrm{Bern}(q)\,\|\,\mathrm{Bern}(p_j)
\right).
\]

Using the Bernoulli KL divergence,
\[
\mathrm{KL}(\mathrm{Bern}(q)\,\|\,\mathrm{Bern}(p_j))
=
q\log\frac{q}{p_j}
+
(1-q)\log\frac{1-q}{1-p_j},
\]
the objective becomes
\[
\mathcal J(q)
=
q\log q + (1-q)\log(1-q)
-
qA_i
-
(1-q)B_i,
\]
where
\[
A_i=\sum_{j=1}^n P_{ij}\log p_j,
\qquad
B_i=\sum_{j=1}^n P_{ij}\log(1-p_j).
\]

Setting $\frac{d\mathcal J}{dq}=0$ yields
\[
\log\frac{q}{1-q}=A_i-B_i.
\]
Substituting the definitions of $A_i$ and $B_i$ gives
\[
\log\frac{q}{1-q}
=
\sum_{j=1}^n P_{ij}\log\frac{p_j}{1-p_j}.
\]

Therefore,
\[
\tilde f_i
=
\sigma\!\left(
\sum_{j=1}^n
P_{ij}\,
\log\frac{f_\theta(\mathbf{x}_j)}{1-f_\theta(\mathbf{x}_j)}
\right),
\]
which corresponds to averaging neighbor predictions in logit space.

\section{Sensitivity to Neural Network Depth}
\label{app:depth}
To assess the robustness of the regularized learning approach, we vary the neural network depth, considering architectures with 1, 2, or 3 hidden layers while keeping all other training settings constant. Figure~\ref{fig:depth_sensitivity} reports predictive performance (RMSE and $R^2$) alongside the mean individual fairness score.
\[
\frac{1}{n}\sum_{i=1}^n \mathrm{IF}_i(\theta;P),
\]
plotted as a function of the regularization strength $\lambda$.

Results show that predictive performance and fairness scores remain largely stable across depths, with only minor variations. This suggests that the local predictive-consistency formulation is not overly sensitive to model depth or expressive capacity. In practice, this stability allows the regularized learning approach to be applied to standard neural-network architectures without careful tuning of depth.

\begin{figure}[H]
\centering
\begin{subfigure}[t]{0.32\linewidth}
    \centering
    \includegraphics[width=\linewidth]{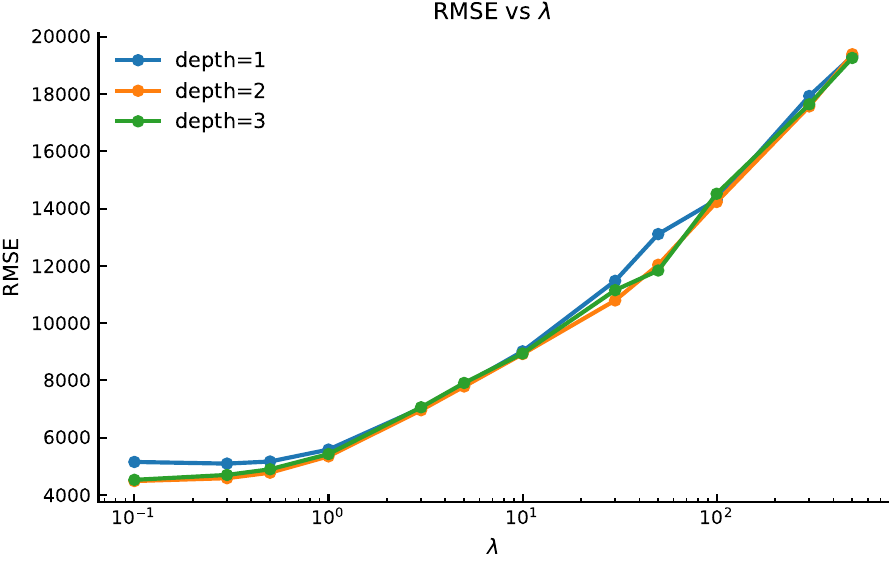}
    \caption{RMSE}
\end{subfigure}\hfill
\begin{subfigure}[t]{0.32\linewidth}
    \centering
    \includegraphics[width=\linewidth]{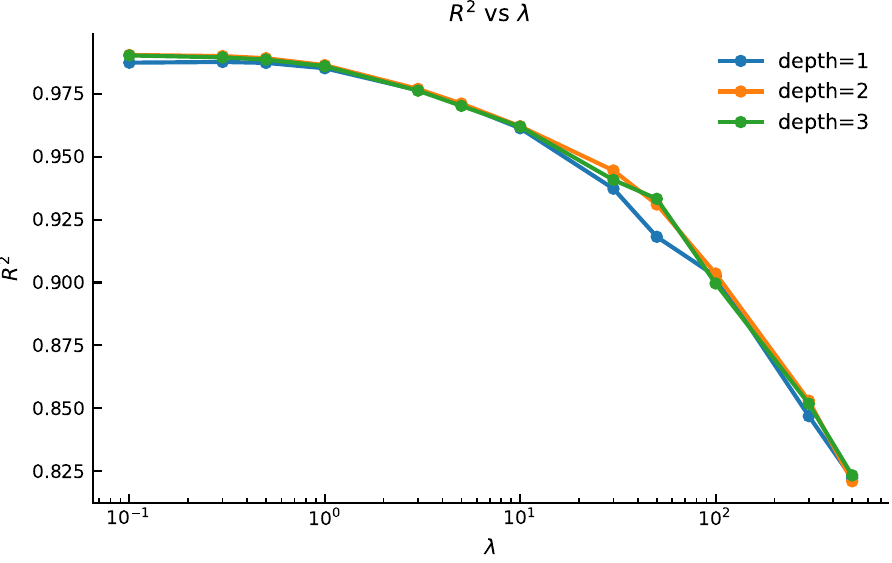}
    \caption{$R^2$}
\end{subfigure}\hfill
\begin{subfigure}[t]{0.32\linewidth}
    \centering
    \includegraphics[width=\linewidth]{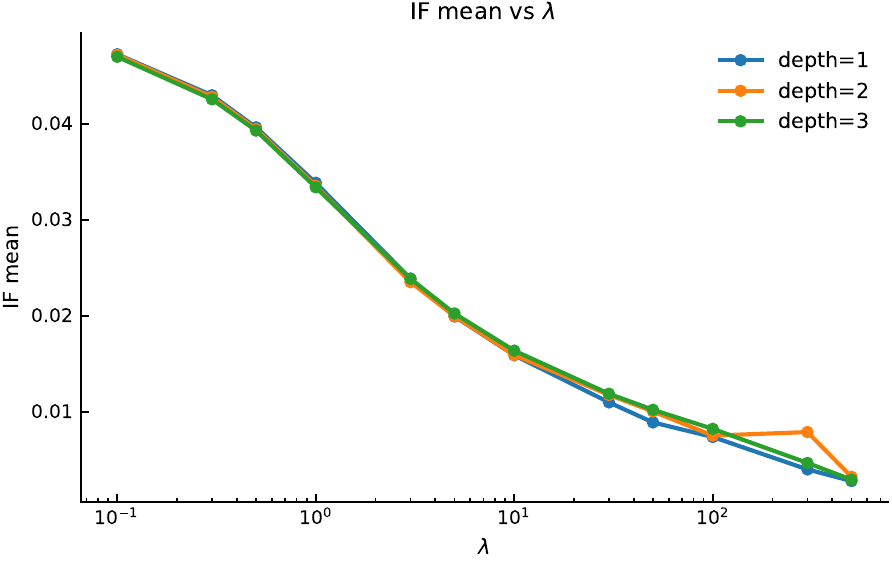}
    \caption{Mean IF score}
\end{subfigure}

\caption{
Sensitivity to neural network depth.
We evaluate model performance and fairness across depths
$1$, $2$, and $3$ on the \textsc{Employee Salary} dataset.
Predictive metrics (RMSE and $R^2$) remain largely stable, and the mean individual fairness score exhibits consistent trends across architectures.
These results indicate that the local predictive-consistency formulation is robust to network depth.
}
\label{fig:depth_sensitivity}
\end{figure}
% \fi
\end{document}